\documentclass{article} 
\usepackage{iclr2026_conference,times}


\usepackage{hyperref}
\usepackage{url}

\title{Dissecting Discrete Soft Actor-Critic:\\ Limitations and Principled Alternatives}


\author{
Reza Asad \thanks{Correspondence to: \texttt{rasad@sfu.ca}} \\
Simon Fraser University \\
\And
Reza Babanezhad \\
Samsung AI \\
\And
Sharan Vaswani \\
Simon Fraser University \\
}

\usepackage[utf8]{inputenc} 
\usepackage[T1]{fontenc}    
\usepackage{url}            
\usepackage{amsfonts}       
\usepackage{xcolor}         

\usepackage{lmodern}
\usepackage[english]{babel}
\usepackage{latexsym}
\usepackage{amsmath}
\usepackage{mathrsfs}
\usepackage{amssymb}
\usepackage{mathtools}
\usepackage{cases}
\usepackage[inline,shortlabels]{enumitem} 
\usepackage{bm}
\usepackage{datetime}
\usepackage{accents}
\usepackage{tikz}
\usepackage{listings}
\usepackage{mdframed}
\usepackage{pgfplots}
\usepackage{pgfplotstable}
\usepackage[linesnumbered,lined,boxed,commentsnumbered,ruled,longend]{algorithm2e}
\usepackage{algorithmic}

\usepackage{dsfont}
\usepackage{color}
\usepackage{colortbl}
\usepackage{pifont}
\usepackage[bf,font=small,singlelinecheck=off]{caption}
\usepackage{microtype} 
\usepackage{float}
\usepackage{xfrac} 
\usepackage{xspace}
\usepackage{booktabs}
\usepackage{blkarray}

\linespread{1.1}
\allowdisplaybreaks
\setlength{\marginparwidth}{10ex}

\usepackage{hyperref}
\hypersetup{
    unicode=false,          
    pdftoolbar=true,        
    pdfmenubar=true,        
    pdffitwindow=false,     
    pdfstartview={FitH},    
    pdftitle={XXXXX},    
    pdfauthor={XXX},     
    pdfsubject={Bandits, Reinforcement Learning},   
    pdfcreator={Creator},   
    pdfproducer={Producer}, 
    pdfkeywords={bandits} {reinforcement learning} {policy gradient}, 
    pdfnewwindow=true,      
    colorlinks=true,       
    linkcolor=blue,          
    citecolor=blue,        
    filecolor=magenta,      
    urlcolor=cyan           
}
\usepackage{amsthm}
\usepackage{times}
\usepackage{natbib}
\usepackage{nicefrac}
\usepackage{wrapfig}
\usepackage{pgfplots}
\usepackage[capitalize]{cleveref}
\usepackage[nottoc,numbib]{tocbibind}

\usepackage[normalem]{ulem}

\usepackage{thmtools, thm-restate}
\declaretheorem{theorem}
\declaretheorem{proposition}
\declaretheorem{corollary}

\newtheorem{lemma}{Lemma}
\usepackage{comment}

\usepackage{pifont}  

\newcommand{\xmark}{\textcolor{red}{\ding{55}}} 

\setlength{\intextsep}{1\baselineskip}
\setlength{\topsep}{0pt} 
\setlength{\parskip}{0pt}

\newcommand{\SPMAFKL}{\texttt{SPMA-FKL}}
\newcommand{\SPMARKL}{\texttt{SPMA-RKL}}
\newcommand{\NPGFKL}{\texttt{NPG-FKL}}
\newcommand{\NPGRKL}{\texttt{NPG-RKL}}
\newcommand{\SAC}{\texttt{SAC}}
\newcommand{\DSAC}{\texttt{DSAC}}

\newcommand{\DQN}{\texttt{DQN}}
\newcommand{\SOFTDQN}{\texttt{Soft-DQN}}
\newcommand{\MDQN}{\texttt{M-DQN}}
\newcommand{\IQN}{\texttt{IQN}}
\newcommand{\RAINBOW}{\texttt{Rainbow}}

\newcommand{\LMC}{\texttt{Adam LMCDQN}}
\newcommand{\NoisyNetDQN}{\texttt{NoisyNet DQN}}
\newcommand{\QRDQN}{\texttt{QRDQN}}
\newcommand{\CFiveOne}{\texttt{C51}}
\newcommand{\GreedyAC}{\texttt{GreedyAC}}

\newcommand{\NPG}{\texttt{NPG}}
\newcommand{\SPMA}{\texttt{SPMA}}
\newcommand{\MDPO}{\texttt{MDPO}}
\newcommand{\PPO}{\texttt{PPO}}

\def\pitt{{\pi_{t+1}}}
\def\pith{{\pi_{t+1/2}}}

\newcommand{\pit}{{\pi_{t}}}

\newcommand{\taut}{{\tau_{t}}}
\newcommand{\alphat}{{\alpha_{t}}}

\newcommand{\inner}[2]{\langle #1,\, #2 \rangle}
\newcommand{\normsq}[1]{\left\|#1 \right\|_{2}^{2}}
\newcommand{\norm}[1]{\left\|#1 \right\|}

\newcommand{\thtt}{\theta_{t+1}}

\def\floor#1{\lfloor #1 \rfloor}
\def\1{\bm{1}}

\newcommand{\etat}{{\eta_t}}










\DeclareMathAlphabet{\mathsfit}{\encodingdefault}{\sfdefault}{m}{sl}
\SetMathAlphabet{\mathsfit}{bold}{\encodingdefault}{\sfdefault}{bx}{n}

\def\cA{{\mathcal{A}}}

\def\cH{{\mathbb{H}}}

\def\cL{{\mathcal{L}}}

\def\cP{{\mathcal{P}}}

\def\cR{{\mathcal{R}}}

\def\cS{{\mathcal{S}}}

\def\cZ{{\mathcal{Z}}}






\newcommand{\E}{\mathbb{E}}

\newcommand{\R}{\mathbb{R}}


\DeclareMathOperator*{\argmax}{arg\,max}
\DeclareMathOperator*{\argmin}{arg\,min}

\newcommand{\bepsilon}{\boldsymbol{\epsilon}}
\def\FMAPG/{{FMA-PG}}
\def\DFMAPG/{{DFMA-PG}}
\def\SFMAPG/{{SFMA-PG}}
\def\cT{{\mathcal{T}}}

\newcommand{\norminf}[1]{\left\|#1\right\|_{\infty}}
\newcommand{\normone}[1]{\left\|#1 \right\|_1}

\def\cA{{\mathcal{A}}}

\def\cM{{\mathcal{M}}}

\def\cT{{\mathcal{T}}}
\def\cH{{\mathcal{H}}}

\def\cD{{\mathcal{D}}}

\def\ppi{{\pi}}

\def\ppith{{\pi_{t + \nicefrac{1}{2}}}}
\def\ppitt{\pi_{t+1}}

\def\qpi{{q^{\pi}}}

\pgfplotsset{compat=1.18}

\iclrfinalcopy
\begin{document}

\maketitle

\begin{abstract}
While Soft Actor-Critic ($\SAC$) is highly effective in continuous control, its discrete counterpart ($\DSAC$) performs poorly on challenging discrete-action domains such as Atari. Consequently, starting from $\DSAC$, we revisit the design of actor-critic methods in this setting. First, we determine that the coupling between the actor and critic entropy is the primary reason behind the poor performance of $\DSAC$. We demonstrate that by merely decoupling these components, $\DSAC$'s performance significantly improves. Motivated by this insight, we introduce a flexible off-policy actor-critic framework that subsumes $\DSAC$ as a special case and yields novel objectives. Our framework allows using an $m$-step Bellman operator for the critic update, and instantiates the actor objective by combining standard policy optimization methods with entropy regularization. Theoretically, we prove that the proposed methods can guarantee convergence to the optimal regularized value function in the tabular setting, generalizing the results in prior work. 
Empirically, we evaluate the proposed objectives on standard Atari games. Our ablations indicate that, unlike $\DSAC$, these objectives, including novel ones, perform robustly even without entropy regularization or explicit exploration mechanisms.
\end{abstract}

\section{Introduction}
\label{sec:introduction}

Soft Actor-Critic ($\SAC$)~\citep{haarnoja2018soft} is a widely used off-policy reinforcement learning algorithm for continuous control, combining entropy-regularized critic learning with policy optimization. It has achieved strong empirical performance across simulated control benchmarks and real-world robotic systems~\citep{haarnoja2018softapp}, and has inspired numerous extensions that further improve its effectiveness~\citep{zhu2024q,chen2021randomized,hiraoka2021dropout,bhatt2019crossq,kuznetsov2020controlling,ishfaq2025langevin}.
Motivated by this success, prior work has sought to adapt $\SAC$ to off-policy discrete-action settings~\citep{christodoulou2019soft}, which are central to domains such as Atari. However, despite being a natural extension, the resulting discrete variant of $\SAC$ ($\DSAC$) exhibits weak empirical performance on standard benchmarks.

To address this limitation, follow-up work has proposed alternative discrete-action extensions of $\SAC$~\citep{xu2021target,zhou2022revisiting}. However, these methods typically introduce multiple interacting components that complicate implementation and can limit practical performance. For example, $\texttt{SD-SAC}$~\citep{zhou2022revisiting} relies on ad-hoc actor regularization and double-averaged $Q$-clipping to achieve good performance on Atari. Other lines of work propose alternative off-policy actor-critic designs for the discrete-action setting, including off-policy variants of PPO~\citep{queeney2021generalized,meng2023off,gan2024transductive} and the $\GreedyAC$ algorithm~\citep{neumann2018greedy}. While these latter approaches offer complementary perspectives, they have not been evaluated on large-scale benchmarks such as Atari.

Motivated by the strong performance of continuous $\SAC$ and the limited success of its discrete counterpart, $\DSAC$, we seek to address the following central question:
\begin{center}
\textit{Can we isolate the bottleneck limiting $\DSAC$’s performance and identify design choices for building principled and efficient off-policy discrete-action actor–critic methods?}
\end{center}

To achieve our objective, we first investigate the poor Atari performance of $\DSAC$~\citep{christodoulou2019soft}. Prior work attributes this weakness to the use of a fixed target entropy~\citep{xu2021target}, unstable coupling between policy and critic updates, and $Q$-value underestimation bias~\citep{zhou2022revisiting}. Consequently, existing remedies adapt the entropy target~\citep{xu2021target} or add an entropy penalty along with clipped double-average $Q$-learning~\citep{zhou2022revisiting}. While these modifications can stabilize training, they introduce extra hyper-parameters and algorithmic complexity. For example, the Atari experiments in~\citet{zhou2022revisiting} replace the automatic entropy-tuning mechanism of the original $\DSAC$ with a fixed entropy coefficient, introducing an additional tuning burden. Other related work includes~\citet{neumanninvestigating} that modifies the $\DSAC$ objective and proposes surrogates with explicit KL regularization, but does not evaluate them on the standard Atari benchmark. Importantly, none of these methods admit theoretical guarantees, even in the tabular setting.

\vspace{0.5em}
\textbf{Contribution 1}: Our extensive ablation study shows that critic entropy substantially impacts the empirical performance of $\DSAC$. In particular, disabling the entropy regularization in the critic update (i.e., using the standard Bellman operator for policy evaluation) while keeping all other components (entropy-regularized soft actor update, automatic entropy tuning) fixed yields a stable variant of $\DSAC$. This variant does not require double $Q$-learning or additional hyperparameters, and significantly outperforms the default $\DSAC$ across Atari games (see~\cref{fig:dsac-vs-dqn-20-games-m-1}). Furthermore, in~\cref{sec:experiments} (Conclusion 1), we show that this method also outperforms prior $\DSAC$ variants~\citep{xu2021target, zhou2022revisiting} on Atari, including hard-exploration games.

\begin{figure}[!ht]
\centering
\begin{minipage}{0.54\textwidth}
    \includegraphics[width=\linewidth]{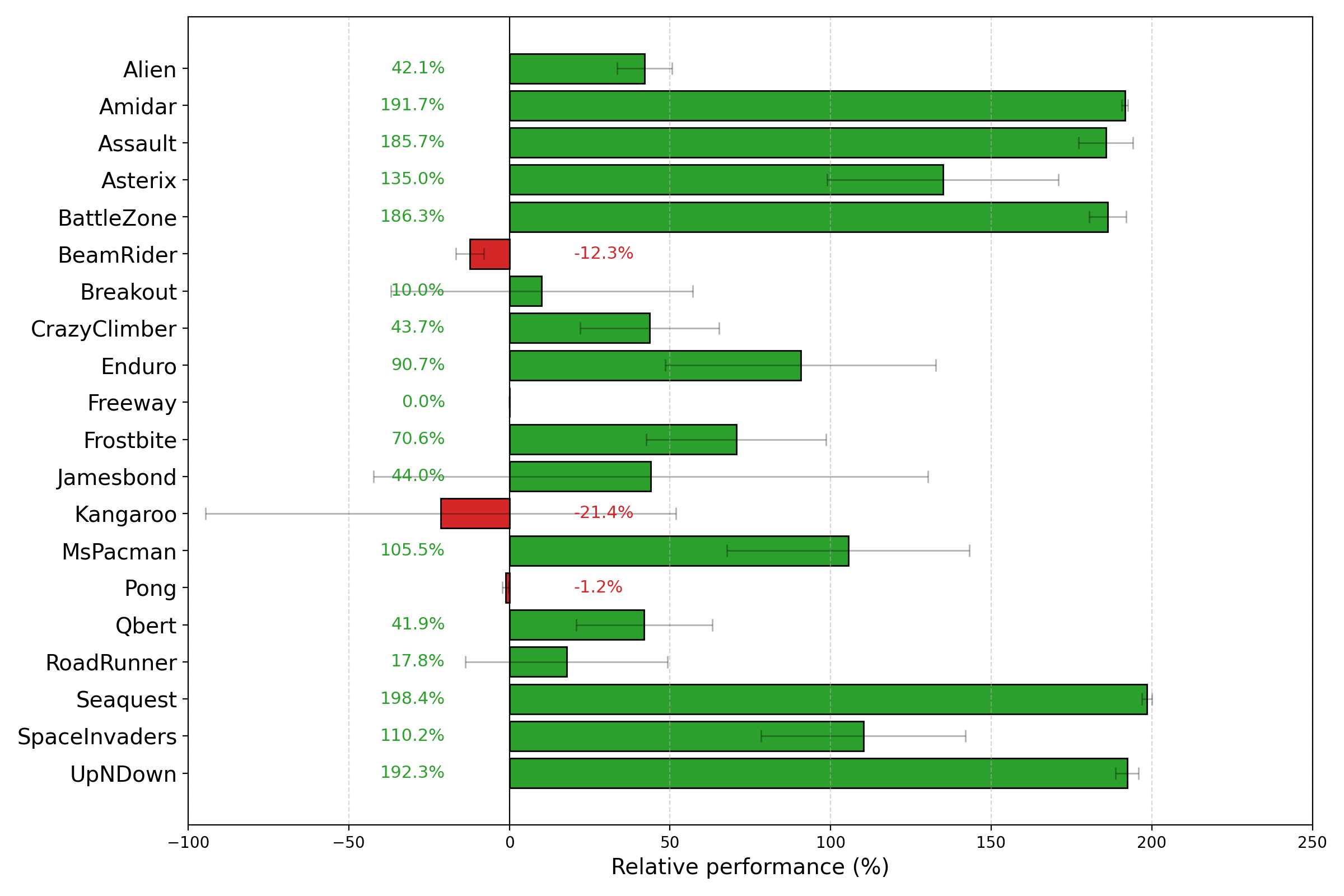}
\end{minipage}%
\hfill
\begin{minipage}{0.42\textwidth}
\caption{Relative performance of $\DSAC$ \textit{without critic entropy} (our contribution) compared to the default $\DSAC$ across 20 Atari games, averaged over 5 seeds with 95\% confidence intervals. Removing the critic entropy leads to substantial performance gains.}
\label{fig:dsac-vs-dqn-20-games-m-1}
\end{minipage}
\end{figure}

In order to systematically design and study algorithms related to $\DSAC$, it is necessary to develop a more general approach. Prior works~\citep{vieillard2020leverage,xiao2022convergence} have proposed such actor-critic frameworks in the discrete-action setting. While \citet{vieillard2020leverage} show that the discrete extension of $\SAC$ falls within their framework, our proposed $\DSAC$ variant can not be captured by it. In particular, in~\citet{vieillard2020leverage}, the actor and critic entropy are closely coupled, and the framework cannot support using entropy regularization asymmetrically (e.g. entropy regularization for actor, but not for the critic). On the other hand,~\citet{xiao2022convergence} propose a policy gradient framework and analyze the actor objective in $\DSAC$. However, they assume that $Q$ is estimated via a black-box procedure and do not instantiate the critic update. 

\vspace{0.5em}
\textbf{Contribution 2}: In~\cref{sec:algorithm}, we develop a general off-policy actor-critic framework for the discrete-action setting. Importantly, our framework decouples critic and actor entropy, enabling greater flexibility. The modular structure of our actor update gives rise to a family of objectives, including two novel ones, and subsumes $\DSAC$ as a special case. In particular, in the policy evaluation step for the critic, we use a look-ahead target formed by either the soft \textit{critic entropy}-regularized or the standard hard Bellman operator. The policy optimization step for the actor consists of two stages: (i) computing an intermediate policy and (ii) projecting it onto the class of realizable policies while simultaneously maximizing a proximal entropy regularization term (referred to as the \textit{actor entropy}). For (i), we consider two methods ---$\NPG$~\citep{kakade2001natural} due to its widespread use, and $\SPMA$~\citep{asad2024fast} because it can avoid normalization across actions and has demonstrated good empirical performance in the on-policy setting. For (ii), following~\citep{chan2022greedification}, we study both forward and reverse KL divergences for the projection step to understand the effect of this choice within the proposed framework.

\vspace{0.5em}
\textbf{Contribution 3}: Unlike prior work that extends $\SAC$ to discrete-action settings or studies its weak empirical performance~\citep{christodoulou2019soft, xu2021target, zhou2022revisiting}, our framework has a strong theoretical foundation. In~\cref{sec:theory-tabular}, we analyze the proposed methods in the tabular setting under asymmetric actor and critic entropy and establish the first convergence guarantees for such methods. Moreover, when the actor and critic entropy is coupled, we recover the convergence rates in~\citet{vieillard2020leverage}. 
More concretely, in~\cref{thm:meta-entropy}, we reduce the problem of analyzing the sub-optimality in the entropy-regularized value function to (i) bounding the policy evaluation error for the critic and (ii) bounding the regret for an online convex optimization problem related to the policy update step for the actor. 
Our modular framework can be used to analyze different combinations of the actor and critic. For example, in~\cref{corr:soft-npg-pe-no-entropy}, we leverage our framework to provide a theoretical guarantee for the $\NPG$ update with actor entropy and $m$ steps of the hard Bellman operator (i.e., no critic entropy) for policy evaluation. This corresponds to one of the variants that we later evaluate empirically in the function approximation setting.

\vspace{0.5em}
\textbf{Contribution 4}: In~\cref{sec:experiments}, we empirically evaluate the objectives instantiated using our actor-critic framework. Our results reveal three key findings. First, similar to $\DSAC$, the objectives derived from our framework benefit from the hard Bellman operator for policy evaluation, resulting in improved performance. Second, unlike $\DSAC$, the performance of these objectives, including two novel ones, remains robust even without entropy regularization or explicit exploration. 
Third, using the forward KL divergence when projecting the intermediate policy onto the class of realizable policies generally leads to higher expected reward. 

\section{Preliminaries}
\label{sec:preliminary}
We consider an infinite-horizon discounted Markov decision process (MDP), defined as $\cM = \langle \cS, \cA, \cP, r, \rho, \gamma \rangle$, where $\cS$ and $\cA$ denote the state and action spaces, $\mathcal{P}: \cS \times \cA \rightarrow \Delta_{\cS}$ is the transition probability function, $r: \cS \times \cA \rightarrow [0, 1]$ is the reward function, $\rho \in \Delta_{\cS}$ is the initial state distribution, and $\gamma \in [0, 1)$ is the discount factor. Throughout this paper, we assume that the state and action spaces are finite but potentially large.

For a fixed $s \in \cS$, the policy $\pi$ induces a distribution $\ppi(\cdot \vert s)$ over the actions. The action-value function $\qpi: \cS \times \cA \rightarrow \mathbb{R}$ of policy $\pi$ is defined as $\qpi(s, a) := \E[\sum_{t=0}^{\infty} \gamma^t r(s_t, a_t)| s_0= s , a_0 = a]$, with $s_t \sim p(\cdot \vert s_{t-1}, a_{t-1})$ and $a_t \sim \ppi(\cdot \vert s_t)$. Given an initial state $s \sim \rho$, the corresponding value function is defined as $v^{\pi}(s) := \E_{a \sim \ppi(. \mid s)}[\qpi(s, a)]$. The advantage function $\mathfrak{a}^\pi: \cS \times \cA \rightarrow \mathbb{R}$ induced by $\pi$ is represented by $\mathfrak{a}^\pi(s, a) := \qpi(s, a) - v^{\pi}(s)$. We define $J(\pi) := v^\pi(\rho) = \E_{s \sim \rho} [v^\pi(s)]$ as the expected discounted cumulative reward. 

\vspace{0.5em}

Given the Shannon entropy function $\cH(\pi) = - \sum_{a} \pi(a) \, \ln(\pi(a))$, if $\tau \geq 0$ is the entropy coefficient, then, the soft (entropy-regularized) counterparts of the above functions~\citep{liu2024elementary,vieillard2020leverage} are defined as: $v_\tau^\pi(s)  := v^\pi(s) + \tau \, \sum_{t = 0}^\infty \gamma^t \left[ \cH(\pi(\cdot|s_t)) \mid s_0 = s \right]$, $J_\tau(\pi) := \E_{s \sim\rho} [v_\tau^\pi(s)]$, $q_\tau^\pi(s,a) := \E_{s' \sim \cP(\cdot|s,a)} \left[ r(s,a) + \gamma v_\tau^\pi(s') \right]$ and $\mathfrak{a}_\tau^\pi(s,a) := q_\tau^\pi(s,a) - v_\tau^\pi(s) - \tau \, \ln(\pi(a|s))$. Note that the soft value and action-value functions, $v_\tau^\pi(s)$ and $q_\tau^\pi(s, a)$, are bounded and lie within the interval $\left[0, H_\tau \right]$, where $H_\tau := \frac{1 + \tau \ln(A)}{1 - \gamma}$~\citep{liu2024elementary}. 

\vspace{0.5ex}

Furthermore, for a fixed $s \in \cS$ and an arbitrary pair of policies $\pi_1$ and $\pi_2$, the \textit{soft} Bellman operator $T_\tau^\pi$ is defined such that: $(T_\tau^{\pi_{1}} v_\tau^{\pi_2})(s) = \E_{a \sim \pi_1(\cdot|s)} \left[q_\tau^{\pi_2}(s,a) - \tau \, \ln(\pi_1(a|s)) \right] = (T^{\pi_{1}} v_\tau^{\pi_2})(s) + \tau \cH(\pi_1(\cdot|s))$ and $(T_\tau^{\pi_{1}} q_\tau^{\pi_2})(s,a) = (T^{\pi_{1}} q_\tau^{\pi_2})(s,a) + \tau \E_{s' \sim \cP(\cdot|s,a)}\, \cH(\pi_1(\cdot|s'))$. If $\tau = 0$, we refer to the corresponding operator as the \textit{hard} Bellman operator. The objective is to $\max_{\pi \in \Pi} J_\tau(\pi)$, where $\Pi$ is the set of feasible policies. We denote the optimal entropy-regularized policy by $\pi_\tau^* := \argmax_{\pi} J_\tau(\pi)$ and its corresponding value function as $v_\tau^*$. 

\section{General Off-policy Actor-critic Framework}
\label{sec:algorithm}
Building on the empirical findings from~\cref{sec:introduction}, we present a general off-policy actor-critic framework that subsumes $\DSAC$ and yields new actor objectives, including two novel ones. In~\cref{subsec:policy-evaluation}, we focus on the policy evaluation step and instantiate the resulting critic objective. In~\cref{subsec:policy-optimization}, we focus on two alternative policy updates and instantiate the corresponding actor objectives. 


\vspace{-1ex}
\subsection{Policy Evaluation}
\label{subsec:policy-evaluation}
At iteration $t \in [K]$ of the actor-critic algorithm, we evaluate the current policy $\pi_t$ using the $m$-step entropy-regularized Bellman operator. The coefficient, $\zeta \geq 0$, controls the strength of entropy regularization (referred to as the \textit{critic entropy}). The corresponding estimate of the entropy-regularized $q$ function at iteration $t$ is denoted by $q_\zeta^t$ and computed recursively as: 
\begin{align}
q_\zeta^0 &= q_\zeta^{\pi_0} \quad \text{;} \quad \forall t \geq 1, \quad q_\zeta^t = \mathbb{P}_{[0, H_\tau]}[(T^{\pi_t}_\zeta)^{m} q_\zeta^{t-1}] \,,
\label{eq:pe-general}
\end{align} 
where $\mathbb{P}_{[0, H_\tau]}$ projects each entry onto the $[0, H_\tau]$ interval. As $m \to \infty$, the algorithm exactly evaluates the policy $\pi_t$ and $q_\zeta^t$ converges to the fixed point $q_\zeta^{\pi_t}$. In this special case, setting $\zeta = 0$ recovers the standard $q$ function. 

\vspace{0.5em}
\paragraph{Handling function approximation:}\cref{eq:pe-general} requires updating the state-action value function for each state and action. In settings where the state or action space is large, this is not computationally feasible and we aim to approximate this update. To this end, we focus on the off-policy setting and define the critic objective using the standard squared loss~\citep{haarnoja2018soft}. Specifically, we use the \textit{replay buffer} $\cD_t$ consisting of (state, action, next state, reward) pairs obtained by the policies in the previous iterations. We define $\phi$ as the parameters of a model (typically a neural network) parameterizing the critic and $q_\phi$ as the corresponding function. For defining the critic objective, the details of the model are irrelevant and are implicit in the $q_\phi$ notation. We use~\cref{eq:pe-general} to construct a \textit{one-step look-ahead target} (corresponding to $m = 1$\footnote{We focus on $m = 1$ for simplicity. The objective for $m > 1$ can be defined analogously.}) and define the critic objective $\cL_t(\phi)$ as:
\begin{gather}
\E_{\substack{(s,a,s',r(s,a)) \sim \cD_t \\ a' \sim \pi_t(\cdot|s')}} 
\normsq{ q_\phi(s,a) - \mathbb{P}_{[0, H_\tau]}
\bigl[
  r(s,a)
  + \gamma \,
  [q_\zeta^{t-1}(s',a')
  - \zeta \, \ln(\pi_t(a'|s'))]
\bigr] }
\label{eq:pe-fa}
\end{gather}
With a slight abuse of notation, by $(s,a,s',r(s,a)) \sim \cD_t$, we mean that the tuple is sampled from a discrete distribution over $\cD_t$. We set $q_\zeta^{t} = q_{\phi_t}$ where $\phi_t \approx \argmin \cL_t(\phi)$. Similar to~\citet{haarnoja2018soft}, in practice, we do not clip the look-ahead target and optimize over the $(s,a,s',r(s,a))$ tuples in a randomly-sampled batch from $\cD_t$.

\subsection{Policy Optimization}
\label{subsec:policy-optimization}
At iteration $t \in [K]$ in the actor-critic algorithm, the policy optimization step uses the $q$ function estimates from~\cref{subsec:policy-evaluation} to update the policy. For a state $s \in \cS$, we first compute an intermediate policy $\ppith$ using two representative methods --- (i) the natural policy gradient ($\NPG$) or policy mirror descent (PMD)~\citep{kakade2001natural,xiao2022convergence} update and (ii) the recently proposed $\SPMA$ update~\citep{asad2024fast}. We note that the framework is not limited to these choices. With a suitable step-size $\etat$ and appropriate normalization, 
\begin{align}
\pith(a|s) &\propto \pit(a|s) \, \exp( \eta_t \, q_\zeta^t(s,a) )
\quad \text{\tiny(\textbf{NPG})}
\label{eq:npg}  \\  
\pith(a|s) &\propto \pit(a|s) \, \left[1 + \eta_t \left(q_\zeta^t(s,a) - v_\zeta^t(s) \right) \right]
\hfill \text{\tiny(\textbf{SPMA})}
\label{eq:spma}
\end{align}
In the special case, when $\zeta = 0$, we note that the $\SPMA$ update can use a sufficiently small step-size to avoid an explicit normalization across the actions~\citep{asad2024fast}. On the other hand, the $\NPG$ update always requires an explicit normalization across actions to ensure that $\pith$ is a valid probability distribution. We now use a proximal update to incorporate entropy-regularization in the policy optimization step. Given $\pith$ and the entropy regularization parameter $\tau \geq 0$, the updated policy $\pi_{t+1}$ can be computed in two alternative ways that aim to find the ``closest'' policy $\pi$ to $\pith$ while encouraging the resulting policy to have sufficiently high entropy (referred to as the \textit{actor entropy}\footnote{The actor entropy coefficient $\tau$ is independent from the critic entropy coefficient $\zeta$. Intuitively, actor entropy regularizes the distribution over actions at the current state, whereas the critic entropy regularizes the distribution in the next-state.}). Specifically, if $\taut := \tau \, \etat \geq 0$ is the entropy regularization parameter at iteration $t$, we use either the forward KL (FKL) or reverse KL (RKL) divergence to measure the proximity between policies. For each state $s \in \cS$, the updated policy $\pitt(\cdot \mid s)$ is obtained as the solution to one of the following optimization problems:
\begin{subequations} 
\begin{align} \argmin_{\pi(\cdot|s) \in \Delta} \text{KL}(\pith(\cdot|s) \| \pi(\cdot | s)) - \taut \cH(\pi(\cdot | s)) \quad \text{\scriptsize(\textbf{FKL})} \label{eq:policy-update-fkl} \\ \argmin_{\pi(\cdot|s) \in \Delta} \text{KL}(\pi(\cdot | s) \| \pith(\cdot|s)) - \taut \cH(\pi(\cdot | s)) \quad \text{\scriptsize(\textbf{RKL})} \label{eq:policy-update-rkl} \end{align} 
\end{subequations}
Note that in the special case when $\tau = 0$, $\pitt = \pith$. Furthermore, the objective in~\cref{eq:policy-update-rkl} is convex in $\pi$ and the resulting update can be obtained in closed form where $\pitt(a|s) \propto [\pith(a|s)]^{\frac{1}{1 + \taut}}$ for all $(s,a)$. Combining~\cref{eq:policy-update-fkl,eq:policy-update-rkl} with~\cref{eq:npg,eq:spma} gives rise to four possible ways of updating the policy. We instantiate the corresponding actor objectives in the function approximation setting below. 

\vspace{0.5em}
\paragraph{Handling function approximation:}\cref{eq:policy-update-fkl,eq:policy-update-rkl} require updating the state-action value function for each state and action. In order to scale to large state-action spaces, we use function approximation in the policy space. Specifically, given the parameters $\theta$ of a model parameterizing the actor and $\pi(\theta)$ as the corresponding policy, we define $\Pi_\theta = \{\pi \vert \, \exists \, \theta \text{ s.t } \pi = \pi(\theta) \}$ as the set of realizable policies. Similar to~\cref{subsec:policy-evaluation}, the choice of the model is implicit in the $\pi(\theta)$ notation.

Following~\citet{haarnoja2018soft}, we modify~\cref{eq:policy-update-fkl,eq:policy-update-rkl} to optimize (i) only over the states in the replay buffer $\cD_t$ and (ii) over the restricted policy class $\Pi_\theta$, yielding $\pitt$ for the FKL and RKL variants:
$
\argmin_{\pi \in \Pi_\theta}
\!\!\sum_{s \sim \cD_t}\!\!
\bigg[
  \text{KL}(\pith(\cdot|s) \| \pi(\cdot | s))
  - \taut \cH(\pi(\cdot | s))
\bigg], \\ \text{and} \argmin_{\pi \in \Pi_\theta}
\!\!\sum_{s \sim \cD_t}\!\!
\bigg[
  \text{KL}(\pi(\cdot | s) \| \pith(\cdot|s))
  - \taut \cH(\pi(\cdot | s))
\bigg]
$.

Following ~\citet{vaswani2021general,lavington2023target, tomar2020mirror, xiong2024dual}, we convert the above projection problem into an unconstrained optimization over $\theta$, and form $\ell_t(\theta)$, the corresponding actor objective. We define $\pitt = \pi(\thtt)$ where $\thtt \approx \argmax_{\theta} \ell_t(\theta)$. For each variant, we append the postfix $(\tau, \zeta)$ to denote its dependence on the actor and critic entropy. The actor objective $\ell_t(\theta)$ can take one of the four forms, summarized in~\cref{tab:objectives}.

\begin{table*}[t]
    \centering
    \vspace{-1em} 
    
    \resizebox{0.98\textwidth}{!}{%
        
        \renewcommand{\arraystretch}{2.2} 
        \setlength{\tabcolsep}{4pt}       
        
        \begin{tabular}{l r}
        \toprule
        $ \displaystyle
        \E_{s \sim \cD_t}
        \Bigg[
          \E_{a \sim \pi_{\theta}(\cdot|s)}
          \Big[
            q_\zeta^t(s, a)
            - \tau \, \ln(\pi_{\theta}(a|s))
          \Big]
        - \frac{1}{\etat} \,
        \text{KL}\!\big(
          \pi_{\theta}(\cdot|s) \,\|\, \pi_t(\cdot|s)
        \big)\Bigg]
        $ 
        & 
        \scriptsize\textbf{$\NPGRKL(\tau,\zeta)$} \\

        $ \displaystyle
        \E_{s \sim \cD_t}
        \Bigg[
          \mathbb{E}_{a \sim \pi_{\theta}(\cdot|s)}
          \Big[
            \ln\Big(1 + \etat \big(q_\zeta^t(s,a) - v_\zeta^t(s)\big)\Big)
             - \taut \, \ln \pi_{\theta}(a|s)
          \Big]
          - \mathrm{KL}\!\big(
            \pi_{\theta}(\cdot|s) || \pi_t(\cdot|s)
          \big)
        \Bigg]
        $
        & 
        \scriptsize\textbf{$\SPMARKL(\tau, \zeta)$} \\

        $ \displaystyle
        \E_{s \sim \cD_t}
        \Bigg[
          \E_{a \sim \pi_t(\cdot | s)}
          \bigg[
            \frac{\exp(\etat \, q_\zeta^t(s, a))}{\sum_{a'} \pit(a') \, \exp(\etat \, q_\zeta^t(s,a'))}
            \,
            \ln\!\left(\frac{\pi_{\theta}(a|s)}{\pit(a|s)}\right)
          \bigg]
          + \taut \cH(\pi_{\theta}(\cdot|s))
        \Bigg]
        $
        & 
        \scriptsize\textbf{$\NPGFKL(\tau, \zeta)$} \\

        $ \displaystyle
        \E_{s \sim \cD_t}
        \Bigg[
          \E_{a \sim \pi_t(\cdot | s)}
          \bigg[
            \frac{1+\etat (q_\zeta^t(s,a)-v_\zeta^t(s))}
                 {\sum_{a'}\!\pit(a')(1+\etat (q_\zeta^t(s,a')-v_\zeta^t(s)))}
            \,
            \ln\!\left(\frac{\pi_{\theta}(a|s)}{\pit(a|s)}\right)
          \bigg]
          + \taut \cH(\pi_{\theta}(\cdot|s))
        \Bigg]
        $
        & 
        \scriptsize\textbf{$\SPMAFKL(\tau, \zeta)$} \\
        \bottomrule
        \end{tabular}%
    }
    \caption{Actor objectives in the proposed framework. $(\tau,\zeta)$ denotes actor/critic entropy.} 
    \label{tab:objectives}
\end{table*}

\vspace{0.5em}
\textbf{Comparing the objectives}: Note that the $\NPGRKL$ and $\SPMARKL$ objectives differ in the first term, which is linear in the $q$ function for $\NPGRKL$, while it is logarithmic in the advantage for $\SPMARKL$. Similarly, the FKL variants also differ in the first term. Crucially, in the special case of zero critic entropy and $\eta \le 1-\gamma$, \(\SPMAFKL(\tau,0)\) does not require an explicit normalization over the actions~\citep{asad2024fast}, making it easier to implement in practice. Finally, we note that for the RKL variants, the expectation is over the actions sampled from $\pi_\theta$. The objective can be optimized by calculating the full expectation, using importance sampling, or applying the reparameterization trick. On the other hand, the FKL variants involve an expectation over the actions sampled from $\pi_t$, simplifying the implementation. In~\cref{alg:actor-critic} in~\cref{app:pseudocode}, we present the complete actor-critic pseudo-code in the function approximation setting.   

\vspace{0.5em}
\textbf{Comparison to existing methods:} We note that $\NPGRKL(\tau, \tau)$ recovers the off-policy variant of $\MDPO$ studied in~\citet{tomar2020mirror}. In the limit that $\etat \to \infty$, $\NPGRKL(\tau, \tau)$ recovers the original $\SAC$ objective in~\citet{haarnoja2018soft}. This is intuitive since $\SAC$ can be viewed as soft policy iteration. Importantly, $\NPGRKL(\tau, 0)$ and $\etat \to \infty$ recovers the $\DSAC$ variant that demonstrated good empirical performance in~\cref{fig:dsac-vs-dqn-20-games-m-1}. In the special case of the tabular setting and $\zeta = \tau$ (i.e. the actor and critic entropy is coupled and fixed), the $\NPGRKL$ variant is the same as that proposed in~\citet{vieillard2020leverage}. On the other hand, the $\SPMARKL(\tau, \zeta)$ objective is novel, and has not been studied in previously. Moreover, although $\NPGFKL(\tau,\tau)$ coincides with the objective of~\citet{mei2019principled} in the tabular setting, it gives rise to a distinct and novel objective under function approximation. 

Finally we note that in the on-policy setting where states are sampled from $d^{\pi_t}$, the distribution induced by policy $\pi_t$ (instead of $\cD_t$), setting $\tau = \zeta = 0$ and $m = \infty$ for the critic (corresponding to exact policy evaluation) we can recover the framework in~\citet{vaswani2021general}, and its instantiations~\citep{tomar2020mirror,asad2024fast}. Next, we consider the tabular setting and prove theoretical guarantees for the RKL variants. 

\section{Theoretical Guarantees}
\label{sec:theory-tabular}
We consider the tabular setting and analyze the actor-critic algorithm when using decoupled non-zero critic and actor entropy. For the critic, we consider estimating the $q$ functions using~\cref{eq:pe-general}. For the actor, we consider using the RKL variant in~\cref{eq:policy-update-rkl} in conjunction with the $\NPG$ and $\SPMA$ updates in~\cref{eq:npg,eq:spma}, and denote the corresponding variants as \textit{soft $\NPG$} and \textit{soft $\SPMA$} respectively. 

In~\cref{thm:meta-entropy} below, we first reduce bounding the sub-optimality in the entropy-regularized value function to a per-state online convex optimization problem. It is important to note that this reduction is independent of the specific actor and critic updates.

\begin{restatable}[Generic Reduction with Actor Entropy]{theorem}{reduction}
If $\pi^*_\tau$ is the optimal entropy-regularized policy whose value function is $v^*_\tau$, for a $q_\zeta^{t}$ obtained via the policy evaluation scheme at iteration $t$, define $\epsilon_t := q_{\zeta}^t - q_\tau^{\pi_t}$. For any sequence of policies $\{\pi_0, \pi_1, \ldots, \pi_{K-1}\}$, if $\bar{\pi}_K$ is the corresponding uniform mixture policy, then,  
\begin{align*}
\norminf{v_\tau^{*} - v_\tau^{\bar \pi_K}} &\leq  \frac{\norminf{\text{Regret}(K)}}{K \, (1 - \gamma)} +  \frac{2 \, \sum_{t \in [K]} \norminf{\bepsilon_t}}{K \, (1 - \gamma)}   \,,\\
\text{where,} \, (\text{Regret}(K))(s)
:= \sum_{t = 0}^{K-1} \Big[
  &\langle \pi^*_\tau(\cdot|s) - \pi_t(\cdot|s),  q_{\zeta}^t (s,\cdot) \rangle + \tau \big( \cH(\pi^*_\tau(\cdot|s)) -  \cH(\pi_t(\cdot|s)) \big)
\Big],
\end{align*}
is the regret incurred on an online optimization problem for each state $s \in \cS$.
     
\label{thm:meta-entropy}
\end{restatable}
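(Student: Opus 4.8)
The plan is to derive the result from a soft (entropy-regularized) performance difference lemma and then convert the resulting per-iteration sub-optimality into the advertised regret and policy-evaluation terms. The key identity I would first establish is that for any two policies $\pi, \pi'$ and any state $s$,
\[
v_\tau^{\pi}(s) - v_\tau^{\pi'}(s) = \frac{1}{1-\gamma}\, \E_{s' \sim d_s^{\pi}}\Big[ \langle \pi(\cdot|s') - \pi'(\cdot|s'),\, q_\tau^{\pi'}(s',\cdot)\rangle + \tau\,\big(\cH(\pi(\cdot|s')) - \cH(\pi'(\cdot|s'))\big)\Big],
\]
where $d_s^{\pi}$ is the discounted state-occupancy measure started at $s$. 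This follows by unrolling the fixed-point relation $v_\tau^{\pi'} = T_\tau^{\pi'} v_\tau^{\pi'}$ against $\pi$ and using $(T_\tau^{\pi} v_\tau^{\pi'})(s') - v_\tau^{\pi'}(s') = \langle \pi(\cdot|s') - \pi'(\cdot|s'), q_\tau^{\pi'}(s',\cdot)\rangle + \tau(\cH(\pi(\cdot|s')) - \cH(\pi'(\cdot|s')))$, which is exactly the form of the soft Bellman operator recorded in the preliminaries.

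Next, I would instantiate this lemma with $\pi = \pi_\tau^*$ and $\pi' = \pi_t$, so that the occupancy measure is $d_s^{\pi_\tau^*}$ (crucially, the comparator's occupancy, which is fixed across $t$). Averaging over $t \in \{0,\dots,K-1\}$ yields a bound on $\frac{1}{K}\sum_t (v_\tau^*(s) - v_\tau^{\pi_t}(s))$ whose linear term still involves the \emph{true} action-values $q_\tau^{\pi_t}$. I then replace these by the estimates $q_\zeta^t$ by adding and subtracting $\bepsilon_t = q_\zeta^t - q_\tau^{\pi_t}$; Hölder's inequality together with $\normone{\pi_\tau^*(\cdot|s') - \pi_t(\cdot|s')} \le 2$ bounds the correction by $2\,\norminf{\bepsilon_t}$ per iteration, which is independent of $s'$ and hence survives the expectation. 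The remaining terms are precisely $(\text{Regret}(K))(s')$ averaged under $d_s^{\pi_\tau^*}$, and since this is an expectation under a probability measure it is at most $\norminf{\text{Regret}(K)}$. This gives, uniformly in $s$,
\[
\frac{1}{K}\sum_{t=0}^{K-1}\big(v_\tau^*(s) - v_\tau^{\pi_t}(s)\big) \le \frac{\norminf{\text{Regret}(K)}}{K(1-\gamma)} + \frac{2\sum_{t\in[K]}\norminf{\bepsilon_t}}{K(1-\gamma)}.
\]

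Finally, I would pass from the average of the iterate values to the mixture policy by noting that for the uniform mixture $\bar\pi_K$ (which draws an index $t$ uniformly and then commits to $\pi_t$) one has $v_\tau^{\bar\pi_K}(s) \ge \frac{1}{K}\sum_t v_\tau^{\pi_t}(s)$: the reward component is exactly linear in the mixing weights, while the discounted entropy component only increases under mixing by concavity of $\cH$. Combining $v_\tau^*(s) - v_\tau^{\bar\pi_K}(s) \le \frac{1}{K}\sum_t(v_\tau^*(s) - v_\tau^{\pi_t}(s))$ with the previous display and taking $\norminf{\cdot}$ over $s$ yields the claim. The main obstacle I anticipate is being careful in this last step: the soft value $v_\tau$ was defined only for stationary policies, so I would need to pin down the precise semantics of $\bar\pi_K$ and its entropy accounting to justify $v_\tau^{\bar\pi_K} \ge \frac1K\sum_t v_\tau^{\pi_t}$. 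A secondary subtlety is introducing the estimation error \emph{before} taking the expectation over $d_s^{\pi_\tau^*}$, so that the $\norminf{\bepsilon_t}$ bound is genuinely occupancy-free and the regret and error terms decouple cleanly.
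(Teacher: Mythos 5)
Your proposal is correct and is essentially the paper's own argument: the soft performance-difference identity you invoke, with the comparator's occupancy $d_s^{\pi_\tau^*}$, is exactly the paper's decomposition $v_\tau^{\pi^*_\tau} - v_\tau^{\pi_t} = (I-\gamma\cP_{\pi^*_\tau})^{-1}[\cT_\tau^{\pi^*_\tau}v_\tau^{\pi_t} - \cT_\tau^{\pi_t}v_\tau^{\pi_t}]$, and the subsequent steps (add/subtract $q_\zeta^t$, H\"older with $\normone{\pi^*_\tau - \pi_t}\le 2$, bound the occupancy average by the sup norm, and average over $t$ into the mixture value) match the paper's proof step for step. The only cosmetic difference is that the paper treats $v_\tau^{\bar\pi_K} = \frac{1}{K}\sum_t v_\tau^{\pi_t}$ as the definition of the mixture policy's value, whereas you argue the inequality via concavity; both suffice.
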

The above result shows that the sub-optimality of a mixture policy (one that randomly chooses a policy from $\{\pi_0, \pi_1, \ldots, \pi_{K-1} \}$) obtained by using a generic policy optimization method can be bounded in terms of the regret incurred by the method, and the sum of the \textit{policy evaluation errors} incurred in estimating $q_\tau^{\pi_t}$. Since $\epsilon_t$ depends on $\pi_t$, it depends on the specific policy optimization method, and we now bound it for both soft $\NPG$ and soft $\SPMA$. 
\begin{corollary}[Policy Evaluation Error]
Using the policy evaluation update in~\cref{eq:pe-general} and the soft $\NPG$ or soft $\SPMA$ policy update with $\etat = \frac{1}{c + \tau \, (t+1)}$ for a constant $c \geq 0$, if  $\delta(\tau, \zeta) := \frac{|\tau - \zeta| \, \ln(A)}{1 - \gamma}$, the error $\epsilon_t$ can be bounded for all $t \in [K]$ as: 
\begin{align*}
\epsilon_t := \norminf{\bepsilon_{t}}
= O\Bigg(
\frac{\gamma^m}{(1 - \gamma)^4}
\Big(\frac{1}{t} + \frac{1}{K}
\Big)
+ \frac{\delta(\tau, \zeta)}{1-\gamma}
\Bigg).
\end{align*}
\label{cor:npg-spma-pe-no-entropy}
\end{corollary}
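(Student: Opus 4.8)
The plan is to bound $\epsilon_t = \norminf{q_\zeta^t - q_\tau^{\pi_t}}$ by splitting it into an error from evaluating $\pi_t$ with the \emph{matched} critic regularization $\zeta$ and an error from the \emph{mismatch} between the critic entropy $\zeta$ and the actor entropy $\tau$:
\begin{align*}
\norminf{q_\zeta^t - q_\tau^{\pi_t}} \le \underbrace{\norminf{q_\zeta^t - q_\zeta^{\pi_t}}}_{=:\,A_t} + \underbrace{\norminf{q_\zeta^{\pi_t} - q_\tau^{\pi_t}}}_{=:\,B_t},
\end{align*}
where $q_\zeta^{\pi_t}$ is the fixed point of the soft Bellman operator $T_\zeta^{\pi_t}$. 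The mismatch term $B_t$ is the easy part: using the closed form $v_\tau^{\pi} - v_\zeta^{\pi} = (\tau - \zeta)\sum_{k \ge 0}\gamma^k \, \E[\cH(\pi(\cdot|s_k))]$ together with $\cH(\cdot) \le \ln(A)$ and the geometric sum bounds $B_t$ by $\gamma\,\delta(\tau,\zeta) \le \delta(\tau,\zeta)$, which is dominated by the final term $\delta(\tau,\zeta)/(1-\gamma)$ of the claim.

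For the matched-regularization error $A_t$, I would exploit that $(T_\zeta^{\pi_t})^{m}$ is a $\gamma^m$-contraction in $\norminf{\cdot}$ with fixed point $q_\zeta^{\pi_t}$, and that the projection $\mathbb{P}_{[0,H_\tau]}$ is non-expansive. Applying this to the recursion in~\cref{eq:pe-general} gives
\begin{align*}
A_t = \norminf{\mathbb{P}_{[0,H_\tau]}[(T_\zeta^{\pi_t})^{m} q_\zeta^{t-1}] - q_\zeta^{\pi_t}} \le \gamma^m \, \norminf{q_\zeta^{t-1} - q_\zeta^{\pi_t}} \le \gamma^m (A_{t-1} + \Delta_t),
\end{align*}
where $\Delta_t := \norminf{q_\zeta^{\pi_{t-1}} - q_\zeta^{\pi_t}}$ is the drift in the true critic-regularized action-value function between consecutive policies, obtained by triangle inequality through $q_\zeta^{\pi_{t-1}}$. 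Since $A_0 = 0$ by the warm start $q_\zeta^0 = q_\zeta^{\pi_0}$, unrolling yields $A_t \le \sum_{j=1}^{t}\gamma^{m(t-j+1)}\Delta_j$, so the geometric weights localize the bound to the most recent iterates and supply the $\gamma^m/(1-\gamma)$ prefactor.

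It remains to control the drift $\Delta_t$, which I would route through the policies. A simulation/performance-difference argument bounds $\Delta_t$ by $O\!\left(\tfrac{1}{(1-\gamma)^2}\max_s \normone{\pi_t(\cdot|s) - \pi_{t-1}(\cdot|s)}\right)$, where $H_\tau = O(1/(1-\gamma))$ supplies one factor of $1/(1-\gamma)$ and the discounted occupancy the other. I would then bound the consecutive policy change by passing through the optimal regularized policy, $\normone{\pi_t - \pi_{t-1}} \le \normone{\pi_t - \pi_\tau^*} + \normone{\pi_\tau^* - \pi_{t-1}}$, and invoke the convergence of the soft $\NPG$/$\SPMA$ iterates: entropy regularization renders the per-state objective strongly convex with modulus proportional to $\tau$, matching the step-size schedule $\etat = 1/(c + \tau(t+1))$, so a KL-contraction argument gives linear decay of $\text{KL}(\pi_\tau^* \| \pi_t)$ up to a floor set by the accumulated policy-evaluation errors. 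Pinsker's inequality converts the linear decay into the transient $\gamma^{t/2}$ term and the floor into the $\frac{1}{\sqrt{t}}$ and $\frac{1}{\sqrt{K}}$ terms, while the $\ln(K)$ factor arises from crudely bounding the harmonic-type weighted sum of past step sizes. Substituting this bound on $\Delta_j$ into the unrolled recursion for $A_t$, adding $B_t$, and collecting powers of $1/(1-\gamma)$ then produces the stated rate.

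The main obstacle is precisely this last step: the drift $\Delta_t$ depends on how fast the iterates converge, yet that convergence rate itself depends on the very policy-evaluation errors $\{\epsilon_j\}_{j \le t}$ we are bounding — a circular dependency. I would resolve it with a self-bounding (bootstrapping) argument: posit a uniform bound $\max_{j \le K}\epsilon_j \le E$, propagate it through the convergence and contraction steps to obtain an inequality of the form $E \le (\text{explicit decaying terms}) + \rho\, E$ with $\rho < 1$, and solve for $E$; the $\frac{1}{\sqrt{K}}$ term is exactly the self-consistent floor produced by this fixed point. The same argument handles $\SPMA$, whose $1 + \etat(q_\zeta^t - v_\zeta^t)$ update admits an analogous strongly-convex convergence analysis after the reverse-KL projection $\pi_{t+1} \propto [\pi_{t+1/2}]^{1/(1+\taut)}$.
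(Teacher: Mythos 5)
Your decomposition of the mismatch term and the $\gamma^m$-contraction recursion for $A_t$ are fine and essentially equivalent to what the paper does (the paper instead injects $\delta(\tau,\zeta)$ at every step of the recursion via a Bellman-operator comparison lemma and sums the geometric series, but both routes give the $\delta(\tau,\zeta)/(1-\gamma)$ term). The genuine gap is in how you bound the policy drift $\Delta_t$. You route $\normone{\pi_t(\cdot|s)-\pi_{t-1}(\cdot|s)}$ through the optimal policy $\pi^*_\tau$ and invoke last-iterate linear convergence of the soft $\NPG$/$\SPMA$ iterates, which creates the circular dependency you acknowledge and then propose to break by bootstrapping. This is both unnecessary and unlikely to close: the claimed linear decay of $\mathrm{KL}(\pi^*_\tau\|\pi_t)$ with inexact $q$-estimates is itself a nontrivial result that is nowhere established, and the self-consistency inequality $E \le (\cdot) + \rho E$ with $\rho<1$ is asserted rather than derived — the dependence of the convergence ``floor'' on $E$ would have to be made quantitative before you could conclude anything.

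The paper avoids all of this by bounding the consecutive policy change \emph{directly from the one-step update}, with no reference to convergence: for soft $\NPG$, the mirror-descent optimality condition plus Pinsker's and H\"older's inequalities give $\normone{\pith(\cdot|s)-\pit(\cdot|s)} \le 2\etat H_\tau$, and the reverse-KL proximal step gives $\normone{\pitt(\cdot|s)-\pith(\cdot|s)} \le \sqrt{2\tau\etat\ln(A)}$, so $TV_t = O(\sqrt{\etat}) = O(1/\sqrt{t})$ deterministically (an analogous argument handles $\SPMA$). This removes the circularity entirely. Relatedly, your attribution of the $\gamma^{t/2}$ and $\ln(K)$ factors is off: the $(\gamma^m)^{t/2}$ term is purely the tail of the geometrically weighted sum $\sum_{i=1}^t (\gamma^m)^{t-i}/\sqrt{i+1}$, and the $\ln(K)$ arises from the entropy-continuity bound $|\cH(\pi_i)-\cH(\pi_{i-1})| \le TV_i\ln(A/C) + O(\sqrt{C})$ evaluated at $C=1/K$ (which also produces the $1/\sqrt{K}$ term) — neither comes from a linear-convergence transient or a harmonic sum of step sizes. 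You should replace the convergence-based drift bound with the direct per-update stability bound; the rest of your skeleton then assembles into the stated rate.
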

Note that as $m$ (the number of steps of the Bellman operator) increases, the policy is evaluated more accurately, and $\epsilon_t$ decreases. As $m \to \infty$, $\epsilon_t \to O(\delta(\tau, \zeta))$. Moreover, as $t$ increases, $\epsilon_t$ decreases and as $t \to K \to \infty$, $\epsilon_t \to O(\delta(\tau, \zeta))$, which quantifies the mismatch between the actor and critic entropy and is equal to zero when $\zeta = \tau$. 

Next, we show that both soft $\NPG$ and soft $\SPMA$ can control the regret for the online optimization problem defined in~\cref{thm:meta-entropy}. 
\begin{corollary}[Regret Bounds with Actor Entropy]
Suppose $\pi_0(\cdot|s)$ is the uniform distribution over actions for  state $s$. For a sequence $\{q_{\zeta}^t\}_{t=0}^{K-1}$ with $\norminf{q_{\zeta}^t} \leq H_\tau$, the regret for soft $\NPG$ and soft $\SPMA$ as defined in~\cref{thm:meta-entropy}, with $\etat = \frac{1}{c + \tau \,(t+1)}$ for constant $c \geq 0$ can be bounded as:
\begin{align*}
\max_{s} \Bigg| \text{Regret}(K) \Bigg| 
=O\!\left(\frac{\ln(K)}{(1-\gamma)^2}\right).
\end{align*}
\label{cor:npg-spma-regret-guarantee}    
\end{corollary}
The actor entropy in~\cref{eq:policy-update-rkl} makes the online functions strongly-convex and hence both methods incur only a logarithmic regret~\citep{orabona2019modern}. Combining the results in~\cref{cor:npg-spma-pe-no-entropy,cor:npg-spma-regret-guarantee} with~\cref{thm:meta-entropy}, we obtain our main theorem for both soft $\NPG$ and soft $\SPMA$. 

\begin{theorem}[Sub-optimality of Soft $\NPG$/Soft $\SPMA$]
Let $\pi^*_\tau$ be the optimal entropy-regularized policy with value function $v^*_\tau$. Consider the soft $\NPG$ or soft $\SPMA$ updates with step size $\etat = \frac{1}{c + \tau \, (t+1)}$ for constant $c$ defined in Theorems~\ref{thm:soft-npg-pe-entropy} and~\ref{thm:soft-spma-pe-entropy} and $\pi_0(\cdot | s)$ as the uniform policy over actions for all $s \in \cS$. Using policy evaluation step in~\cref{eq:pe-general}, with $\delta(\tau, \zeta) := \frac{|\tau - \zeta| \, \ln(A)}{1-\gamma}$, the resulting uniform mixture policy $\bar{\pi}_K$ satisfies the following sub-optimality bound:   
\begin{align*}
\norminf{ v_\tau^{\bar{\pi}_{K}} - v_\tau^*}
\!= \! \tilde{O}\Bigg(\!
\frac{\ln(K)}{K\!(1\!-\!\gamma)^3} + \frac{\gamma^m}{K\!(1\!-\!\gamma)^5} + \frac{\delta(\tau,\!\zeta)}{(1\!-\!\gamma)^2}
\!\Bigg).
\end{align*}
\label{thm:soft-npg-pe-entropy-main}
\end{theorem}

Hence, the sub-optimality can be bounded as $\tilde{O}\left(\frac{1}{K} + \frac{\gamma^m}{K} + \delta(\tau, \zeta) \right)$ up to logarithmic factors. Note that when the actor and critic entropy is coupled ($\zeta = \tau$), both soft $\NPG$ and soft $\SPMA$ have an $O(1/K)$ convergence for the resulting mixture policy. For the practical variant that uses $m = 1$ and $\zeta = 0$, the above result can achieve an $O(1/K)$ convergence to an $O(\tau)$ neighbourhood of $v^*_\tau$. All proofs are deferred to~\cref{app:theoretical-results}, where we also explore the case without actor entropy i.e. $\tau = 0$ and prove an $O(1/\sqrt{K})$ rate.  

\textbf{Comparison to existing results:}~\citet{vieillard2020leverage} consider the special case of soft $\NPG$ with coupled actor and critic entropy ($\zeta = \tau$) and $m = 1$, and also establish a convergence rate of $O(1/K)$. On the other hand,~\citet[Theorem 4.4]{xiao2022convergence} analyze soft $\NPG$ with coupled actor and critic entropy ($\zeta = \tau$) and $m = \infty$, and prove an $O(1/K)$ convergence rate. In comparison, our framework can match these convergence rates while being more flexible --- we can support $m \in [1, \infty)$ (left as an open question in~\citet{vieillard2020leverage}), allow for decoupling of actor and critic entropy and support policy optimization methods beyond $\NPG$ (e.g., $\SPMA$) that have sublinear regret and Lipschitz policy updates (see the proof of~\cref{thm:error-analysis-general} in~\cref{app:theoretical-results}). In the next section, we empirically study the derived objectives (soft $\NPG$ and soft $\SPMA$ with decoupled actor and critic entropy, $m = 1$) in the function approximation setting to understand the effect of the proposed design choices.

\section{Empirical Evaluation}
\label{sec:experiments}
We evaluate $\DSAC$ and our proposed framework on various Atari 2600 games~\citep{bellemare2013arcade}. We first present an ablation study of $\DSAC$, and subsequently investigate the impact of actor and critic entropy and the direction of the KL divergence on the empirical performance of $\DSAC$ and the derived objectives. To summarize performance across the 20 Atari games used in the original $\DSAC$ and subsequent analyses~\citep{zhou2022revisiting}, we follow~\citet{agarwal2021deep} and report the IQM of final human-normalized scores over all game-seed pairs, with 95\% stratified bootstrap confidence intervals. 
We further evaluate our hypothesis about $\DSAC$ on 20 diverse \texttt{stable-retro} games~\citep{stable-retro}, which have substantially larger action spaces than Atari in~\cref{app:dsac-ablation-stable-retro}. For implementation, we follow~\citet{tomar2020mirror, asad2024fast} and use \texttt{stable-baselines3}~\citep{stable-baselines3} with five random seeds, reporting per-game average expected returns with 95\% confidence intervals across seeds. Experimental details and hyper-parameters for all algorithms are provided in~\cref{app:stable-baselines-details}.


\paragraph{Conclusion 1: Using $\zeta = 0$ Improves DSAC Performance:} To isolate the bottleneck contributing to $\DSAC$’s poor empirical performance, we fix $\DQN$ (a standard baseline for discrete-action environments) as a reference and conduct a targeted ablation of $\DSAC$ using a single $Q$-network. Specifically, we examine the role of critic entropy regularization by comparing $\zeta = 0$ and $\zeta = \tau$, while retaining actor entropy. To set the actor entropy coefficient $\tau$, we use either a fixed grid-searched value or use the adaptive scheme in~\citet{christodoulou2019soft}. Our results in~\cref{fig:dsac-ablation-m-1} indicate that both $\zeta = 0$ and $\zeta = \tau$ can result in good performance with a \textit{fixed, well-tuned per environment entropy coefficient} $\tau$ (red and purple vs. blue). Under this configuration, we also observe that on the discrete classic control tasks Acrobot, MountainCar, and Pendulum reported in~\citet{neumann2018greedy}, $\DSAC$ performs on par with $\GreedyAC$, which employs an alternative policy update (see Figures~\ref{fig:dsac-greedyac-32} and~\ref{fig:dsac-greedyac-256} in~\cref{app:greedy-ac-comparison}). These observations are consistent with 
entropy-regularized value-based methods~\citep{vieillard2020munchausen}. In particular, $\SOFTDQN$ and $\MDQN$\footnote{In contrast to $\DSAC$, both $\SOFTDQN$ and $\MDQN$ also require tuning an $\epsilon$-greedy parameter and an additional parameter related to entropy regularization~\citep{vieillard2020munchausen}.} use a fixed, tuned value of $\tau$.

\begin{figure*}[!ht]
\centering
\begin{minipage}{0.55\textwidth}
    \includegraphics[width=\linewidth]{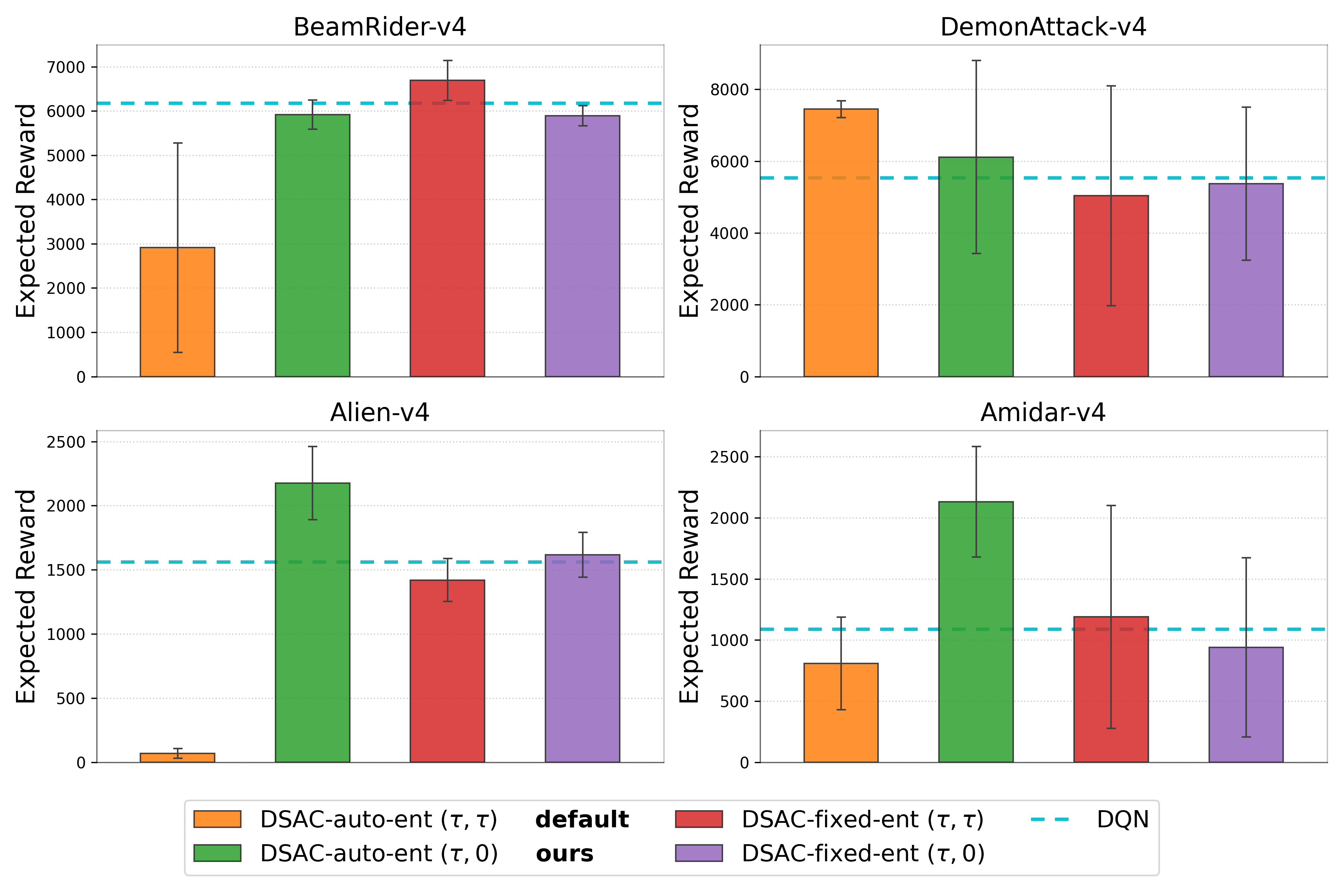}
\end{minipage}%
\hfill
\begin{minipage}{0.4\textwidth}
\caption{\textbf{$\DSAC$ ablation: sensitivity to critic entropy.} Expected reward after $10$M steps.
Training $\DSAC$ without critic entropy ($\zeta=0$) and using the default adaptive entropy coefficient loss (\texttt{auto-ent}) from~\citet{christodoulou2019soft} to set the actor entropy coefficient $\tau$ achieves consistently strong performance relative to $\DQN$ (green vs. blue). In contrast, the default $\DSAC$ ($\zeta=\tau$ with adaptive $\tau$, in orange) performs poorly and improves only with a carefully tuned fixed $\tau$ (red).}    \label{fig:dsac-ablation-m-1}
\end{minipage}
\end{figure*}


On the other hand, the adaptive scheme of~\citet{christodoulou2019soft} sets $\taut \approx \argmin \mathcal{E}_t(\alpha) = \E_{s \sim \cD_t} \E_{a \sim \pi_t(.|s)} [-\alpha \, \ln(\pi_t(a|s)) - \alpha \bar{\cH}]$, thereby encouraging the policy entropy to match the fixed \textit{target entropy} $\bar{\cH}$. When using this adaptive scheme, $\zeta = 0$ yields performance competitive with $\DQN$ (green vs. blue), same as the 20-game results in~\cref{fig:dsac-vs-dqn-20-games-m-1}, while setting $\zeta = \tau$ (the default setting in $\DSAC$) results in much worse performance (orange vs. blue). Hence, to retain strong empirical performance while avoiding manual tuning, we recommend using the adaptive scheme for actor entropy coefficient $\tau$ and setting the critic entropy coefficient $\zeta = 0$. 

For the $\DSAC(\tau,0)$ variant with the adaptive scheme of~\citet{christodoulou2019soft}, we test two prior hypotheses for $\DSAC$'s poor performance. First, we revisit~\citet{zhou2022revisiting}, who attribute $\DSAC$'s instability partly to coupled actor-critic learning and partly to underestimation from taking the minimum of two $Q$ functions. Their \texttt{SD-SAC} uses a tuned fixed entropy coefficient, an entropy-penalty regularizer limiting policy entropy changes across iterates, and clipped double-average $Q$-learning. However,~\cref{app:double-q-results,tab:normalized_score_double_q_sd_sac} shows that taking the minimum of two $Q$ functions does not hurt $\DSAC(\tau,0)$: both single-$Q$ and double-$Q$ variants significantly outperform \texttt{SD-SAC}.

Second, we revisit the claim of~\citet{xu2021target} that a fixed target entropy $\bar{\cH}$ harms $\DSAC$ with adaptive $\tau$, and show that our configuration substantially outperforms their algorithm (\cref{app:adaptive-target-ent-results,tab:normalized_score_adaptive_target_ent}). We further evaluate our conclusion on hard-exploration Atari games, where $\DSAC(\tau,0)$ outperforms baselines including $\texttt{SD-SAC}$ and $\texttt{TES-SAC}$ (\cref{app-hard-exploration,tab:normalized_score_hard_exploration_summary}), and on \texttt{stable-retro} games with substantially larger action spaces. 
Our results demonstrate the benefit of disabling critic entropy while using an adaptive actor entropy persists beyond Atari (\cref{app:dsac-ablation-stable-retro}).


\paragraph{Conclusion 2: All Objectives from our Proposed Framework Benefit from Setting $\zeta = 0$.}
From \cref{fig:all-objectives-with-without-critic-ent-20-games-m-1} in \cref{app:soft-vs-hard-bellman-results}, we observe that when the actor entropy is retained and an adaptive entropy coefficient $\tau$ (as in $\DSAC$) is used, all derived objectives with $\zeta = 0$ achieve improved performance relative to their counterparts that use $\zeta = \tau$. 
\textit{Based on our observations, for the proposed objectives, we recommend using $\zeta = 0$ and adaptive $\tau$ for actor entropy.} 

\paragraph{Conclusion 3: Entropy Regularization is Only Sometimes Important:} 
We study the effect of disabling entropy regularization for both actor and critic (i.e., $\tau=0$ and $\zeta=0$) when the actor objective $\ell_t(\theta)$ is optimized with $n$ gradient steps. For $n=1$, all objectives, including $\DSAC$, suffer from a sharp drop in policy entropy and degraded performance (see~\cref{fig:all-objectives-no-ent-4-games-ent-metric-m-1-10,fig:all-objectives-no-ent-20-games-m-1}), indicating insufficient exploration in the absence of entropy terms. However, unlike $\DSAC$, increasing $n$ alleviates this effect for objectives derived from our proposed framework: setting $n=10$ preserves higher entropy and yields more stable learning, resulting in improved performance (see~\cref{tab:iqm_summary_no_ent}). Per-game final expected rewards are shown in~\cref{fig:spma-vs-npg-vs-baselines-no-ent-m-10-12-games} of~\cref{app:ent-reg-kl-ablation}. 

The gap between $\DSAC$ and other objectives can be explained by the lack of regularization in $\DSAC$. In particular, recall that $\DSAC$ is a limiting case of $\NPGRKL$ with $\etat \to \infty$, and hence does not have the reverse KL regularization term (see the $\NPGRKL$ objective in~\cref{tab:objectives}). Consequently, for $n>1$, $\NPGRKL(0,0)$ benefits from additional optimization steps under KL regularization, whereas $\DSAC(0,0)$ lacks both the KL and entropy terms.

\begin{table}[t]
\centering
\setlength{\tabcolsep}{4pt}

\begin{minipage}[t]{0.57\columnwidth}
\vspace{0pt}
\centering
\resizebox{0.9\linewidth}{!}{%
\begin{tabular}{lccc}
\toprule
Method & IQM & CI Low & CI High \\
\midrule
NPG-FKL (0,0), $n=10$  & \textbf{1.123} & 1.061 & 1.182 \\
NPG-FKL (0,0), $n=1$   & -0.003 & -0.007 & 0.001 \\
\cmidrule(lr){1-4}
NPG-RKL (0,0), $n=10$  & 1.085 & 0.990 & 1.172 \\
NPG-RKL (0,0), $n=1$   & -0.006 & -0.010 & -0.002 \\
\cmidrule(lr){1-4}
SPMA-FKL (0,0), $n=10$ & 0.907 & 0.841 & 0.983 \\
SPMA-FKL (0,0), $n=1$  & -0.001 & -0.005 & 0.003 \\
\cmidrule(lr){1-4}
SPMA-RKL (0,0), $n=10$ & 0.630 & 0.554 & 0.709 \\
SPMA-RKL (0,0), $n=1$  & -0.007 & -0.011 & -0.004 \\
\cmidrule(lr){1-4}
DSAC (0,0), $n=10$     & -0.010 & -0.013 & -0.007 \\
DSAC (0,0), $n=1$      & -0.006 & -0.009 & -0.002 \\
\bottomrule
\end{tabular}%
}
\end{minipage}\hfill
\begin{minipage}[t]{0.4\columnwidth}
\vspace{0pt}
\captionsetup{type=table}
\caption{IQM summary of final human-normalized scores with 95\% stratified bootstrap confidence intervals in the zero-entropy setting $(\tau=0 \,\,\text{and}\,\, \zeta=0)$: unlike $\DSAC$, objectives from our framework recover strong performance in the absence of entropy regularization when the number of actor gradient updates is increased. Our novel $\NPGFKL$ achieves the strongest performance}
\label{tab:iqm_summary_no_ent}
\end{minipage}
\end{table}

\paragraph{Conclusion 4: Objectives from Our Proposed Framework are Comparable to Baselines:}
To evaluate the empirical effectiveness of the derived objectives, we compare against established baselines $\PPO$~\citep{schulman2017proximal}, $\DQN$~\citep{mnih2013playing}, and the recent value-based method $\LMC$~\citep{ishfaq2023provable}.
We select $\LMC$ as a baseline for three reasons.
First, it matches or outperforms strong value-based baselines, including $\NoisyNetDQN$~\citep{fortunato2017noisy}, $\CFiveOne$~\citep{bellemare2017distributional}, $\QRDQN$~\citep{dabney2017distributional}, and $\IQN$~\citep{dabney2018implicit}, across a broad set of Atari games.
Second, $\LMC$ avoids $\epsilon$-greedy exploration, which is known to be brittle to tune~\citep{hessel2018rainbow}. Third, the method is implemented as a modification of Adam~\citep{kingma2014adam}, making it straightforward to integrate into the same \texttt{stable-baselines3} codebase used for all methods in our framework. This choice allows for a controlled comparison that isolates algorithmic differences rather than implementation artifacts.

The IQM results across all games in~\cref{tab:normalized_score_summary} show that our novel $\NPGFKL(\tau, 0)$ achieves the strongest overall performance. Per-game final expected rewards are provided in~\cref{app:20-games-final-performance}. Full training curves are shown in Figures~\ref{fig:ppo-long}--\ref{fig:lmc-long} in Appendix~\ref{app:full-horizon-curves}.
We emphasize that our goal is to evaluate the objectives derived from our principled framework, rather than to construct a fully compositional agent such as $\RAINBOW$~\citep{hessel2018rainbow}.
Standard enhancements, including prioritized replay, multi-step returns, distributional critics, dueling architectures, NoisyNets, and double $Q$-learning might further improve the performance. However, such modifications are orthogonal to our contributions, and we leave them to future work.



\begin{table}[t]
\centering
\setlength{\tabcolsep}{4pt}

\begin{minipage}[t]{0.57\columnwidth}
\vspace{0pt}
\centering
\resizebox{0.9\linewidth}{!}{%
\begin{tabular}{lccc}
\toprule
Method & IQM & CI Low & CI High \\
\midrule
NPG-FKL $(\tau,0)$  & \textbf{1.2193} & 1.1603 & 1.2827 \\
NPG-RKL $(\tau,0)$  & 1.0092 & 0.8784 & 1.1430 \\
SPMA-FKL $(\tau,0)$ & 1.0088 & 0.8970 & 1.1301 \\
SPMA-RKL $(\tau,0)$ & 1.0529 & 0.9659 & 1.1561 \\
DSAC $(\tau,0)$     & 0.9226 & 0.8228 & 1.0288 \\
\cmidrule(lr){1-4}
Adam LMCDQN (Baseline) & 1.0363 & 0.9423 & 1.1265 \\
DQN (Baseline)         & 0.7827 & 0.7563 & 0.8091 \\
PPO (Baseline)         & 0.6835 & 0.6258 & 0.7311 \\
\bottomrule
\end{tabular}%
}
\end{minipage}\hfill
\begin{minipage}[t]{0.4\columnwidth}
\vspace{0pt}
\captionsetup{type=table}
\caption{IQM summary of final human-normalized scores with 95\% stratified bootstrap confidence intervals: comparing our objectives with actor entropy enabled ($\tau \ne 0$), critic entropy disabled ($\zeta=0$), and $n=1$ actor update against baselines. Our novel $\NPGFKL(\tau,0)$ performs best.}
\label{tab:normalized_score_summary}
\end{minipage}
\end{table}

\paragraph{Conclusion 5: Empirical Results Tend to Favour the Forward KL Direction:} \citet{chan2022greedification} have studied the impact of the KL direction in the discrete-action setting when the intermediate policy is Boltzmann i.e. $\ppith \propto \exp(q_\tau^t(s,a)/\tau)$, and have reported no significant differences in the performance. We revisit this question in the context of our actor objectives, evaluating $\SPMA$ and $\NPG$ under two regimes: (i) $\zeta=0, \tau=0$ and (ii) $\zeta=0$, $\tau \neq 0$ (i.e., only actor entropy). For (i), we use $n = 10$ (as $n = 1$ performs poorly) and observe that using the forward KL yields a clear advantage for $\SPMA$, while the difference is small for $\NPG$  (~\cref{tab:iqm_summary_no_ent}). For (ii), forward and reverse KL perform similarly for $\SPMA$, whereas $\NPG$ benefits more from the forward KL direction (~\cref{tab:normalized_score_summary}).

\section{Conclusion and Future Work}
\label{sec:conclusion-and-future-work}
We revisited off-policy actor-critic design in discrete-action settings, where $\DSAC$ performs poorly. By decoupling actor and critic entropy, we identified a $\DSAC$ variant that substantially improves over the default. Motivated by this, we introduced a flexible off-policy actor-critic framework that subsumes $\DSAC$ and yields novel actor objectives, including $\SPMARKL$ and $\NPGFKL$, that remain robust even without entropy regularization or explicit exploration.

Future work includes:  
(i) analyzing these actor objectives with function approximation to guide algorithm design;  
(ii) developing adaptive schemes for the actor step size~$\etat$;  
(iii) further studying the objectives without entropy regularization; 
(iv) incorporating enhancements from compositional algorithms such as $\RAINBOW$; and  
(v) extending the framework to continuous-action settings.

\bibliography{ref}

@article{asad2024fast,
  title={Fast Convergence of Softmax Policy Mirror Ascent},
  author={Asad, Reza and Babanezhad, Reza and Laradji, Issam and Roux, Nicolas Le and Vaswani, Sharan},
  journal={arXiv preprint arXiv:2411.12042},
  year={2024}
}

@inproceedings{abbasi2019politex,
  title={Politex: Regret bounds for policy iteration using expert prediction},
  author={Abbasi-Yadkori, Yasin and Bartlett, Peter and Bhatia, Kush and Lazic, Nevena and Szepesvari, Csaba and Weisz, Gell{\'e}rt},
  booktitle={International Conference on Machine Learning},
  pages={3692--3702},
  year={2019},
  organization={PMLR}
}

@article{liu2024elementary,
  title={Elementary analysis of policy gradient methods},
  author={Liu, Jiacai and Li, Wenye and Wei, Ke},
  journal={arXiv preprint arXiv:2404.03372},
  year={2024}
}

@article{kakade2001natural,
  title={A natural policy gradient},
  author={Kakade, Sham M},
  journal={Advances in neural information processing systems},
  volume={14},
  year={2001}
}

@article{xiao2022convergence,
  title={On the convergence rates of policy gradient methods},
  author={Xiao, Lin},
  journal={Journal of Machine Learning Research},
  volume={23},
  number={282},
  pages={1--36},
  year={2022}
}

@book{cover1999elements,
  title={Elements of information theory},
  author={Cover, Thomas M},
  year={1999},
  publisher={John Wiley \& Sons}
}

@article{sason2013entropy,
  title={Entropy bounds for discrete random variables via maximal coupling},
  author={Sason, Igal},
  journal={IEEE Transactions on Information Theory},
  volume={59},
  number={11},
  pages={7118--7131},
  year={2013},
  publisher={IEEE}
}

@article{bellemare2013arcade,
  title={The arcade learning environment: An evaluation platform for general agents},
  author={Bellemare, Marc G and Naddaf, Yavar and Veness, Joel and Bowling, Michael},
  journal={Journal of Artificial Intelligence Research},
  volume={47},
  pages={253--279},
  year={2013}
}

@article{orabona2019modern,
  title={A modern introduction to online learning},
  author={Orabona, Francesco},
  journal={arXiv preprint arXiv:1912.13213},
  year={2019}
}

@article{mnih2013playing,
  title={Playing atari with deep reinforcement learning},
  author={Mnih, Volodymyr and Kavukcuoglu, Koray and Silver, David and Graves, Alex and Antonoglou, Ioannis and Wierstra, Daan and Riedmiller, Martin},
  journal={arXiv preprint arXiv:1312.5602},
  year={2013}
}

@article{christodoulou2019soft,
  title={Soft actor-critic for discrete action settings},
  author={Christodoulou, Petros},
  journal={arXiv preprint arXiv:1910.07207},
  year={2019}
}

@article{tomar2020mirror,
  title={Mirror descent policy optimization},
  author={Tomar, Manan and Shani, Lior and Efroni, Yonathan and Ghavamzadeh, Mohammad},
  journal={arXiv preprint arXiv:2005.09814},
  year={2020}
}

@article{stable-baselines3,
  author  = {Antonin Raffin and Ashley Hill and Adam Gleave and Anssi Kanervisto and Maximilian Ernestus and Noah Dormann},
  title   = {Stable-Baselines3: Reliable Reinforcement Learning Implementations},
  journal = {Journal of Machine Learning Research},
  year    = {2021},
  volume  = {22},
  number  = {268},
  pages   = {1-8},
  url     = {http://jmlr.org/papers/v22/20-1364.html}
}

@inproceedings{haarnoja2018soft,
  title={Soft actor-critic: Off-policy maximum entropy deep reinforcement learning with a stochastic actor},
  author={Haarnoja, Tuomas and Zhou, Aurick and Abbeel, Pieter and Levine, Sergey},
  booktitle={International conference on machine learning},
  pages={1861--1870},
  year={2018},
  organization={Pmlr}
}

@article{haarnoja2018softapp,
  title={Soft actor-critic algorithms and applications},
  author={Haarnoja, Tuomas and Zhou, Aurick and Hartikainen, Kristian and Tucker, George and Ha, Sehoon and Tan, Jie and Kumar, Vikash and Zhu, Henry and Gupta, Abhishek and Abbeel, Pieter and others},
  journal={arXiv preprint arXiv:1812.05905},
  year={2018}
}

@article{zhou2022revisiting,
  title={Revisiting discrete soft actor-critic},
  author={Zhou, Haibin and Lin, Zichuan and Li, Junyou and Fu, Qiang and Yang, Wei and Ye, Deheng},
  journal={arXiv preprint arXiv:2209.10081},
  year={2022}
}

@article{schulman2017proximal,
  title={Proximal policy optimization algorithms},
  author={Schulman, John and Wolski, Filip and Dhariwal, Prafulla and Radford, Alec and Klimov, Oleg},
  journal={arXiv preprint arXiv:1707.06347},
  year={2017}
}

@article{vieillard2020munchausen,
  title={Munchausen reinforcement learning},
  author={Vieillard, Nino and Pietquin, Olivier and Geist, Matthieu},
  journal={Advances in Neural Information Processing Systems},
  volume={33},
  pages={4235--4246},
  year={2020}
}

@article{xu2021target,
  title={Target entropy annealing for discrete soft actor-critic},
  author={Xu, Yaosheng and Hu, Dailin and Liang, Litian and McAleer, Stephen and Abbeel, Pieter and Fox, Roy},
  journal={arXiv preprint arXiv:2112.02852},
  year={2021}
}

@inproceedings{meng2023off,
  title={Off-policy proximal policy optimization},
  author={Meng, Wenjia and Zheng, Qian and Pan, Gang and Yin, Yilong},
  booktitle={Proceedings of the AAAI Conference on Artificial Intelligence},
  volume={37},
  pages={9162--9170},
  year={2023}
}

@article{queeney2021generalized,
  title={Generalized proximal policy optimization with sample reuse},
  author={Queeney, James and Paschalidis, Yannis and Cassandras, Christos G},
  journal={Advances in Neural Information Processing Systems},
  volume={34},
  pages={11909--11919},
  year={2021}
}

@inproceedings{dabney2018implicit,
  title={Implicit quantile networks for distributional reinforcement learning},
  author={Dabney, Will and Ostrovski, Georg and Silver, David and Munos, R{\'e}mi},
  booktitle={International conference on machine learning},
  pages={1096--1105},
  year={2018},
  organization={PMLR}
}

@inproceedings{hessel2018rainbow,
  title={Rainbow: Combining improvements in deep reinforcement learning},
  author={Hessel, Matteo and Modayil, Joseph and Van Hasselt, Hado and Schaul, Tom and Ostrovski, Georg and Dabney, Will and Horgan, Dan and Piot, Bilal and Azar, Mohammad and Silver, David},
  booktitle={Proceedings of the AAAI conference on artificial intelligence},
  volume={32},
  year={2018}
}

@article{chan2022greedification,
  title={Greedification operators for policy optimization: Investigating forward and reverse kl divergences},
  author={Chan, Alan and Silva, Hugo and Lim, Sungsu and Kozuno, Tadashi and Mahmood, A Rupam and White, Martha},
  journal={Journal of Machine Learning Research},
  volume={23},
  number={253},
  pages={1--79},
  year={2022}
}

@inproceedings{neumanninvestigating,
  title={Investigating the Utility of Mirror Descent in Off-policy Actor-Critic},
  author={Neumann, Samuel and He, Jiamin and White, Adam and White, Martha},
  booktitle={Reinforcement Learning Conference},
  year={2025}
}

@article{xiong2024dual,
  title={Dual approximation policy optimization},
  author={Xiong, Zhihan and Fazel, Maryam and Xiao, Lin},
  journal={arXiv preprint arXiv:2410.01249},
  year={2024}
}

@article{vieillard2020leverage,
  title={Leverage the average: an analysis of kl regularization in reinforcement learning},
  author={Vieillard, Nino and Kozuno, Tadashi and Scherrer, Bruno and Pietquin, Olivier and Munos, R{\'e}mi and Geist, Matthieu},
  journal={Advances in Neural Information Processing Systems},
  volume={33},
  pages={12163--12174},
  year={2020}
}

@article{vaswani2021general,
  title={A general class of surrogate functions for stable and efficient reinforcement learning},
  author={Vaswani, Sharan and Bachem, Olivier and Totaro, Simone and M{\"u}ller, Robert and Garg, Shivam and Geist, Matthieu and Machado, Marlos C and Castro, Pablo Samuel and Roux, Nicolas Le},
  journal={arXiv preprint arXiv:2108.05828},
  year={2021}
}

@inproceedings{mei2019principled,
  title={On principled entropy exploration in policy optimization},
  author={Mei, Jincheng and Xiao, Chenjun and Huang, Ruitong and Schuurmans, Dale and M{\"u}ller, Martin},
  booktitle={Proceedings of the 28th International Joint Conference on Artificial Intelligence},
  pages={3130--3136},
  year={2019}
}

@article{lavington2023target,
  title={Target-based surrogates for stochastic optimization},
  author={Lavington, Jonathan Wilder and Vaswani, Sharan and Babanezhad, Reza and Schmidt, Mark and Roux, Nicolas Le},
  journal={arXiv preprint arXiv:2302.02607},
  year={2023}
}

@article{gan2024transductive,
  title={Transductive off-policy proximal policy optimization},
  author = {Gan, Yaozhong and Yan, Renye and Tan, Xiaoyang and Wu, Zhe and Xing, Junliang},
  journal={arXiv preprint arXiv:2406.03894},
  year={2024}
}

@misc{rl-zoo3,
  author = {Raffin, Antonin},
  title = {RL Baselines3 Zoo},
  year = {2020},
  publisher = {GitHub},
  journal = {GitHub repository},
  howpublished = {\url{https://github.com/DLR-RM/rl-baselines3-zoo}},
}

@article{ishfaq2023provable,
  title={Provable and practical: Efficient exploration in reinforcement learning via langevin monte carlo},
  author={Ishfaq, Haque and Lan, Qingfeng and Xu, Pan and Mahmood, A Rupam and Precup, Doina and Anandkumar, Anima and Azizzadenesheli, Kamyar},
  journal={arXiv preprint arXiv:2305.18246},
  year={2023}
}

@article{zhu2024q,
  title={q-exponential family for policy optimization},
  author={Zhu, Lingwei and Shah, Haseeb and Wang, Han and Nagai, Yukie and White, Martha},
  journal={arXiv preprint arXiv:2408.07245},
  year={2024}
}

@article{dabney2017distributional,
  title={Distributional reinforcement learning with quantile regression. arxiv e-prints, art},
  author={Dabney, Will and Rowland, Mark and Bellemare, Marc G and Munos, R{\'e}mi},
  journal={arXiv preprint arXiv:1710.10044},
  year={2017}
}

@inproceedings{bellemare2017distributional,
  title={A distributional perspective on reinforcement learning},
  author={Bellemare, Marc G and Dabney, Will and Munos, R{\'e}mi},
  booktitle={International conference on machine learning},
  pages={449--458},
  year={2017},
  organization={PMLR}
}

@article{kingma2014adam,
  title={Adam: A method for stochastic optimization},
  author={Kingma, Diederik P},
  journal={arXiv preprint arXiv:1412.6980},
  year={2014}
}

@article{neumann2018greedy,
  title={Greedy actor-critic: A new conditional cross-entropy method for policy improvement},
  author={Neumann, Samuel and Lim, Sungsu and Joseph, Ajin and Pan, Yangchen and White, Adam and White, Martha},
  journal={arXiv preprint arXiv:1810.09103},
  year={2018}
}

@article{ishfaq2025langevin,
  title={Langevin soft actor-critic: Efficient exploration through uncertainty-driven critic learning},
  author={Ishfaq, Haque and Wang, Guangyuan and Islam, Sami Nur and Precup, Doina},
  journal={arXiv preprint arXiv:2501.17827},
  year={2025}
}

@inproceedings{kuznetsov2020controlling,
  title={Controlling overestimation bias with truncated mixture of continuous distributional quantile critics},
  author={Kuznetsov, Arsenii and Shvechikov, Pavel and Grishin, Alexander and Vetrov, Dmitry},
  booktitle={International conference on machine learning},
  pages={5556--5566},
  year={2020},
  organization={PMLR}
}

@article{fortunato2017noisy,
  title={Noisy networks for exploration. arXiv 2017},
  author={Fortunato, M and Azar, MG and Piot, B and Menick, J and Osband, I and Graves, A and Mnih, V and Munos, R and Hassabis, D and Pietquin, O and others},
  journal={arXiv preprint arXiv:1706.10295},
  year={2017}
}

@article{chen2021randomized,
  title={Randomized ensembled double q-learning: Learning fast without a model},
  author={Chen, Xinyue and Wang, Che and Zhou, Zijian and Ross, Keith},
  journal={arXiv preprint arXiv:2101.05982},
  year={2021}
}

@article{hiraoka2021dropout,
  title={Dropout q-functions for doubly efficient reinforcement learning},
  author={Hiraoka, Takuya and Imagawa, Takahisa and Hashimoto, Taisei and Onishi, Takashi and Tsuruoka, Yoshimasa},
  journal={arXiv preprint arXiv:2110.02034},
  year={2021}
}

@article{bhatt2019crossq,
  title={Crossq: Batch normalization in deep reinforcement learning for greater sample efficiency and simplicity},
  author={Bhatt, Aditya and Palenicek, Daniel and Belousov, Boris and Argus, Max and Amiranashvili, Artemij and Brox, Thomas and Peters, Jan},
  journal={arXiv preprint arXiv:1902.05605},
  year={2019}
}

@article{agarwal2021deep,
  title={Deep reinforcement learning at the edge of the statistical precipice},
  author={Agarwal, Rishabh and Schwarzer, Max and Castro, Pablo Samuel and Courville, Aaron C and Bellemare, Marc},
  journal={Advances in neural information processing systems},
  volume={34},
  pages={29304--29320},
  year={2021}
}

@article{taiga2021bonus,
  title={On bonus-based exploration methods in the arcade learning environment},
  author={Taiga, Adrien Ali and Fedus, William and Machado, Marlos C and Courville, Aaron and Bellemare, Marc G},
  journal={arXiv preprint arXiv:2109.11052},
  year={2021}
}

@misc{stable-retro,
  author = {Poliquin, Mathieu},
  title = {Stable Retro, a maintained fork of OpenAI's gym-retro},
  year = {2026},
  publisher = {GitHub},
  journal = {GitHub repository},
  howpublished = {\url{https://github.com/Farama-Foundation/stable-retro}},
}
\bibliographystyle{iclr2026_conference}

\clearpage
\appendix
\appendix
\newcommand{\appendixTitle}{%
\vbox{
    \centering
	\hrule height 4pt
	\vskip 0.2in
	{\LARGE \bf Supplementary Material}
	\vskip 0.2in
	\hrule height 1pt 
}}
\appendixTitle
\section*{Organization of the Appendix}\label{appendix:org}
\begin{itemize}
   \item[\ref{app:actor-objective-instantiations}] \hyperref[app:actor-objective-instantiations]{Actor Objective Instantiations}

   \item[\ref{app:pseudocode}] \hyperref[app:pseudocode]{General Off-Policy Actor-Critic Pseudocode}

   \item[\ref{app:theoretical-results}] \hyperref[app:theoretical-results]{Theoretical Results}
   \begin{itemize}
        \item[\ref{app:meta-entropy-proof}] \hyperref[app:meta-entropy-proof]{Proof of~\cref{thm:meta-entropy}}

        \item[\ref{app:cor-pe-proof}] \hyperref[app:cor-pe-proof]{Proof of~\cref{cor:npg-spma-pe-no-entropy}}
        \begin{itemize}
             \item[\ref{app:generic-pe-proof}] \hyperref[app:generic-pe-proof]{Generic Policy Evaluation}

             \item[\ref{app:pe-soft-npg-cor-proofs}] \hyperref[app:pe-soft-npg-cor-proofs]{(soft) $\NPG$ Corollaries}

             \item[\ref{app:pe-soft-spma-cor-proofs}] \hyperref[app:pe-soft-spma-cor-proofs]{(soft) $\SPMA$ Corollaries}
             
        \end{itemize}

        \item[\ref{app:cor-po-proof}] \hyperref[app:cor-po-proof]{Proof of~\cref{cor:npg-spma-regret-guarantee}}
        \begin{itemize}
             \item[\ref{app:generic-po-proof}] \hyperref[app:generic-po-proof]{Generic Regret Bound}

             \item[\ref{app:po-soft-npg-cor-proofs}] \hyperref[app:po-soft-npg-cor-proofs]{(soft) $\NPG$ Corollaries}

             \item[\ref{app:po-soft-spma-cor-proofs}] \hyperref[app:po-soft-spma-cor-proofs]{(soft) $\SPMA$ Corollaries}
             
        \end{itemize}

        \item[\ref{app:thm-main-result-proof}] \hyperref[app:thm-main-result-proof]{Proof of~\cref{thm:soft-npg-pe-entropy-main}}
        \begin{itemize}
             \item[\ref{app:soft-npg-main-thm-proof}] \hyperref[app:soft-npg-main-thm-proof]{soft $\NPG$}

             \item[\ref{app:soft-spma-main-thm-proof}] \hyperref[app:soft-spma-main-thm-proof]{soft $\SPMA$}
             
        \end{itemize}

   \end{itemize}

   \item[\ref{app:helper-lemmas}] \hyperref[app:helper-lemmas]{Helper Lemmas}

   \item[\ref{app:experimental-details-and-additional-results}] \hyperref[app:experimental-details-and-additional-results]{Experimental Details and Additional Results}

\end{itemize}
\section{Actor Objective Instantiations}
\label{app:actor-objective-instantiations}
In this section, we instantiate our forward and reverse KL-based objectives in Equations~\ref{eq:policy-update-fkl} and~\ref{eq:policy-update-rkl} using soft $\NPG$, and soft $\SPMA$ in the function approximation setting. 
\subsection{\texorpdfstring{$\NPGFKL (\tau, \zeta)$}{NPGFKL (τ, ζ)}} 
\begin{align*}
\pitt &= \argmin_{\pi_{\theta} \in \Pi} \E_{s \sim \cD_t} \, \left[\text{KL}(\pith(\cdot | s) || \pi_{\theta}(\cdot | s)) - \tau_t \cH(\pi_{\theta}(\cdot|s)) \right]\\
& = \scalebox{0.95}{$\argmin_{\pi_{\theta} \in \Pi} \E_{s \sim \cD_t} \, \left[ \E_{a \sim \pi_t(\cdot | s)} \left[ - \, \frac{\exp(\etat \, q_\zeta^t(s, a))}{\cZ_t(s)} \, \ln\left(\frac{\pi_{\theta}(a|s)}{\frac{\pit(a|s) \, \exp(\etat q^t_\zeta(s,a))}{\cZ_t(s)}}\right) \right] - \tau_t \cH(\pi_{\theta}(\cdot|s)) \right]$} \tag{Using the NPG update in~\cref{eq:npg}} \\
& = \argmin_{\pi_{\theta} \in \Pi} \E_{s \sim \cD_t} \, \left[ \E_{a \sim \pi_t(\cdot | s)} \left[ - \, \frac{\exp(\etat \, q_\zeta^t(s, a))}{\cZ_t(s)} \, \ln\left(\frac{\pi_{\theta}(a|s)}{\pi_t(a|s)}\right) \right] - \tau_t \cH(\pi_{\theta}(\cdot|s)) \right] \tag{dropping the constant w.r.t $\pi$}\\
& = \scalebox{0.99}{$\argmax_{\pi_{\theta} \in \Pi} \E_{s \sim \cD_t} \, \left[ \E_{a \sim \pi_t(\cdot | s)} \left[ \frac{\exp(\etat \, q_\zeta^t(s, a))}{\sum_{a'} \pit(a') \, \exp(\etat \, q_\zeta^t(s,a))} \, \ln\left(\frac{\pi_{\theta}(a|s)}{\pi_t(a|s)}\right) \right] + \tau_t \cH(\pi_{\theta}(\cdot|s)) \right]$}\\ 
\end{align*}

\subsection{\texorpdfstring{$\SPMAFKL (\tau, \zeta)$}{SPMAFKL (τ, ζ)}} 
\begin{align*}
\pitt &= \argmin_{\pi_{\theta} \in \Pi} \E_{s \sim \cD_t} \, \left[\text{KL}(\pith(\cdot | s) || \pi_{\theta}(\cdot | s)) - \tau_t \cH(\pi_{\theta}(\cdot|s)) \right]\\
& = \scalebox{0.99}{$\argmin_{\pi_{\theta} \in \Pi} \E_{s \sim \cD_t} \, \Bigg[ \E_{a \sim \pi_t(\cdot | s)} \left[- \, \frac{\left(1 + \etat (q_\zeta^t(s, a) - v_\zeta(s)^t) \right)}{\cZ_t(s)} \, \ln\left(\frac{\pi_{\theta}(a|s)}{\frac{\pit(a|s) \, (1 + \etat \, (q^t_\zeta(s, a) - v^t_\zeta(s)))}{\cZ_t(s)}}\right) \right]$}\\
&\hspace{4.9cm} - \tau_t \cH(\pi_{\theta}(\cdot|s))  \Bigg] \tag{Using the SPMA update in~\cref{eq:spma}} \\
& = \argmin_{\pi_{\theta} \in \Pi} \E_{s \sim \cD_t} \, \Bigg[ \E_{a \sim \pi_t(\cdot | s)} \left[ - \, \frac{\left(1 + \etat (q_\zeta^t(s, a) - v_\zeta^t(s)) \right)}{\cZ_t(s)} \, \ln\left(\frac{\pi_{\theta}(a|s)}{\pi_t(a|s)}\right) \right]\\
&\hspace{4.5 cm} - \tau_t \cH(\pi_{\theta}(\cdot|s)) \Bigg] \tag{dropping the constant w.r.t $\pi$}\\
& = \argmax_{\pi_{\theta} \in \Pi} \E_{s \sim \cD_t} \, \Bigg[ \E_{a \sim \pi_t(\cdot | s)} \left[ \frac{\left(1 + \etat (q_\zeta^t(s, a) - v_\zeta^t(s)) \right)}{\sum_{a'} \, \pit(a'|s) \, \left(1 + \etat (q_\zeta^t(s, a') - v_\zeta^t(s)) \right)} \, \ln\left(\frac{\pi(a|s)}{\pi_t(a|s)}\right) \right]\\
&\hspace{4.5cm} + \tau_t \cH(\pi_{\theta}(\cdot|s)) \Bigg] \end{align*}

\subsection{\texorpdfstring{$\NPGRKL (\tau, \zeta)$}{NPGRKL (τ, ζ)}}
\begin{align*}
\pitt &= \argmin_{\pi_{\theta} \in \Pi} \E_{s \sim \cD_t} \, \left[\text{KL}(\pi_{\theta}(\cdot | s) || \pith(\cdot | s)) - \tau_t \cH(\pi_{\theta}(\cdot|s)) \right]\\
&= \argmin_{\pi_{\theta} \in \Pi} \E_{s \sim \cD_t} \, \E_{a \sim \pi_{\theta}(.|s)} \left[(1+\taut) \, \ln(\pi_{\theta}(a|s)) - \ln(\pith(a|s)) \right]\\
&= \argmin_{\pi_{\theta} \in \Pi} \E_{s \sim \cD_t} \E_{a \sim \pi_{\theta}(.|s)} \left[(1+\taut) \, \ln(\pi_{\theta}(a|s)) - \ln\bigg(\pit(a|s) \exp(\etat \, q_\zeta^t(s, a))\bigg) \right] \tag{Using the $\NPG$ update in~\cref{eq:npg} and since $\cZ_t$ can be marginalized out} \\
& = \argmin_{\pi_{\theta} \in \Pi} \E_{s \sim \cD_t} \E_{a \sim \pi_{\theta}(.|s)} \left[(1+\taut) \, \ln(\pi_{\theta}(a|s)) - \ln(\pit(a|s)) - \etat \, q_\zeta^t(s, a)) \right]\\
& = \argmin_{\pi_{\theta} \in \Pi} \E_{s \sim \cD_t} \E_{a \sim \pi_{\theta}(.|s)} \left[(1+\tau \etat) \, \ln(\pi_{\theta}(a|s))  - \etat \, \left(q_\zeta^t(s, a) + \frac{1}{\etat} \ln(\pit(a|s)) \right) \right] \tag{Since $\taut = \etat \, \tau$} \\
& = \argmax_{\pi_{\theta} \in \Pi} \E_{s \sim \cD_t} \E_{a \sim \pi_{\theta}(.|s)} \left[q_\zeta^t(s, a) - \tau \, \ln(\pi_{\theta}(a|s)) - \frac{1}{\etat} \ln\left(\frac{\pi_{\theta}(a|s)}{\pit(a|s)}\right) \right]\\
&= \argmax_{\pi_{\theta} \in \Pi} \E_{s \sim \cD_t} \left[\E_{a \sim \pi_{\theta}(.|s)} \left[q_\zeta^t(s, a) - \tau \, \ln(\pi_{\theta}(a|s)) \right] - \frac{1}{\etat} \, \text{KL}(\pi_{\theta}(\cdot|s) \, || \, \pi_t(\cdot|s))\right]
\end{align*}     

Given an estimate of the entropy-regularized $q$ function, $\DSAC$ is a special of $\NPGRKL$ by setting $\etat = \infty$ resulting in the following surrogate loss:
\begin{align*}
\pitt &= \argmax_{\pi_{\theta} \in \Pi} \E_{s \sim \cD_t} \E_{a \sim \pi_{\theta}(.|s)} \left[\left(q_\zeta^t(s, a)  \right) - \tau \, \ln(\pi_{\theta}(a|s)) \right]    
\end{align*}


\subsection{\texorpdfstring{$\SPMARKL (\tau, \zeta)$}{SPMARKL (τ, ζ)}} 
\begin{align*}
\pitt &= \argmin_{\pi_{\theta} \in \Pi} \E_{s \sim \cD_t} \, \left[\text{KL}(\pi_{\theta}(\cdot | s) || \pith(\cdot | s)) - \tau_t \cH(\pi_{\theta}(\cdot | s)) \right]\\    
&= \argmin_{\pi_{\theta} \in \Pi} \E_{s \sim \cD_t} \, \E_{a \sim \pi_{\theta}(.|s)} \left[(1+\taut) \, \ln(\pi(a|s)) - \ln(\pith(a|s)) \right]\\
&= \scalebox{0.91}{$
\argmin_{\pi_{\theta} \in \Pi} \E_{s \sim \cD_t} \E_{a \sim \pi_{\theta}(.|s)} \left[(1+\taut) \, \ln(\pi_{\theta}(a|s)) - \ln\bigg(\pit(a|s) \, [1 + \etat \left(q_\zeta^{t}(s,a') - v_\zeta^t(s) \right)]\bigg) \right]
\tag{Using the SPMA update in~\cref{eq:spma}}
$} \\
& = \argmax_{\pi_{\theta} \in \Pi} \mathbb{E}_{s \sim \cD_t} \left[ \mathbb{E}_{a \sim \pi_{\theta}(\cdot|s)} \ln\left(1 + \etat (q_\zeta^t(s,a) - v_\zeta^t(s))\right) - \mathrm{KL}(\pi_{\theta} || \pi_t) - \taut \, \ln \pi_{\theta}(a|s) \right] 
\end{align*}

\section{General Off-Policy Actor-Critic Pseudocode}
\label{app:pseudocode}

\begin{algorithm}[H]
    \caption{General Off-Policy Actor-Critic Framework}
    \label{alg:actor-critic}
    \begin{algorithmic}[1]
    \STATE \textbf{Input:} $\theta_0$ ($\pi_0$'s parameters), $\phi_0$ ($q_\zeta^0$'s parameters), $\pi_{\theta}$ (function approximation for actor), $q_{\phi}$ (function approximation for critic), $\cL_t$ (critic loss), $\ell_t$ (actor loss), $K$ (total iterations), $N$ (number of environment steps), $n$ (number of policy optimization steps), $\alpha$ (inner-loop step-size) 
    \FOR{$t = 0$ \textbf{to} $K-1$}
        \STATE Interact with the environment for $N$ steps to collect data using $\pit$: $D_t \leftarrow D_{t} \cup \{s_i, a_i, r(s_i, a_i), s_{i+1}\}_{i=1}^N$
        \STATE $\phi_t = \argmin \cL_t(\phi) \, ; \, q_\zeta^t = q_{\phi_t}$
        \STATE Initialize inner-loop: $\omega_0 = \theta_t$
        \FOR{$j = 0$ \textbf{to} $n-1$}
        \STATE $\omega_{j+1} = \omega_{j} - \alpha \nabla_{\omega} \ell_t(\omega_j)$
        \ENDFOR
        \STATE $\theta_{t+1} = \omega_n$ 
        \STATE $\ppitt(\cdot|s) = \pi_{\theta_{t+1}}(\cdot|s)$
    \ENDFOR
    \STATE \textbf{Return:} $\theta_K$
    \end{algorithmic}
\end{algorithm}

\section{Theoretical Results}
\label{app:theoretical-results}

\subsection{Proof of~\texorpdfstring{\cref{thm:meta-entropy}}{}}
\label{app:meta-entropy-proof}
\reduction*

\begin{proof}
\begin{align*}
v_\tau^{\pi_\tau^*} - v_\tau^{\pi} & = \cT_{\tau}^{\pi_\tau^*} v_\tau^{\pi_\tau^*}- v_\tau^{\pi} \tag{Since $v_\tau^{\pi_\tau^*}$ is a fixed point of $\cT_{\tau}^{\pi_\tau^*}$} \\
& = [\cT_{\tau}^{\pi_\tau^*} v_\tau^{\pi} - v_\tau^{\pi}] + [\cT_{\tau}^{\pi_\tau^*} v_\tau^{\pi^*}- \cT_{\tau}^{\pi_\tau^*} v_\tau^{\pi}] \tag{Add/subtract $\cT_{\tau}^{\pi_\tau^*} v_\tau^{\pi}$} \\
& = [\cT_{\tau}^{\pi_\tau^*} v_\tau^{\pi} - v_\tau^{\pi}] + \gamma \cP_{\pi_\tau^*} (v_\tau^{\pi_\tau^*} - v_\tau^{\pi}) \tag{Using the definition of $\cT_{\tau}^{\pi_\tau^*}$} \\
\implies v_\tau^{\pi^*_\tau} - v_\tau^{\pi} & = \left(I - \gamma \cP_{\pi^*_\tau}\right)^{-1} \left[\cT_{\tau}^{\pi^*_\tau} v_\tau^\pi - \cT_{\tau}^{\pi} \, v_\tau^\pi \right]
\intertext{Summing up from $t = 0$ to $t = K - 1$ and dividing by $K$,}
& v_\tau^{*} - \frac{\sum_{t = 0}^{K - 1} v_\tau^{\pi_t}}{K} = \frac{1}{K} \, (I - \gamma \cP_{\pi^*_\tau})^{-1} \, \sum_{t = 0}^{K - 1}\left[\cT_\tau^{\pi^*_\tau} \, v_\tau^{\pi_t} - v_\tau^{\pi_t} \right] \tag{By definition of $v_\tau^*$} \\ 
\implies v_\tau^{*}- v_\tau^{\bar{\pi}_K} & = \frac{1}{K} (I - \gamma \cP_{\pi^*_\tau})^{-1} \, \sum_{t = 0}^{K - 1}\left[\cT_\tau^{\pi^*_\tau} \, v_\tau^{\pi_t} - v_\tau^{\pi_t} \right]  \tag{Since $v_\tau^{\bar{\pi}_K} = \frac{\sum_{t = 0}^{K - 1} v_\tau^{\pi_t}}{K}$} \\
& =  \frac{1}{K} \, (I - \gamma \cP_{\pi^*_\tau})^{-1} \, \sum_{t = 0}^{K - 1}\left[\cT_\tau^{\pi^*_\tau} \, v_\tau^{\pi_t} - \cT_\tau^{\pi_t} \, v_\tau^{\pi_t} \right] \tag{Since $v_\tau^\pi = \cT_\tau^\pi v_\tau^\pi$} \\
\implies \norminf{v_\tau^{*}- v_\tau^{\bar{\pi}_K}} &= \frac{1}{K} \norminf{(I - \gamma \cP_{\pi^*_\tau})^{-1} \, \sum_{t = 0}^{K - 1}\left[\cT_\tau^{\pi^*_\tau} \, v_\tau^{\pi_t} - \cT_\tau^{\pi_t} \, v_\tau^{\pi_t} \right]} \\
& \leq \frac{1}{K} \, \norminf{(I - \gamma \cP_{\pi^*_\tau})^{-1}} \, \norminf{\sum_{t = 0}^{K - 1}\left[\cT_\tau^{\pi^*_\tau} \, v_\tau^{\pi_t} - \cT_\tau^{\pi_t} \, v_\tau^{\pi_t} \right]} \tag{By definition of matrix norm}  \\
& \leq \frac{1}{K \, (1-\gamma)} \, \norminf{\sum_{t = 0}^{K - 1}\left[\cT_\tau^{\pi^*_\tau} \, v_\tau^{\pi_t} - \cT_\tau^{\pi_t} \, v_\tau^{\pi_t} \right]} \tag{Since $\norminf{(I - \gamma \cP_\pi)^{-1}} = \norminf{\sum_{t = 0}^{\infty} [\gamma \, \cP_\pi]^{t}} \leq \sum_{t = 0}^{\infty} \gamma^t = \frac{1}{1 - \gamma}$}
\end{align*}
Let us calculate $\left[\cT_{\tau}^{\pi^*_\tau} v_\tau^{\pi_t} - \cT_{\tau}^{\pi_t} \, v_\tau^{\pi_t} \right](s)$. 
\begin{align*}
\left[\cT_{\tau}^{\pi^*_\tau} v_\tau^{\pi_t} - \cT_{\tau}^{\pi_t} \, v_\tau^{\pi_t} \right](s) &= (\cT_{\tau}^{\pi^*_\tau} v_\tau^{\pi_t})(s) - (\cT_{\tau}^{\pi_t} v_\tau^{\pi_t})(s)\\
&= \E_{a \sim \pi^*_\tau} \left[q_\tau^{\pi_t}(s,a) - \tau \, \ln(\pi^*_\tau(a|s)) \right] \\
&\quad - \E_{a \sim \pi_t} \left[q_\tau^{\pi_t}(s,a) - \tau \, \ln(\pi_t(a|s)) \right] \tag{By definition of $\cT_{\tau}^{\pi_{1}} \, v_\tau^{\pi_{2}}$} \\
& = \langle \pi^*_\tau(\cdot|s) - \pi_t(\cdot|s), q_\tau^{\pi_t}(s,\cdot) \rangle \nonumber \\
&\quad - \tau \left[ \langle \pi^*_\tau(\cdot|s), \ln(\pi^*_\tau(\cdot|s)) \rangle 
        - \langle \pi_t(\cdot|s), \ln(\pi_t(\cdot|s)) \rangle \right]\\
& = \langle \pi^*_\tau(\cdot|s) - \pi_t(\cdot|s),  q_\tau^{\pi_t}(s,\cdot) \rangle + \tau \, \left[ \cH(\pi^*_\tau(\cdot|s)) -  \cH(\pi_t(\cdot|s)) \right] \tag{By definition of $\cH(\pi(\cdot|s))$} \\
& = \langle \pi^*_\tau(\cdot|s) - \pi_t(\cdot|s), q_\zeta^t(s,\cdot) \rangle 
    + \tau \left[ \cH(\pi^*_\tau(\cdot|s)) - \cH(\pi_t(\cdot|s)) \right] \nonumber \\
&\quad + \langle \pi^*_\tau(\cdot|s) - \pi_t(\cdot|s), 
    q_\tau^{\pi_t}(s, \cdot) - q_\zeta^t(s,\cdot) \rangle
\tag{Add/Subtract $\langle \pi^*_\tau(\cdot|s) - \pi_t(\cdot|s), q_\zeta^t(s,\cdot) \rangle$}\\
& \leq \langle \pi^*_\tau(\cdot|s) - \pi_t(\cdot|s),  q_\zeta^t(s,\cdot) \rangle + \tau \, \left[ \cH(\pi^*_\tau(\cdot|s)) -  \cH(\pi_t(\cdot|s)) \right]\\ 
&\quad + \normone{\pi^{*}_{\tau}(\cdot|s) - \pi_t(\cdot|s)} \, \norminf{q_\tau^{\pi_t}(s,\cdot) - q_\zeta^t(s,\cdot)} \tag{By Holder's inequality} \\
&\leq \langle \pi^*_\tau(\cdot|s) - \pi_t(\cdot|s),  q_\zeta^{t}(s,\cdot) \rangle + \tau \, \left[ \cH(\pi^*_\tau(\cdot|s)) -  \cH(\pi_t(\cdot|s)) \right]\\
&\quad + 2 \norminf{q_\tau^{\pi_t}(s,\cdot) - q_\zeta^t(s,\cdot)} \tag{Since $\normone{\pi^{*}_{\tau}(\cdot|s) - \pi_t(\cdot|s)} \leq 2$}
\end{align*}
Define $\text{Regret}(K,u,s) = \sum_{t = 0}^{K-1} \left[\langle u(\cdot|s) - \pi_t(\cdot|s),  q_\zeta^{t}(s,\cdot) \rangle + \tau \, \left[ \cH(u(\cdot|s)) -  \cH(\pit(\cdot|s)) \right] \right]$ as the regret incurred for state $s$ when the comparator is policy $u$. Hence, 
\begin{align*}
\sum_{t = 0}^{K - 1} \left[\cT_{\tau}^{\pi^*_\tau} v_\tau^{\pi_t} - \cT_{\tau}^{\pi_t} \, v_\tau^{\pi_t} \right](s) & \le \text{Regret}(K,\pi^*_\tau,s) + 2\, \sum_{t=0}^K\norminf{q_\tau^{\pi_t}(s,\cdot) - q_\zeta^t(s,\cdot)} \\
\norminf{\sum_{t = 0}^{K - 1} \left[\cT_{\tau}^{\pi^*_\tau} v_\tau^{\pi_t} - \cT_{\tau}^{\pi_t} \, v_\tau^{\pi_t} \right]} &\le \max_{s} \text{Regret}(K,\pi^*_\tau,s) + 2\, \sum_{t=0}^K \norminf{\bepsilon_t} \tag{By definition of $\bepsilon_t$}
\end{align*}
Using the definition of $\text{Regret}(K) = \big[ \text{Regret}(K,\pi^*_\tau,s_i) \big]_{i=1}^S \in \R^{S}$, 
\begin{align*}
\norminf{v_\tau^{*} - v_\tau^{\bar\pi_K}} &\leq  \frac{\norminf{\text{Regret}(K)}}{K \, (1 - \gamma)} +  \frac{2 \, \sum_{t \in [K]} \norminf{\bepsilon_t}}{K \, (1 - \gamma)}     
\end{align*}
\end{proof}

In the special case $(0, \zeta)$, i.e., no entropy regularization from the actor side, the problem reduces to online linear optimization, as in Politex~\citep{abbasi2019politex}, yielding the following corollary. 

\begin{corollary}[Generic Reduction without Actor Entropy]
If $\pi^*$ is the optimal policy whose value function is equal to $v^*$, for an estimate of  $q_\tau^{\pi_t}$ at iteration $t$ s.t. $\bepsilon_t = q_{\zeta}^t - q_\tau^{\pi_t}$ and for any sequence of policies $\{\pi_0, \pi_1, \ldots, \pi_{K-1}\}$, if $\bar{\pi}_K$ is the corresponding mixture policy, then,  
\begin{align*}
\norminf{v^{*} - v^{\bar \pi_K}} &\leq \frac{\norminf{\text{Regret}(K)}}{K \, (1 - \gamma)} +  \frac{2 \, \sum_{t \in [K]} \norminf{\bepsilon_t}}{K \, (1 - \gamma)}   \,,
\end{align*}
where $(\text{Regret}(K))(s) :=  \sum_{t = 0}^{K-1} \left[\langle \pi^*(\cdot|s) - \pi_t(\cdot|s),  q_\zeta^t (s,\cdot) \rangle \right]$ is the regret incurred on an online linear optimization problem for each state $s \in \cS$. 
\label{cor:meta-no-entropy}
\end{corollary}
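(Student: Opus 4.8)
The plan is to obtain this corollary as the $\tau = 0$ specialization of \cref{thm:meta-entropy}. Setting the actor entropy coefficient to zero collapses every soft object to its hard counterpart: the soft value function $v_\tau^\pi$ becomes the ordinary value function $v^\pi$, the soft Bellman operator $\cT_\tau^\pi$ becomes the hard operator $\cT^\pi$, the optimal regularized policy $\pi^*_\tau$ becomes the optimal policy $\pi^*$, and each entropy difference $\tau[\cH(\pi^*(\cdot|s)) - \cH(\pi_t(\cdot|s))]$ disappears from the per-state regret. Consequently the entire inequality chain proving \cref{thm:meta-entropy} carries over verbatim, and the only conceptual change is that the per-state online convex optimization problem degenerates into an online \emph{linear} optimization problem with losses $\langle \cdot,\, q_\zeta^t(s,\cdot)\rangle$.

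Reproducing the argument directly, I would first exploit that $v^{\pi^*}$ is the fixed point of $\cT^{\pi^*}$ and, after adding and subtracting $\cT^{\pi^*} v^\pi$, derive the closed-form identity $v^{\pi^*} - v^\pi = (I - \gamma\cP_{\pi^*})^{-1}[\cT^{\pi^*} v^\pi - \cT^\pi v^\pi]$. Instantiating $\pi = \pi_t$, summing over $t = 0, \ldots, K-1$, dividing by $K$, and using $v^{\bar\pi_K} = \frac{1}{K}\sum_{t} v^{\pi_t}$ then expresses $v^* - v^{\bar\pi_K}$ as an averaged Bellman residual. The bound $\norminf{(I-\gamma\cP_{\pi^*})^{-1}} \le \frac{1}{1-\gamma}$ (a geometric-series estimate on the powers of the stochastic matrix $\cP_{\pi^*}$) gives $\norminf{v^* - v^{\bar\pi_K}} \le \frac{1}{K(1-\gamma)}\norminf{\sum_{t=0}^{K-1}[\cT^{\pi^*} v^{\pi_t} - \cT^{\pi_t} v^{\pi_t}]}$.

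I would then expand the per-state residual. Because $\tau = 0$, the hard Bellman operator yields $[\cT^{\pi^*} v^{\pi_t} - \cT^{\pi_t} v^{\pi_t}](s) = \langle \pi^*(\cdot|s) - \pi_t(\cdot|s),\, q^{\pi_t}(s,\cdot)\rangle$, with no entropy correction term. Adding and subtracting the critic estimate $q_\zeta^t$, applying H\"older's inequality, and using $\normone{\pi^*(\cdot|s) - \pi_t(\cdot|s)} \le 2$ controls the approximation contribution by $2\norminf{\bepsilon_t}$, where $\bepsilon_t = q_\zeta^t - q^{\pi_t}$. Collecting the $\langle \pi^*(\cdot|s) - \pi_t(\cdot|s),\, q_\zeta^t(s,\cdot)\rangle$ terms into the definition of $\text{Regret}(K)$ and taking the infinity norm over states produces the stated bound.

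I expect no genuine obstacle here, since the result is a direct restriction of an already-established theorem with all entropy terms set to zero; the computation is essentially a transcription of the \cref{thm:meta-entropy} proof. The one point worth flagging is structural rather than technical: removing the entropy regularizer removes the strong convexity of the per-state losses, so the resulting subproblem is online linear optimization in the style of Politex~\citep{abbasi2019politex}, and one should not expect the logarithmic regret that the entropy term provides when $\tau > 0$.
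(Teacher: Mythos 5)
Your proposal is correct and matches the paper's treatment: the paper obtains this corollary precisely as the $\tau = 0$ specialization of \cref{thm:meta-entropy}, noting only that the per-state problem degenerates to online linear optimization in the style of Politex, and your transcription of the underlying inequality chain (fixed-point identity, $(I-\gamma\cP_{\pi^*})^{-1}$ bound, H\"older plus $\normone{\pi^*(\cdot|s)-\pi_t(\cdot|s)} \le 2$) is exactly the argument that carries over. No gap.
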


\subsection{Proof of~\texorpdfstring{\cref{cor:npg-spma-pe-no-entropy}}{}}
\label{app:cor-pe-proof}

\subsubsection{Generic Policy Evaluation}
\label{app:generic-pe-proof}
\begin{theorem}[Generic policy evaluation]
Using the policy evaluation scheme in~\cref{eq:pe-general}, if $\delta(\tau, \zeta):= \frac{|\tau - \zeta| \, \ln(A)}{1 - \gamma}$, and $TV_i := \norm{\pi_i(\cdot|s) - \pi_{i - 1}(\cdot|s)}_{1}$,  then, for all $t \in [K]$, 
\begin{align*}
\epsilon_t 
&:= \norminf{\bepsilon_{t}} 
   = \norminf{q_\tau^{\pi_t} - q_\zeta^t} \\
&\leq \frac{H_\tau \, \gamma^m}{1 - \gamma} 
   \sum_{i = 1}^{t}  (\gamma^m)^{t - i} 
   \max_{s} \Big[ 
       TV_i 
       + \tau (1 - \gamma) 
         \,\big|\cH(\pi_i(\cdot|s)) - \cH(\pi_{i-1}(\cdot|s))\big| 
   \Big] \\
&\quad + \sum_{i = 0}^{t} (\gamma^m)^{t - i} \, \delta(\tau, \zeta) 
\end{align*}
Furthermore, if for all $i \in [t]$, $TV_i \leq \frac{1}{2}$, then, for any constant $C \in (0,1/2)$, 
\begin{align*} \epsilon_t & \leq \frac{H_\tau \, \gamma^m}{(1 - \gamma)} \, \sum_{i = 1}^{t} (\gamma^m)^{t - i} \, \max_{s} \left[ TV_i + \tau (1 - \gamma) \, \left[ TV_i \, \ln\left(\frac{A}{C} \right) + \left(\frac{\ln(A)}{2} + \sqrt{2} \right) \, \sqrt{C} \right] \right] \\ & + \sum_{i = 0}^{t} (\gamma^m)^{t - i} \, \delta(\tau, \zeta) \end{align*}
\label{thm:error-analysis-general}
\end{theorem}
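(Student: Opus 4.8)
The plan is to derive a one-step recursion for $\epsilon_t = \norminf{q_\tau^{\pi_t} - q_\zeta^t}$ and unroll it geometrically. Since the entropy-regularized value $q_\tau^{\pi_t}$ lies in $[0, H_\tau]$, the projection $\mathbb{P}_{[0, H_\tau]}$ leaves it unchanged, and as the projection is non-expansive in $\norminf{\cdot}$ it can be dropped, giving $\epsilon_t \leq \norminf{q_\tau^{\pi_t} - (T^{\pi_t}_\zeta)^m q_\zeta^{t-1}}$. I would then insert $(T^{\pi_t}_\zeta)^m q_\tau^{\pi_t}$ and apply the triangle inequality, splitting into an \emph{operator-mismatch} term $\norminf{(T^{\pi_t}_\tau)^m q_\tau^{\pi_t} - (T^{\pi_t}_\zeta)^m q_\tau^{\pi_t}}$ (using that $q_\tau^{\pi_t}$ is the fixed point of $T^{\pi_t}_\tau$) and a \emph{contraction} term $\norminf{(T^{\pi_t}_\zeta)^m q_\tau^{\pi_t} - (T^{\pi_t}_\zeta)^m q_\zeta^{t-1}} \leq \gamma^m \norminf{q_\tau^{\pi_t} - q_\zeta^{t-1}}$, since the soft Bellman operator is a $\gamma$-contraction. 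Splitting the latter once more around $q_\tau^{\pi_{t-1}}$ and writing $D_t := \norminf{q_\tau^{\pi_t} - q_\tau^{\pi_{t-1}}}$ yields the recursion $\epsilon_t \leq \gamma^m \epsilon_{t-1} + \gamma^m D_t + \delta(\tau, \zeta)$.

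Next I would bound the two non-recursive pieces. For the operator-mismatch term I would telescope over the $m$ applications, reducing it to a sum of single-step gaps $\norminf{(T^{\pi_t}_\tau - T^{\pi_t}_\zeta) q}$, each of which equals $|\tau - \zeta|$ times an expected entropy and is hence at most $|\tau - \zeta| \ln(A)$; multiplying by the $\gamma^k$ contraction factors and summing gives at most $\frac{|\tau-\zeta| \ln A}{1-\gamma} = \delta(\tau, \zeta)$. For the policy-drift term $D_t$, I would use the value-difference identity $v_\tau^{\pi_t} - v_\tau^{\pi_{t-1}} = (I - \gamma P_{\pi_t})^{-1}\big[T^{\pi_t}_\tau v_\tau^{\pi_{t-1}} - T^{\pi_{t-1}}_\tau v_\tau^{\pi_{t-1}}\big]$, bound $\norminf{(I - \gamma P_{\pi_t})^{-1}} \leq \frac{1}{1-\gamma}$, and evaluate the bracket at each state as $\inner{\pi_t(\cdot|s) - \pi_{t-1}(\cdot|s)}{q_\tau^{\pi_{t-1}}(s, \cdot)} + \tau\,[\cH(\pi_t(\cdot|s)) - \cH(\pi_{t-1}(\cdot|s))]$. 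Hölder's inequality with $\norminf{q_\tau^{\pi_{t-1}}} \leq H_\tau$, together with $D_t \leq \gamma \norminf{v_\tau^{\pi_t} - v_\tau^{\pi_{t-1}}}$, then gives $D_t \leq \frac{H_\tau}{1-\gamma} \max_s \big[ TV_t + \tau(1-\gamma)\,|\cH(\pi_t(\cdot|s)) - \cH(\pi_{t-1}(\cdot|s))| \big]$, where I use $\tfrac{1}{H_\tau} \leq 1-\gamma$ to cast the entropy coefficient into the stated form.

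Unrolling the geometric recursion from $t$ down to the base case $\epsilon_0 = \norminf{q_\tau^{\pi_0} - q_\zeta^{\pi_0}} \leq \delta(\tau, \zeta)$ then yields the first claimed bound: the drift terms assemble into $\frac{H_\tau \gamma^m}{1-\gamma} \sum_{i=1}^t (\gamma^m)^{t-i} \max_s[\cdots]$, while the per-step $\delta(\tau,\zeta)$ contributions together with the base case collapse exactly into $\sum_{i=0}^t (\gamma^m)^{t-i}\,\delta(\tau,\zeta)$, the base case supplying the missing $i=0$ summand.

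Finally, for the refined bound under the hypothesis $TV_i \leq \tfrac12$, the sole remaining task is to replace the entropy gap $|\cH(\pi_i(\cdot|s)) - \cH(\pi_{i-1}(\cdot|s))|$ by $TV_i \ln(A/C) + (\tfrac{\ln A}{2} + \sqrt{2})\sqrt{C}$ for any $C \in (0, \tfrac12)$. I would establish this as a continuity estimate for the Shannon entropy: splitting the action set at the threshold $C$, applying the mean-value theorem to $x \mapsto -x\ln x$ on the coordinates bounded below by $C$ (whose derivative is controlled by $\ln(1/C)$, contributing the $TV_i \ln(A/C)$ term), and bounding the aggregate contribution of the small coordinates directly via concavity and $-x \ln x$ estimates (contributing the $\sqrt{C}$ term). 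I expect this entropy-continuity lemma to be the main obstacle: it is precisely where the hypothesis $TV_i \leq \tfrac12$ is needed to control the mass and number of small coordinates, whereas the preceding recursion-and-telescoping argument is routine.
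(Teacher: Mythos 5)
Your recursion-and-unrolling argument for the first bound is essentially the paper's proof. The only cosmetic difference is where you apply the triangle inequality after dropping the projection: you split around $(T_\zeta^{\pi_t})^m q_\tau^{\pi_t}$ (contracting with the $\zeta$-operator and paying the operator mismatch on the fixed point $q_\tau^{\pi_t}$), whereas the paper splits around $(T_\tau^{\pi_t})^m q_\zeta^{t-1}$ (contracting with the $\tau$-operator and paying the mismatch on the estimate); the paper's \cref{lemma:bellman-difference} bounds the $m$-step mismatch for an arbitrary $q$, so both choices give the identical recursion $\epsilon_t \leq \gamma^m \epsilon_{t-1} + \gamma^m D_t + \delta(\tau,\zeta)$. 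Your treatment of the drift term via the resolvent identity $v_\tau^{\pi_t} - v_\tau^{\pi_{t-1}} = (I-\gamma P_{\pi_t})^{-1}[T_\tau^{\pi_t}v_\tau^{\pi_{t-1}} - T_\tau^{\pi_{t-1}}v_\tau^{\pi_{t-1}}]$ is algebraically equivalent to the paper's contraction rearrangement, the per-state Hölder step and the use of $1 \leq (1-\gamma)H_\tau$ match, and the base case $\epsilon_0 \leq \delta(\tau,\zeta)$ supplying the $i=0$ summand is exactly how the paper assembles $\sum_{i=0}^t(\gamma^m)^{t-i}\delta(\tau,\zeta)$.

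The genuine divergence is the entropy-continuity estimate for the refined bound, and there your sketch has a real risk of not delivering the stated inequality. The paper does not split the action set at all: it case-splits on the scalar $\norm{P-Q}_1$ versus $C$, invoking the Cover--Thomas bound $\vert\cH(Q)-\cH(P)\vert \leq \norm{Q-P}_1\ln(A/\norm{Q-P}_1)$ when $\norm{P-Q}_1 \geq C$ and Sason's bound $\vert\cH(Q)-\cH(P)\vert \leq \frac{\ln(A-1)}{2}\norm{Q-P}_1 + h(\tfrac12\norm{Q-P}_1) \leq (\tfrac{\ln(A-1)}{2}+\sqrt{2})\sqrt{\norm{Q-P}_1}$ when $\norm{P-Q}_1 \leq C$, which yields the two terms with exactly the stated constants. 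Your per-coordinate split at threshold $C$ is where I would expect trouble: the aggregate contribution of the up-to-$A$ small coordinates, bounded naively via $-x\ln x \leq$ (something like $\sqrt{x}$ or $x\ln(1/x)$), scales as $A\sqrt{C}$ or $AC\ln(1/C)$ rather than $\ln(A)\sqrt{C}$, and the mean-value step on the large coordinates tends to produce $\ln(1/C')+1$ with $C'$ needing to be taken as $C/A$, which perturbs the first term's constant as well. So while a direct proof along your lines can produce a bound of the same \emph{form}, recovering the specific coefficients $\ln(A/C)$ and $\tfrac{\ln A}{2}+\sqrt{2}$ requires either the sharper known inequalities the paper cites or a substantially more careful elementary argument than the one you outline; you correctly identify this as the main obstacle, but the coordinate-splitting route as described does not close it.
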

\begin{proof}
\begin{align*}
\epsilon_t & = \norminf{q_\tau^{\pi_t} - q_\zeta^t} = \norminf{q_\tau^{\pi_t} - \cP_{[0, H_\tau]}[(T_\zeta^{\pi_t})^{m} q_\zeta^{t-1}]} \tag{Using the update} \\
& = \norminf{\cP_{[0, H_\tau]}[q_\tau^{\pi_t}] - \cP_{[0, H_\tau]}[(T_\zeta^{\pi_t})^{m} q_\zeta^{t-1}]} \tag{Since $q_\tau^{\pi_t} \in [0, H_\tau]$} \\
& = \norminf{\cP_{[0, H_\tau]}[(T_\tau^{\pi_t})^{m} q_\tau^{\pi_t}] - \cP_{[0, H_\tau]}[(T_\zeta^{\pi_t})^{m} q_\zeta^{t-1}]} \tag{Since $q_\tau^{\pi_t} = T_\tau^{\pi_t} q_\tau^{\pi_t}$} \\
& \leq \norminf{(T_\tau^{\pi_t})^{m} q_\tau^{\pi_t} - (T_\zeta^{\pi_t})^{m} q_\zeta^{t-1}} \tag{Since projections are non-expansive} \\ 
& \leq \norminf{(T_\tau^{\pi_t})^{m} q_\tau^{\pi_t} - (T_\tau^{\pi_t})^{m} q_\zeta^{t-1}} + \norminf{(T_\tau^{\pi_t})^{m} q_\zeta^{t-1} - (T_\zeta^{\pi_t})^{m} q_\zeta^{t-1}} \tag{Add/Subtract $(T_\tau^{\pi_t})^{m} q_\zeta^{t-1}$ and using triangle inequality} \\
& \leq \norminf{(T_\tau^{\pi_t})^{m} q_\tau^{\pi_t} - (T_\tau^{\pi_t})^{m} q_\zeta^{t-1}} + \underbrace{\frac{|\zeta - \tau|}{1 - \gamma} \, \ln(A)}_{:= \delta(\tau, \zeta)} \tag{Using~\cref{lemma:bellman-difference}}\\
& = \norminf{(T_\tau^{\pi_t})^{m} q_\tau^{\pi_t} - (T_\tau^{\pi_t})^{m} q_\zeta^{t-1}}  + \delta(\tau, \zeta) \\
& \leq \gamma^{m} \, \norminf{q_\tau^{\pi_t} - q_\zeta^{t-1}} + \delta(\tau, \zeta) \tag{Since $T_\tau^\pi$ is a $\gamma$ contraction} \\
& \leq \gamma^{m} \, \norminf{q_\tau^{\pi_t} - q_\tau^{\pi_{t-1}}} + \gamma^{m} \, \norminf{q_\tau^{\pi_{t-1}} - q_\zeta^{t-1}} + \delta(\tau, \zeta) \tag{Add/subtract $q_\tau^{\pi_{t-1}}$ and using triangle inequality} \\
& = \gamma^{m} \, \norminf{q_\tau^{\pi_t} - q_\tau^{\pi_{t-1}}} + \gamma^{m} \, \epsilon_{t-1} + \delta(\tau, \zeta)
\end{align*}
The first term is the difference in the $q_\tau$ functions between two consecutive policies, and we bound it next. For all $(s,a)$, 
\begin{align*}
q_\tau^{\pi_t}(s,a) - q_\tau^{\pi_{t-1}}(s,a) &= \gamma \, \sum_{s'} \cP(s'|s,a) v_\tau^{\pi_t}(s') - \gamma \, \sum_{s'} \cP(s'|s,a) v_\tau^{\pi_{t-1}}(s')\\
&= \gamma \, \E_{s' \sim \cP(\cdot|s,a)} [v_\tau^{\pi_{t}}(s') - v_\tau^{\pi_{t-1}}(s')] \\
\implies \norminf{q_\tau^{\pi_t} - q_\tau^{\pi_{t-1}}} & \leq \gamma \norminf{v_\tau^{\pi_{t}} - v_\tau^{\pi_{t-1}}}
\end{align*}
Let us now bound the difference in the $v_\tau$ functions between two consecutive policies. 
\begin{align*}
\norminf{v_\tau^{\pi_{t}} - v_\tau^{\pi_{t-1}}} &= \norminf{T_\tau^{\pi_t} v_\tau^{\pi_t} - T_\tau^{\pi_t} v_\tau^{\pi_{t-1}} + T_\tau^{\pi_t} v_\tau^{\pi_{t-1}} - T_\tau^{\pi_{t-1}} v_\tau^{\pi_{t-1}}} \tag{Since $T_\tau^{\pi} v_\tau^{\pi} = v^\pi_\tau$ and add/subtract $T_\tau^{\pi_t} v_\tau^{\pi_{t-1}}$} \\
& \leq \norminf{T_\tau^{\pi_t} v_\tau^{\pi_t} - T_\tau^{\pi_t} v_\tau^{\pi_{t-1}}} + \norminf{T_\tau^{\pi_t} v_\tau^{\pi_{t-1}} - T_\tau^{\pi_{t-1}} v_\tau^{\pi_{t-1}}} \tag{Triangle inequality} \\
& \leq \gamma \, \norminf{v_\tau^{\pi_t} - v_\tau^{\pi_{t-1}}} + \norminf{T_\tau^{\pi_t} v_\tau^{\pi_{t-1}} - T_\tau^{\pi_{t-1}} v_\tau^{\pi_{t-1}}} \tag{Since $T_\tau^\pi$ is a $\gamma$-contraction} \\
\implies \norminf{v_\tau^{\pi_{t}} - v_\tau^{\pi_{t-1}}} & \leq \frac{1}{1 - \gamma} \, \norminf{T_\tau^{\pi_t} v_\tau^{\pi_{t-1}} - T_\tau^{\pi_{t-1}} v_\tau^{\pi_{t-1}}}
\end{align*}
In order to bound $\norminf{T_\tau^{\pi_t} v_\tau^{\pi_{t-1}} - T_\tau^{\pi_{t-1}} v_\tau^{\pi_{t-1}}}$, consider a fixed state $s$.  By definition of $T_\tau^{\pi}$, 
\begin{align*}
T_\tau^{\pi_t} v_\tau^{\pi_{t-1}}(s) - T_\tau^{\pi_{t-1}} v_\tau^{\pi_{t-1}}(s) &= \langle \pi_t(\cdot|s) - \pi_{t - 1}(\cdot|s), r(s, \cdot) \rangle\\
&\quad + \gamma \sum_{a} [\pi_t(a|s) - \pi_{t-1}(a|s)] \, \E_{s' \sim \cP(\cdot|s,a)} v_\tau^{\pi_{t-1}}(s') \\ &\quad + \tau [\cH(\pi_t(\cdot|s)) - \cH(\pi_{t-1}(\cdot|s))] \\
& \leq \norm{\pi_t(\cdot|s) - \pi_{t - 1}}_1 \, \norm{r(s, \cdot)}_\infty \\
&\quad + \gamma \, \norm{\pi_t(\cdot|s) - \pi_{t - 1}}_1 \, \norminf{v_\tau^{\pi_{t-1}}}\\ 
&\quad+ \tau [\cH(\pi_t(\cdot|s)) - \cH(\pi_{t-1}(\cdot|s))] \tag{By Holder's inequality} \\
& \leq \left(1 + \gamma \, H_\tau \right) \, \norm{\pi_t(\cdot|s) - \pi_{t - 1}(\cdot|s)}_{1} \\
&\quad + \tau \vert \cH(\pi_t(\cdot|s)) - \cH(\pi_{t-1}(\cdot|s)) \vert \tag{Since rewards are in $[0,1]$ and $v_\tau^{\pi_{t-1}}(s) \leq H_\tau$} \\
& \leq H_\tau \, \norm{\pi_t(\cdot|s) - \pi_{t - 1}(\cdot|s)}_{1} + \tau \vert\cH(\pi_t(\cdot|s)) - \cH(\pi_{t-1}(\cdot|s)) \vert \tag{Since $1 \leq 1 + \tau \ln(A) = (1-\gamma) \, H_\tau$} \\
\implies \norminf{T_\tau^{\pi_t} v_\tau^{\pi_{t-1}} - T_\tau^{\pi_{t-1}} v_\tau^{\pi_{t-1}}} & \leq H_\tau \, \left( \max_{s} \left[ TV_i + \tau (1 - \gamma) \, \vert \cH(\pi_t(\cdot|s)) - \cH(\pi_{t-1}(\cdot|s)) \vert \right] \right) \tag{Since $1 \le (1-\gamma) \, H_\tau$} \\
\end{align*}
Combining the above inequalities, 
\begin{align*} 
\norminf{v_\tau^{\pi_{t}} - v_\tau^{\pi_{t-1}}} & \leq \frac{H_\tau \,}{(1 - \gamma)} \, \max_{s} \left[ TV_i + \tau (1 - \gamma) \, \vert \cH(\pi_t(\cdot|s)) - \cH(\pi_{t-1}(\cdot|s)) \vert \right]\\  
\implies \norminf{q_\tau^{\pi_t} - q_\tau^{\pi_{t-1}}} & \leq  \frac{H_\tau}{(1 - \gamma)} \,\, \max_{s} \left[ TV_i + \tau (1 - \gamma) \,  \vert \cH(\pi_t(\cdot|s)) - \cH(\pi_{t-1}(\cdot|s)) \vert\right] \\
\end{align*}
\begin{align*}    
\implies \epsilon_t &\leq \underbrace{\frac{H_\tau \, \gamma^{m}}{(1 - \gamma)} \,  \, \max_{s} \left[ TV_i + \tau (1 - \gamma) \,\vert \cH(\pi_t(\cdot|s)) - \cH(\pi_{t-1}(\cdot|s)) \vert \right]}_{:= B_t} + \gamma^{m} \, \epsilon_{t-1} + \delta(\tau, \zeta)\\
\implies \epsilon_t & \leq B_t + \gamma^{m} \, \epsilon_{t-1} + \delta(\tau, \zeta)
\end{align*}
Bounding $\epsilon_0$, 
\begin{align*}
\epsilon_0 & = q_\tau^{\pi_0} - q_\zeta^{\pi_0} = \E_{s' \sim \cP(\cdot|s,a)} \left[ r(s,a) + \gamma v_\tau^\pi(s') \right] - \E_{s' \sim \cP(\cdot|s,a)} \left[ r(s,a) + \gamma v_\zeta^\pi(s') \right] \tag{By definition} \\
& = \gamma \E_{s' \sim \cP(\cdot|s,a)} [v_\tau^{\pi_0}(s') - v_\zeta^{\pi_0}(s')] \\
& = \gamma \E_{s' \sim \cP(\cdot|s,a)} \Bigg[ [v^{\pi_0}(s') + \tau \, \sum_{t = 0}^\infty \gamma^t \left[ \cH(\pi_0(\cdot|s_t)) | s_0 = s' \right]\\ 
&\hspace{2.1cm} - [v^{\pi_0}(s') + \zeta \, \sum_{t = 0}^\infty \gamma^t \left[ \cH(\pi_0(\cdot|s_t)) | s_0 = s' \right]] \Bigg] \tag{By definition} \\
& \leq \frac{|\tau - \zeta| \, \ln(A)}{1 - \gamma} = \delta(\tau, \zeta) \tag{Since $\cH(\pi(\cdot|s)) \leq \ln(A)$}
 \end{align*}
For a fixed $t \in [K]$, recursing from $i = t-1$ to $i = 1$ and using that
\begin{align*}
\epsilon_{t}  &\leq (\gamma^m)^{t} \epsilon_0 + \sum_{i = 1}^{t} (\gamma^m)^{t - i} \, (B_i + \delta(\tau, \zeta)) \\
& \leq \frac{H_\tau \, \gamma^m}{(1 - \gamma)} \, \sum_{i = 1}^{t}  (\gamma^m)^{t - i} \, \max_{s} \left[ TV_i + \tau (1 - \gamma) \,\vert\cH(\pi_i(\cdot|s)) - \cH(\pi_{i-1}(\cdot|s))\vert \right]\\
&\quad + \sum_{i = 0}^{t} (\gamma^m)^{t - i} \, \delta(\tau, \zeta) 
\end{align*}
Furthermore, if $TV_i \leq \frac{1}{2}$ for all $i \in [t]$, using~\cref{lemma:entropy-difference}, we can further upper-bound $\vert\cH(\pi_i(\cdot|s)) - \cH(\pi_{i-1}(\cdot|s))\vert$ to get that for any constant $C \in (0,1/2)$, 
\begin{align*}
\vert \cH(\pi_i(\cdot|s)) - \cH(\pi_{i-1}(\cdot|s)) \vert & \leq TV_i \,   \ln\left(\frac{A}{C} \right) +  \left(\frac{\ln(A-1)}{2} + \sqrt{2} \right) \, \sqrt{C}      
\end{align*}
Combining the above relations, in this case, we get that, 
\begin{align*}
\epsilon_{t} &\leq \frac{H_\tau \, \gamma^m}{(1 - \gamma)} \, \sum_{i = 1}^{t}  (\gamma^m)^{t - i} \, \max_{s} \left[ TV_i + \tau (1 - \gamma) \, \left[ TV_i \,   \ln\left(\frac{A}{C} \right) + \left(\frac{\ln(A)}{2} + \sqrt{2} \right) \, \sqrt{C}  \right] \right] \\ 
&\quad + \sum_{i = 0}^{t} (\gamma^m)^{t - i} \, \delta(\tau, \zeta) 
\end{align*}
\end{proof}

\begin{corollary}
Using the policy evaluation scheme in~\cref{eq:pe-general} with $\zeta = \tau$, for all $t \in [K]$, if for all $i \in [t]$, $TV_i := \norm{\pi_i(\cdot|s) - \pi_{i - 1}(\cdot|s)}_{1} \leq \frac{1}{2}$, then, for any constant $C \in (0,1/2)$, 
\begin{align*}
\epsilon_t & \leq \frac{H_\tau \, \gamma^m}{(1 - \gamma)} \, \sum_{i = 1}^{t}  (\gamma^m)^{t - i} \, \max_{s} \left[ TV_i + \tau (1 - \gamma) \, \left[ TV_i \,   \ln\left(\frac{A}{C} \right) + \left(\frac{\ln(A)}{2} + \sqrt{2} \right) \, \sqrt{C}  \right] \right] 
\end{align*}
\label{cor:error-analysis-entropy}    
\end{corollary}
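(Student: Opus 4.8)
The plan is to obtain this result as an immediate specialization of \cref{thm:error-analysis-general}. That theorem already supplies, under the hypothesis $TV_i \le \tfrac{1}{2}$ for all $i \in [t]$ and for any constant $C \in (0,1/2)$, a bound on $\epsilon_t$ consisting of two pieces: a geometrically weighted sum of per-step policy-change terms (the total-variation distance $TV_i$ together with an entropy-difference term controlled via \cref{lemma:entropy-difference}), plus a second geometric sum $\sum_{i=0}^{t} (\gamma^m)^{t-i}\,\delta(\tau,\zeta)$ that accounts for the mismatch between the actor entropy coefficient $\tau$ and the critic entropy coefficient $\zeta$.

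First I would set $\zeta = \tau$ throughout. By the definition $\delta(\tau,\zeta) = \frac{|\tau - \zeta|\,\ln(A)}{1-\gamma}$, this substitution yields $\delta(\tau,\tau) = 0$. Consequently the entire second summation $\sum_{i=0}^{t}(\gamma^m)^{t-i}\,\delta(\tau,\zeta)$ vanishes, and only the first (policy-change) piece of the refined bound in \cref{thm:error-analysis-general} survives, unchanged. This is exactly the inequality asserted in the corollary.

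There is essentially no obstacle here: the analytical work is already carried out inside \cref{thm:error-analysis-general}, and the corollary merely records the clean form that emerges when the two entropy coefficients coincide and the cross-coefficient discrepancy $\delta$ collapses to zero. The only point worth verifying is that the hypotheses transfer cleanly — the theorem's refined bound requires $TV_i \le \tfrac{1}{2}$ for all $i \in [t]$ and an arbitrary constant $C \in (0,1/2)$, both of which are reproduced verbatim in the corollary's statement — so invoking the theorem is fully justified and completes the proof.
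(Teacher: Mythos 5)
Your proposal is correct and matches the paper's own proof exactly: the corollary is obtained by setting $\zeta = \tau$ in~\cref{thm:error-analysis-general}, whereupon $\delta(\tau,\tau) = 0$ and the second geometric sum vanishes, leaving precisely the stated bound. Your additional remark that the hypotheses ($TV_i \le \tfrac{1}{2}$ and $C \in (0,1/2)$) carry over verbatim is a correct and worthwhile sanity check, though the paper leaves it implicit.
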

\begin{proof}
Setting $\zeta = \tau$ in~\cref{thm:error-analysis-general}.     
\end{proof}

\begin{corollary}
Using the policy evaluation scheme in~\cref{eq:pe-general} with $\zeta = 0$, for all $t \in [K]$, if for all $i \in [t]$, $TV_i := \norm{\pi_i(\cdot|s) - \pi_{i - 1}(\cdot|s)}_{1} \leq \frac{1}{2}$, then, for any constant $C \in (0,1/2)$, 
\begin{align*}
\epsilon_t & \leq \frac{H_\tau \, \gamma^m}{(1 - \gamma)} \, \sum_{i = 1}^{t}  (\gamma^m)^{t - i} \, \max_{s} \left[ TV_i + \tau (1 - \gamma) \, \left[ TV_i \,   \ln\left(\frac{A}{C} \right) + \left(\frac{\ln(A)}{2} + \sqrt{2} \right) \, \sqrt{C} \right] \right]\\
&\quad + \frac{\tau \, \ln(A)}{(1 - \gamma)^2}
\end{align*}
\label{cor:error-analysis-no-entropy}    
\end{corollary}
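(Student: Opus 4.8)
The plan is to specialize \cref{thm:error-analysis-general} to the case $\zeta = 0$, since that theorem already supplies almost the entire bound. First I would substitute $\zeta = 0$ into the definition $\delta(\tau, \zeta) := \frac{|\tau - \zeta| \, \ln(A)}{1 - \gamma}$, which collapses to $\delta(\tau, 0) = \frac{\tau \, \ln(A)}{1 - \gamma}$. Because the hypothesis $TV_i \leq \tfrac{1}{2}$ for all $i \in [t]$ is exactly the condition required by the second (refined) conclusion of \cref{thm:error-analysis-general}, I can invoke that conclusion verbatim for an arbitrary constant $C \in (0, 1/2)$. This immediately reproduces the leading double-sum term of the target inequality, so the only remaining piece is to simplify the residual contribution $\sum_{i = 0}^{t} (\gamma^m)^{t - i} \, \delta(\tau, 0)$.

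The remaining step is a routine geometric-series estimate. Re-indexing the sum by $j = t - i$ gives $\sum_{i = 0}^{t} (\gamma^m)^{t - i} = \sum_{j = 0}^{t} (\gamma^m)^{j} \leq \sum_{j = 0}^{\infty} (\gamma^m)^{j} = \frac{1}{1 - \gamma^m}$. Since $m \geq 1$ and $\gamma \in [0,1)$, we have $\gamma^m \leq \gamma$, hence $1 - \gamma^m \geq 1 - \gamma$ and therefore $\frac{1}{1 - \gamma^m} \leq \frac{1}{1 - \gamma}$. Multiplying by $\delta(\tau, 0) = \frac{\tau \, \ln(A)}{1 - \gamma}$ yields $\sum_{i = 0}^{t} (\gamma^m)^{t - i} \, \delta(\tau, 0) \leq \frac{\tau \, \ln(A)}{(1 - \gamma)^2}$, which is precisely the additive constant appearing in the statement.

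Combining these two observations gives the claimed bound, so there is essentially no hard part here: the corollary is a direct specialization of the general theorem, and the only genuine computation is the geometric-sum estimate, which relies solely on $\gamma^m \leq \gamma$ for $m \geq 1$. If I wanted to be maximally careful, I would note that the refined (square-root) form of the conclusion I am quoting from \cref{thm:error-analysis-general} is itself obtained by bounding the entropy differences $|\cH(\pi_i(\cdot|s)) - \cH(\pi_{i-1}(\cdot|s))|$ via \cref{lemma:entropy-difference} under the assumption $TV_i \leq \tfrac{1}{2}$; the present corollary inherits that derivation unchanged and only modifies the $\zeta$-dependent tail term, so no new difficulty is introduced.
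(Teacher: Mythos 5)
Your proposal is correct and follows the paper's own route exactly: the paper proves this corollary simply by setting $\zeta = 0$ in \cref{thm:error-analysis-general}, leaving the geometric-series bound $\sum_{i=0}^{t}(\gamma^m)^{t-i}\,\delta(\tau,0) \leq \frac{1}{1-\gamma^m}\cdot\frac{\tau\ln(A)}{1-\gamma} \leq \frac{\tau\ln(A)}{(1-\gamma)^2}$ implicit. You make that last estimate explicit, which is the only computation actually needed, and it is carried out correctly.
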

\begin{proof}
Setting $\zeta = 0$ in~\cref{thm:error-analysis-general}.     
\end{proof}

The following proposition shows that the objective in~\cref{eq:policy-update-rkl} admits a closed-form solution. Substituting the intermediate policies $\NPG$ and $\SPMA$ into the closed-form expression in~\cref{app:eq-soft-update} yields their entropy-regularized (actor entropy) counterparts, referred to as soft $\NPG$ and soft $\SPMA$.
\begin{proposition}
If $\alphat := \frac{1}{1 + \taut}$, the closed-form solution for the proximal update in~\cref{eq:policy-update-rkl} for any $s,a$ is given as, 
\begin{align}
\pitt(a|s) &= \frac{[\pith(a|s)]^{\alphat}}{\sum_{a'} [\pith(a'|s)]^\alphat}
\label{app:eq-soft-update}    
\end{align}
\label{app:prop-soft-update}
\end{proposition}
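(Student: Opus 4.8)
The plan is to reduce the update to a per-state strictly convex program over the simplex and solve it in closed form. Fixing a state $s$ and writing $p := \pith(\cdot|s)$, the objective in~\cref{eq:policy-update-rkl} expands, using $\cH(\pi) = -\sum_a \pi(a)\ln\pi(a)$, as
\begin{align*}
\text{KL}(\pi \,\|\, p) - \taut\,\cH(\pi) = (1+\taut)\sum_a \pi(a)\ln\pi(a) - \sum_a \pi(a)\ln p(a),
\end{align*}
which is strictly convex in $\pi$ on the probability simplex $\Delta$ (the negative-entropy term is strictly convex and the remaining term is linear). Hence the minimizer is unique, and it suffices to exhibit a feasible point satisfying the first-order optimality conditions.

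First I would introduce the candidate distribution $q(a) := [p(a)]^{\alphat}/Z$ with $Z := \sum_{a'} [p(a')]^{\alphat}$ and $\alphat = 1/(1+\taut)$, which is exactly the claimed solution~\cref{app:eq-soft-update}. The cleanest route is to \emph{complete the KL}: substituting $\ln q(a) = \alphat \ln p(a) - \ln Z$ and using $(1+\taut)\,\alphat = 1$ together with $\sum_a \pi(a) = 1$ shows that
\begin{align*}
(1+\taut)\,\text{KL}(\pi \,\|\, q) = \Big[(1+\taut)\sum_a \pi(a)\ln\pi(a) - \sum_a \pi(a)\ln p(a)\Big] + (1+\taut)\ln Z.
\end{align*}
Since the bracketed term is precisely the objective and $(1+\taut)\ln Z$ is independent of $\pi$, minimizing the objective over $\Delta$ is equivalent to minimizing $\text{KL}(\pi \,\|\, q)$. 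By Gibbs' inequality the latter is minimized uniquely at $\pi = q$, yielding $\pitt(\cdot|s) = q$, which is~\cref{app:eq-soft-update}.

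Alternatively, I could proceed by Lagrange multipliers: forming the Lagrangian with a multiplier $\lambda$ for the constraint $\sum_a \pi(a) = 1$ and setting $\partial/\partial\pi(a) = 0$ gives $(1+\taut)(\ln\pi(a)+1) - \ln p(a) + \lambda = 0$, so $\pi(a) \propto [p(a)]^{1/(1+\taut)}$, and normalizing recovers the same expression; the non-negativity constraints are inactive since $[p(a)]^{\alphat} > 0$ whenever $p(a) > 0$.

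I expect no serious obstacle here, as the computation is routine. The only points requiring mild care are (i) justifying uniqueness via strict convexity of the negative entropy on $\Delta$ (equivalently, of the KL divergence in its first argument), and (ii) handling the support: if $p(a) = 0$ for some $a$, feasibility of the KL term forces $\pi(a) = 0$, and the formula remains valid under the convention $0^{\alphat} = 0$, so the argument goes through on the restricted support.
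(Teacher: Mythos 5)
Your proof is correct and follows essentially the same route as the paper's: both expand $\text{KL}(\pi\|\pith) - \taut\cH(\pi)$ into $(1+\taut)\sum_a\pi(a)\ln\pi(a) - \sum_a\pi(a)\ln\pith(a|s)$ and recognize it (up to a positive scaling and a $\pi$-independent constant) as a KL divergence to $[\pith]^{\alphat}$, whose minimizer over the simplex is the normalized distribution. Your version merely makes explicit the normalization constant $\ln Z$, the appeal to Gibbs' inequality for uniqueness, and the support/boundary considerations that the paper leaves implicit in the phrase ``KL projection onto the simplex results in normalization.''
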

\begin{proof}
    \begin{align*}
    \pitt(\cdot|s) &= \argmin_{\pi(\cdot|s) \in \Delta} \left[\text{KL}(\pi(\cdot|s) || \pith(\cdot|s)) - \taut \cH(\pi(\cdot|s)) \right]\\
    &= \argmin_{\pi(\cdot|s) \in \Delta} \E_{a \sim \pi(.|s)}\left[(1 + \tau_t) \, \ln(\pi(.|s))- \ln(\pith(\cdot|s))\right] \tag{Using the definition of $H(\pi(\cdot|s))$} \\
    &= \argmin_{\pi(\cdot|s) \in \Delta} \E_{a \sim \pi(.|s)}\left[\ln(\pi(.|s))- \alphat \, \ln(\pith(\cdot|s))\right] \tag{Since $\alphat = \frac{1}{1+\tau_t}$} \\
    &= \argmin_{\pi(\cdot|s) \in \Delta} \E_{a \sim \pi(.|s)}\left[\ln(\pi(.|s)) - \ln([\pith(\cdot|s)]^{\alphat})\right]\\
    &= \argmin_{\pi(\cdot|s) \in \Delta} \text{KL}(\pi(.|s) || \pith(\cdot|s)^{\alphat}) \tag{By definition of the KL divergence}
\end{align*}
Using the fact that KL projection onto the simplex results in normalization, we get that, 
\begin{align*}
    \pitt(a|s) &= \frac{[\pith(a|s)]^{\alphat}}{\sum_{a'} [\pith(a'|s)]^\alphat}
\end{align*}
\end{proof}

Note that when $\tau = 0$, we have $\taut = 0$ and $\alphat = 1$ for all $t$, recovering the standard unregularized updates for both $\NPG$ and $\SPMA$. For $\tau > 0$, the soft updates can be expressed as: for any $s, a$, with $\alphat = \frac{1}{1 + \taut}$ and $\taut = \etat \, \tau$, 
\begin{align}
\pitt(a|s) 
    &= \frac{[\pit(a|s)]^\alphat \, \exp(\etat \, \alphat \, q_\zeta^t(s,a))}{\cZ_t}, \\
\text{with} \quad 
\cZ_t &= \sum_{a'} [\pit(a'|s)]^\alphat \, \exp(\etat \, \alphat \, q_\zeta^t(s,a')), 
       &&\textbf{(Soft-NPG)} \label{eq:soft-NPG} \notag\\
\pitt(a|s) 
    &= \frac{[\pit(a|s)]^\alphat \, \left[1 + \eta_t \, \left(q_\zeta^{t}(s,a) - v_\zeta^t(s) \right) \right]^\alphat}{\cZ_t},\\ 
\text{with} \quad 
\cZ_t &= \sum_{a'} [\pit(a'|s)]^\alphat \, \left[1 + \eta_t \, \left(q_\zeta^{t}(s,a') - v_\zeta^t(s) \right) \right]^\alphat, 
       &&\textbf{(Soft-SPMA)} \label{eq:soft-SPMA} \notag
\end{align}

\subsubsection{\texorpdfstring{Policy Error Bound for (soft) $\NPG$}{Policy Error Bound for (soft) NPG}}
\label{app:pe-soft-npg-cor-proofs}

In the next corollary, we use~\cref{thm:error-analysis-general} with $\zeta = \tau$ to instantiate the policy error bound for soft $\NPG$ with entropy-regularized policy evaluation. 

\begin{corollary}[Policy evaluation with $\zeta = \tau$]
Using the policy evaluation update in~\cref{eq:pe-general} with $\zeta = \tau$, for soft $\NPG$ with $\etat = \frac{1}{c + \tau \, (t+1)}$ and a constant $c \geq \max \left\{\frac{8 \, (1 + \tau \, \ln(A))}{(1-\gamma)}, 32\,\tau\,\ln(A) \right\}$, for all $t \in [K]$, $\epsilon_t := \norminf{\bepsilon_{t}}$ can be bounded as: 
\begin{align*}
\epsilon_t & \leq \frac{8 (1+\tau \ln(A)) \gamma^m}{(1 - \gamma)^3} \, \Bigg[ \left(1 + \tau \ln\left(A \, K^4 \right) \right) \\ &\hspace{3.5cm}\left(\left(\ln(A \, K^4) + \frac{1+\tau \ln(A)}{\tau(1-\gamma)}\right)  \, \left(\frac{1}{t}+(\gamma^m)^{t/2}\right)
+ \frac{\sqrt{\tau}}{K}\right)\\
&\hspace{3.5cm} + \tau \, \left(\ln(A) + 1 \right) \, \frac{1}{K^2} \Bigg]
\end{align*}
\label{cor:npg-pe-entropy}    
\end{corollary}

\begin{proof}
For a fixed iteration $t \in [K]$ and state $s \in \cS$, let us first bound $\norm{\pi_{t+1}(\cdot|s) - \pi_{t}(\cdot|s)}_{1}$. 
\begin{align*}
\norm{\pi_{t+1}(\cdot|s) - \pi_{t}(\cdot|s)}_1 &\leq \norm{\pi_{t+1}(\cdot|s) - \pi_{t+1/2}(\cdot|s)}_1 + \norm{\pi_{t+1/2}(\cdot|s) - \pi_{t}(\cdot|s)}_1 \tag{Triangle inequality} \\
\end{align*}
We first bound $\norm{\pi_{t+1/2}(\cdot|s) - \pi_{t}(\cdot|s)}_1$. Using the mirror descent view of $\NPG$~\citep{xiao2022convergence}, the update can be written as:
\begin{align*}
& \pi_{t+1/2}(\cdot|s) = \argmin_{\pi \in \Delta} \left[-\etat \langle q_{\tau}^t(s, \cdot), \pi(\cdot|s) \rangle + \text{KL}(\pi(\cdot|s) || \pi_{t}(\cdot|s)) \right] \\
\implies & -\etat \langle q_{\tau}^t(s, \cdot), \pi_{t+1/2}(\cdot|s) \rangle + \text{KL}(\pi_{t+1/2}(\cdot|s) || \pi_{t}(\cdot|s)) \leq -\etat \langle q_{\tau}^t(s, \cdot), \pi_{t}(\cdot|s) \rangle\\
&\hspace{8.1cm} + \text{KL}(\pi_{t}(\cdot|s) || \pi_{t}(\cdot|s)) \\
\implies & \frac{1}{2} \norm{\pi_{t+1/2}(\cdot|s) - \pi_{t}(\cdot|s)}_1^2 \leq \text{KL}(\pi_{t+1/2}(\cdot|s) || \pi_{t}(\cdot|s)) \leq \etat \langle q_{\tau}^t(s, \cdot), \pi_{t+1/2}(\cdot|s) - \pi_{t}(\cdot|s) \rangle \tag{By Pinsker's inequality} \\
& \hspace{25ex} \leq \etat \norminf{q_{\tau}^t(s, \cdot)} \, \norm{\pi_{t+1/2}(\cdot|s) - \pi_{t}(\cdot|s)}_{1} \tag{By Holder's inequality} \\
\implies & \norm{\pi_{t+1/2}(\cdot|s) - \pi_{t}(\cdot|s)}_1 \leq 2 \etat \, H_\tau \tag{Since $\norminf{q_{\tau}^t(s, \cdot)} \leq H_\tau$}
\end{align*}
In order to bound $\norm{\pi_{t+1}(\cdot|s) - \pi_{t+1/2}(\cdot|s)}_1$, we use the~\cref{eq:policy-update-rkl} update. Specifically, 
\begin{align*}
\pi_{t+1}(\cdot|s) = \argmin_{\pi \in \Delta} \left[\text{KL}(\pi(\cdot|s) || \pi_{t+1/2}(\cdot|s)) - \taut \cH(\pi(\cdot|s)) \right]
\end{align*}
\vspace{-1em}
\begin{align*}
\implies \text{KL}(\pi_{t+1}(\cdot|s) || \pi_{t+1/2}(\cdot|s)) - \taut \cH(\pi_{t+1}(\cdot|s)) & \leq \text{KL}(\pi_{t+1/2}(\cdot|s) || \pi_{t+1/2}(\cdot|s))\\
&\quad- \tau_t \cH(\pi_{t+1/2}(\cdot|s)) \\
\frac{1}{2} \, \norm{\pi_{t+1}(\cdot|s) - \pi_{t+1/2}(\cdot|s)}_1^2 &\leq \text{KL}(\pi_{t+1} || \pi_{t+1/2}(\cdot|s)) \tag{By Pinsker's inequality}\\ & \leq \tau_t \cH(\pi_{t+1}(\cdot|s)) - \tau_t \cH(\pi_{t+1/2}(\cdot|s)) \\
& \le \taut \, \ln(A)\\
\implies \norm{\pi_{t+1}(\cdot|s) - \pi_{t+1/2}(\cdot|s)}_1 &\le \sqrt{2 \, \tau \, \etat \, \ln(A)} \tag{since $\taut = \tau \, \etat$}
\end{align*}
Combining the above relations, 
\begin{align}
\norm{\pi_{t+1}(\cdot|s) - \pi_{t}(\cdot|s)}_1 & \leq \sqrt{2 \, \tau \, \etat \, \ln(A)} + 2 \, \etat \, H_\tau  
\label{eq:npg-prob-diff}
\end{align}
Using~\cref{eq:npg-prob-diff} in~\cref{thm:error-analysis-general}, for the special case $\norm{\pi_{t+1}(\cdot|s) - \pi_{t}(\cdot|s)}_1 \le \frac{1}{2}$ for all $t \in [K]$, $\etat$ must satisfy the following conditions:
\begin{itemize}
    \item $2 \, \etat \, H_\tau  \leq \frac{1}{4} \implies \etat \leq \frac{(1-\gamma)}{8 \, (1 + \tau \, \ln(A))}$
    \item $\sqrt{2 \, \tau \, \etat \, \ln(A)} \leq \frac{1}{4} \implies \etat \leq \frac{1}{32 \, \tau \, \ln(A)}$ 
\end{itemize}
For $\etat = \frac{1}{c + \tau (t+1)} \leq \frac{1}{c}$, it is thus sufficient to ensure that, 
\begin{align*}
c \geq \max \left\{\frac{8 \, (1 + \tau \, \ln(A))}{(1-\gamma)}, 32 \, \tau \, \ln(A) \right\}    
\end{align*}
Given this constraint on $c$, we use~\cref{thm:error-analysis-general} with $\zeta = \tau$ and $TV_i := \norm{\pi_i(\cdot|s) - \pi_{i - 1}(\cdot|s)}_{1}$, to get that, for any constant $C \in (0,1/2)$, for all $t \in [K]$,
\begin{align*}
\epsilon_t & \leq \frac{H_\tau \, \gamma^m}{(1 - \gamma)} \, \sum_{i = 1}^{t}  (\gamma^m)^{t - i} \, \max_{s} \left[ TV_i + \tau (1 - \gamma) \, \left[ TV_i \,   \ln\left(\frac{A}{C} \right) + \left(\frac{\ln(A)}{2} + \sqrt{2} \right) \, \sqrt{C}  \right] \right]
\end{align*}
Observe that the choice of $c$ ensures $\norm{\pi_{t+1}(\cdot|s) - \pi_{t}(\cdot|s)}_1 \le \frac{1}{2}$. Using~\cref{lemma:prob-difference} we obtain,
\begin{align*}
& \norm{\pi_{t+1}(\cdot|s) - \pi_{t}(\cdot|s)}_1 \leq 4 \, \tau \, \etat \, \ln(\frac{A}{C}) + 2 \, \sqrt{\tau} \, C^{\frac{1}{4}} + 2 \, \etat \, H_\tau \\
\intertext{Combining the above with the upper bound on $\epsilon_t$, we get,}
\epsilon_t & \leq \frac{ H_\tau \, \gamma^m}{(1 - \gamma)} \, \sum_{i = 1}^t (\gamma^m)^{t-i} \,\Bigg[ \left(1 + \tau \, (1-\gamma) \, \ln(\frac{A}{C}) \right) \, \left(2 \, \eta_i \, (2\, \tau \, \ln(\frac{A}{C}) + H_{\tau}) + 2 \, \sqrt{\tau} \, C^{\frac{1}{4}} \right)\\ 
&\hspace{3.9cm}+ \tau (1 - \gamma) \left(\frac{\ln(A)}{2} + \sqrt{2} \right) \, \sqrt{C} \Bigg] \\
\intertext{Setting $C = \frac{1}{K^4}$,}
& \leq \frac{H_\tau \, \gamma^m}{(1 - \gamma)} \, \Bigg[2\, \left(1 + \tau \,  \ln\left(A \, K^4 \right) \right) \, \left(\left(2 \ln(A \, K^4) + \frac{H_{\tau}}{\tau}\right)  \, \sum_{i = 1}^t  \frac{(\gamma^m)^{t-i}}{(i+1)} + \frac{\sqrt{\tau}}{K(1-\gamma^m)}\right) \\ 
&\hspace{1.8cm}+ \frac{\tau}{1-\gamma^m} \, \Big(\frac{\ln(A)}{2} + \sqrt{2} \Big) \, \frac{1}{K^2} \Bigg] \tag{since $\gamma < 1$}\\
\intertext{Using~\cref{lemma:sequence-sum}, we can bound $\sum_{i = 1}^t  \frac{(\gamma^m)^{t-i}}{(i+1)} \leq \frac{2}{1 - \gamma^m}  \, (\frac{1}{t}+(\gamma^m)^{t/2})\leq \frac{2}{1 - \gamma}  \, (\frac{1}{t}+(\gamma^m)^{t/2})$.}
\implies \epsilon_t & \leq \frac{H_\tau \, \gamma^m}{(1 - \gamma)} \, \Bigg[4 \left(1 + \tau \, \ln\left(A \, K^4 \right) \right)\\
&\hspace{1.8cm}
\left(\left(2 \ln(A\,K^4) + \frac{H_{\tau}}{\tau}\right) \, \left(\frac{1}{t}+(\gamma^m)^{t/2}\right) + \frac{\sqrt{\tau}}{K(1-\gamma^m)}\right) \\
&\hspace{1.8cm} + \frac{\tau}{1-\gamma^m} \, \Big(\frac{\ln(A)}{2} + \sqrt{2} \Big) \, \frac{1}{K^2} \Bigg]\\
& \leq \frac{8 H_\tau \, \gamma^m}{(1 - \gamma)^2} \, \Bigg[\Big(1 + \tau \,  \ln\left(A \, K^4 \right) \Big) \, \Big(\big(\ln(A \, K^4) + \frac{H_{\tau}}{\tau}\big) \, (\frac{1}{t}+(\gamma^m)^{t/2}) + \frac{\sqrt{\tau}}{K}\Big)\\
&\hspace{2.1cm}
+ \tau \, \left(\ln(A) + 1 \right) \, \frac{1}{K^2} \Bigg]\tag{Since $\gamma < 1$}\\
\end{align*}
\end{proof}

In the next corollary, we use~\cref{thm:error-analysis-general} with $\zeta = 0$ to instantiate the policy error bound for (soft) NPG with entropy-regularized policy evaluation.
\begin{corollary}[Policy evaluation with $\zeta = 0$]
Using the policy evaluation update in~\cref{eq:pe-general} with $\zeta = 0$, $\epsilon_t := \norminf{\bepsilon_{t}}$ can be bounded as: 
\begin{itemize}
    \item \textbf{Soft $\NPG$}: If $\etat = \frac{1}{c + \tau \, (t+1)}$ for a constant $c \geq \max \left\{\frac{8 \, (1 + \tau \, \ln(A))}{(1-\gamma)}, 32 \, \tau \, \ln(A)\right\}$, then, for all $t \in [K]$,
    \begin{align*}
    \epsilon_t &\le \frac{8 (1+\tau \ln(A)) \gamma^m}{(1 - \gamma)^3} \, \bigg[ \Big(1 + \tau \ln\left(A \, K^4 \right) \Big)\\ &\hspace{3.4cm}\left(\left(\ln(A \, K^4) + \frac{1+\tau \ln(A)}{\tau(1-\gamma)}\right)  \, \left(\frac{1}{t}+(\gamma^m)^{t/2}\right)
    + \frac{\sqrt{\tau}}{K}\right)\\
    &\hspace{3.4cm} + \tau \, \left(\ln(A) + 1 \right) \, \frac{1}{K^2} \bigg] + \frac{\tau}{(1-\gamma)^2}\ln{(A)} 
    \end{align*} 
    \item \textbf{$\NPG$}: If $\etat = \eta = \frac{ (1-\gamma) \, \sqrt{\ln(A)}}{\sqrt{K}}$, then, for all $t \in [K]$, 
\begin{align*}
    \epsilon_t & \leq \frac{2 \, \sqrt{\ln(A)} \, \gamma^m}{(1 - \gamma)^3} \, \frac{1}{\sqrt{K}} 
\end{align*}
\end{itemize}
\label{cor:npg-pe-no-entropy}
\end{corollary}
\begin{proof}
Using~\cref{thm:error-analysis-general} for soft $\NPG$ and~\cref{cor:error-analysis-no-entropy} for $\NPG$, and following the same proof as~\cref{cor:npg-pe-entropy}.    
\end{proof}

\subsubsection{\texorpdfstring{Policy Error Bound for (soft) $\SPMA$}{Policy Error Bound for (soft) SPMA}}
\label{app:pe-soft-spma-cor-proofs}

In the next corollary, we use~\cref{thm:error-analysis-general} with $\zeta = \tau$ to instantiate the policy error bound for soft SPMA with entropy-regularized policy evaluation. 
\begin{corollary}[Policy evaluation with $\zeta = \tau$]
Using the policy evaluation update in~\cref{eq:pe-general} with $\zeta = \tau$, for soft $\SPMA$ with $\etat = \frac{1}{c + \tau \, (t+1)}$ and a constant $c \geq \max \left\{\frac{4(1 + \tau \, \ln(A))}{\, (1-\gamma)}, 32 \, \tau \, \ln(A)\right\}$, for all $t \in [K]$, $\epsilon_t$ can be bounded as: 
\begin{align*}
\epsilon_t := \norminf{\bepsilon_{t}} & \leq \frac{8 (1+\tau \ln(A)) \gamma^m}{(1 - \gamma)^3} \, \Bigg[ \left(1 + \tau \ln\left(A \, K^4 \right) \right) \\ &\hspace{3.5cm}\left(\left(\ln(A \, K^4) + \frac{1+\tau \ln(A)}{\tau(1-\gamma)}\right)  \, \left(\frac{1}{t}+(\gamma^m)^{t/2}\right)
+ \frac{\sqrt{\tau}}{K}\right)\\
&\hspace{3.5cm} + \tau \, \left(\ln(A) + 1 \right) \, \frac{1}{K^2} \Bigg]
\end{align*}
\label{cor:spma-pe-entropy}    
\end{corollary}
\begin{proof}
For a fixed iteration $t \in [K]$ and state $s \in \cS$, let us first bound $\norm{\pi_{t+1}(\cdot|s) - \pi_{t}(\cdot|s)}_{1}$. 
\begin{align*}
\norm{\pi_{t+1}(\cdot|s) - \pi_{t}(\cdot|s)}_1 &\leq \norm{\pi_{t+1}(\cdot|s) - \pi_{t+1/2}(\cdot|s)}_1 + \norm{\pi_{t+1/2}(\cdot|s) - \pi_{t}(\cdot|s)}_1 \tag{Triangle inequality} \\
& \leq \norm{\pi_{t+1}(\cdot|s) - \pi_{t+1/2}(\cdot|s)}_1 + \etat \sum_{a} \pi_{t}(a) \, [q^t_\tau(s,a) - v^t_\tau(s)] \tag{By the $\SPMA$ update in~\cref{eq:spma}} \\
\implies \norm{\pi_{t+1}(\cdot|s) - \pi_{t}(\cdot|s)}_1 & \leq \norm{\pi_{t+1}(\cdot|s) - \pi_{t+1/2}(\cdot|s)}_1 + \etat H_\tau \tag{Since $\vert q^t_\tau(s,a) - v^t_\tau(s) \vert \leq H_\tau$}
\end{align*}
In order to bound $\norm{\pi_{t+1}(\cdot|s) - \pi_{t+1/2}(\cdot|s)}_1$, we use the~\cref{eq:policy-update-rkl} update. Specifically, 
\begin{align*}
\pi_{t+1}(\cdot|s) = \argmin_{\pi \in \Delta} \left[\text{KL}(\pi || \pi_{t+1/2}(\cdot|s)) - \tau_t \cH(\pi(\cdot|s)) \right]
\end{align*}
\vspace{-1em}
\begin{align*}
\implies \text{KL}(\pi_{t+1}(\cdot|s) || \pi_{t+1/2}(\cdot|s)) - \tau_t \cH(\pi_{t+1}(\cdot|s)) & \leq \text{KL}(\pi_{t+1/2}(\cdot|s) || \pi_{t+1/2}(\cdot|s))\\ 
&\quad - \tau_t \cH(\pi_{t+1/2}(\cdot|s)) \\
\frac{1}{2} \, \norm{\pi_{t+1}(\cdot|s) - \pi_{t+1/2}(\cdot|s)}_1^2 &\leq \text{KL}(\pi_{t+1}(\cdot|s) || \pi_{t+1/2}(\cdot|s)) \tag{By Pinsker's inequality} \\ & \leq \tau_t \cH(\pi_{t+1}(\cdot|s)) - \tau_t \cH(\pi_{t+1/2}(\cdot|s)) \\
& \leq \tau_t \ln(A) \tag{Since $\cH(\pi) \in [0, \ln(A)]$} \\
\implies \norm{\pi_{t+1}(\cdot|s) - \pi_{t+1/2}(\cdot|s)}_1 & \leq \sqrt{2 \, \tau \, \etat \, \ln(A)} \tag{Since $\tau_t = \tau \, \etat$}
\end{align*}
Combining the above relations, 
\begin{align}
\norm{\pi_{t+1}(\cdot|s) - \pi_{t}(\cdot|s)}_1 & \leq \sqrt{2 \, \tau \, \etat \, \ln(A)} + \etat \, H_\tau    
\label{eq:spma-prob-diff}
\end{align}
In order to use~\cref{thm:error-analysis-general}, we need to ensure that $\norm{\pi_{t+1}(\cdot|s) - \pi_{t}(\cdot|s)}_1 \leq \frac{1}{2}$ for all $t \in [K]$. Using~\cref{eq:spma-prob-diff}, it is sufficient to ensure that $\etat$ satisfies the following relations:

\begin{itemize}
    \item $ \etat \, H_\tau  \leq \frac{1}{4} \implies \etat \leq \frac{(1-\gamma)}{4 \, (1 + \tau \, \ln(A))}$
    \item $\sqrt{2 \, \tau \, \etat \, \ln(A)} \leq \frac{1}{4} \implies \etat \leq \frac{1}{32 \, \tau \, \ln(A)}$
\end{itemize}

For $\etat = \frac{1}{c + \tau (t+1)} \leq \frac{1}{c}$, it is thus sufficient to ensure that, 
\begin{align*}
c \geq \max \left\{\frac{4 \, (1 + \tau \, \ln(A))}{(1-\gamma)}, 32 \, \tau \, \ln(A) \right\}    
\end{align*}
Given this constraint on $c$, we use~\cref{thm:error-analysis-general} with $\zeta = \tau$ and $TV_i := \norm{\pi_i(\cdot|s) - \pi_{i - 1}(\cdot|s)}_{1}$, to get that, for any constant $C \in (0,1/2)$, for all $t \in [K]$,
\begin{align*}
\epsilon_t & \leq \frac{H_\tau \, \gamma^m}{(1 - \gamma)} \, \sum_{i = 1}^t (\gamma^m)^{t-i} \, \max_{s} \left[ TV_i + \tau (1 - \gamma) \, \left( TV_i \,   \ln\left(\frac{A}{C} \right) +  \left(\frac{\ln(A)}{2} + \sqrt{2} \right) \, \sqrt{C} \right) \right]  
\end{align*}
Observe that the choice of $c$ ensures $\norm{\pi_{t+1}(\cdot|s) - \pi_{t}(\cdot|s)}_1 \le \frac{1}{2}$. Using~\cref{lemma:prob-difference} we obtain,
\begin{align*}
& \norm{\pi_{t+1}(\cdot|s) - \pi_{t}(\cdot|s)}_1 \leq 4 \, \tau \, \etat \, \ln(\frac{A}{C}) + 2 \, \sqrt{\tau} \, C^{\frac{1}{4}} + 2 \, \etat \, H_\tau \\
\intertext{Combining the above with the upper bound on $\epsilon_t$, we get,}
\epsilon_t & \leq \frac{ H_\tau \, \gamma^m}{(1 - \gamma)} \, \sum_{i = 1}^t (\gamma^m)^{t-i} \,\Bigg[ \left(1 + \tau \, (1-\gamma) \, \ln(\frac{A}{C}) \right) \, \left(2 \, \eta_i \, (2\, \tau \, \ln(\frac{A}{C}) + H_{\tau}) + 2 \, \sqrt{\tau} \, C^{\frac{1}{4}} \right)\\ 
&\hspace{3.9cm}+ \tau (1 - \gamma) \left(\frac{\ln(A)}{2} + \sqrt{2} \right) \, \sqrt{C} \Bigg] \\
\intertext{Setting $C = \frac{1}{K^4}$,}
& \leq \frac{H_\tau \, \gamma^m}{(1 - \gamma)} \, \Bigg[2\, \left(1 + \tau \,  \ln\left(A \, K^4 \right) \right) \, \left(\left(2 \ln(A \, K^4) + \frac{H_{\tau}}{\tau}\right)  \, \sum_{i = 1}^t  \frac{(\gamma^m)^{t-i}}{(i+1)} + \frac{\sqrt{\tau}}{K(1-\gamma^m)}\right) \\ 
&\hspace{1.8cm}+ \frac{\tau}{1-\gamma^m} \, \Big(\frac{\ln(A)}{2} + \sqrt{2} \Big) \, \frac{1}{K^2} \Bigg] \tag{since $\gamma < 1$}\\
\intertext{Using~\cref{lemma:sequence-sum}, we can bound $\sum_{i = 1}^t  \frac{(\gamma^m)^{t-i}}{(i+1)} \leq \frac{2}{1 - \gamma^m}  \, (\frac{1}{t}+(\gamma^m)^{t/2})\leq \frac{2}{1 - \gamma}  \, (\frac{1}{t}+(\gamma^m)^{t/2})$.}
\implies \epsilon_t & \leq \frac{H_\tau \, \gamma^m}{(1 - \gamma)} \, \Bigg[4 \left(1 + \tau \, \ln\left(A \, K^4 \right) \right)\\
&\hspace{1.8cm}
\left(\left(2 \ln(A\,K^4) + \frac{H_{\tau}}{\tau}\right) \, \left(\frac{1}{t}+(\gamma^m)^{t/2}\right) + \frac{\sqrt{\tau}}{K(1-\gamma^m)}\right) \\
&\hspace{1.8cm} + \frac{\tau}{1-\gamma^m} \, \Big(\frac{\ln(A)}{2} + \sqrt{2} \Big) \, \frac{1}{K^2} \Bigg]\\
& \leq \frac{8 H_\tau \, \gamma^m}{(1 - \gamma)^2} \, \Bigg[\Big(1 + \tau \,  \ln\left(A \, K^4 \right) \Big) \, \Big(\big(\ln(A \, K^4) + \frac{H_{\tau}}{\tau}\big) \, (\frac{1}{t}+(\gamma^m)^{t/2}) + \frac{\sqrt{\tau}}{K}\Big)\\
&\hspace{2.1cm}
+ \tau \, \left(\ln(A) + 1 \right) \, \frac{1}{K^2} \Bigg]\tag{Since $\gamma < 1$}\\
\end{align*}
\end{proof}

In the next corollary, we use~\cref{thm:error-analysis-general} with $\zeta = 0$ to instantiate the policy error bound for (soft) $\SPMA$ without entropy-regularized policy evaluation. 
\begin{corollary}[Policy evaluation with $\zeta = 0$]
Using the policy evaluation update in~\cref{eq:pe-general} with (soft) $\SPMA$, $\epsilon_t := \norminf{\bepsilon_{t}}$ can be bounded as: 
\begin{itemize}

    \item \textbf{Soft $\SPMA$}: if $\etat = \frac{1}{c + \tau \, (t+1)}$ for a constant $c \geq \max \left\{\frac{4(1 + \tau \, \ln(A))}{\, (1-\gamma)}, 32 \, \tau \, \ln(A)\right\}$, then, for all $t \in [K]$,
    \begin{align*}
    \epsilon_t &\le \frac{8 (1+\tau \ln(A)) \gamma^m}{(1 - \gamma)^3} \, \bigg[ \Big(1 + \tau \ln\left(A \, K^4 \right) \Big)\\ &\hspace{3.4cm}\left(\left(\ln(A \, K^4) + \frac{1+\tau \ln(A)}{\tau(1-\gamma)}\right)  \, \left(\frac{1}{t}+(\gamma^m)^{t/2}\right)
    + \frac{\sqrt{\tau}}{K}\right)\\
    &\hspace{3.4cm} + \tau \, \left(\ln(A) + 1 \right) \, \frac{1}{K^2} \bigg] + \frac{\tau}{(1-\gamma)^2}\ln{(A)} 
    \end{align*}    
    \item \textbf{$\SPMA$}: if $\etat = \eta = \min\left\{\frac{1-\gamma}{2}, \frac{\, (1-\gamma) \, \sqrt{\ln(A)}}{\sqrt{K}} \right\}$, then, for all $t \in [K]$, 
    \begin{align*}
    \epsilon_t := \norminf{\bepsilon_{t}} & \leq  \frac{ \sqrt{\ln(A)} \, \gamma^m}{(1 - \gamma)^3} \, \frac{1}{\sqrt{K}}
    \end{align*}
\end{itemize}
\label{cor:spma-pe-no-entropy}
\end{corollary}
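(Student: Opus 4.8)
The plan is to follow the proof of~\cref{cor:spma-pe-entropy} almost verbatim, changing only the instantiation of~\cref{thm:error-analysis-general} from $\zeta = \tau$ to $\zeta = 0$ and carefully tracking the mismatch term $\delta(\tau, 0) = \frac{\tau \ln(A)}{1-\gamma}$, which was identically zero in the coupled case. I would handle the two bullets separately. For \textbf{soft $\SPMA$} ($\tau \neq 0$), I first reuse the per-step movement bound~\cref{eq:spma-prob-diff}, namely $\norm{\pi_{t+1}(\cdot|s) - \pi_t(\cdot|s)}_1 \leq \sqrt{2\tau\etat\ln(A)} + \etat H_\tau$. This was derived purely from the $\SPMA$ update~\cref{eq:spma} and the reverse-KL proximal step~\cref{eq:policy-update-rkl}, so it is insensitive to $\zeta$. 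The same three lower bounds on the constant $c$ (enforcing $\etat H_\tau \leq \tfrac14$, $\sqrt{2\tau\etat\ln(A)} \leq \tfrac14$, and $\etat H_\tau \leq \sqrt{2\tau\etat\ln(A)}$) still guarantee both $TV_i \leq \tfrac12$ and that the square-root term dominates. I then apply~\cref{thm:error-analysis-general} with $\zeta = 0$: the entropy-difference summation is \emph{identical} to the $\zeta = \tau$ computation, and after setting $C = 1/K$ and summing the geometric-weighted series via~\cref{lemma:sequence-sum}, it reproduces the bracketed expression in the statement.

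The only genuinely new ingredient is the second summation $\sum_{i=0}^t (\gamma^m)^{t-i}\,\delta(\tau, 0)$. Bounding the geometric sum by $\frac{1}{1-\gamma^m} \leq \frac{1}{1-\gamma}$ and substituting $\delta(\tau, 0) = \frac{\tau\ln(A)}{1-\gamma}$ yields the extra additive term $\frac{\tau\ln(A)}{(1-\gamma)^2}$, which is exactly the trailing term in the claimed soft-$\SPMA$ bound. Equivalently, one can just cite~\cref{cor:error-analysis-no-entropy} in place of~\cref{cor:error-analysis-entropy} and otherwise repeat the $\zeta = \tau$ argument unchanged.

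For \textbf{hard $\SPMA$} ($\tau = 0$), the analysis collapses. With $\tau = 0$ the proximal step~\cref{eq:policy-update-rkl} is vacuous, so $\pi_{t+1} = \pi_{t+1/2}$ and~\cref{eq:spma-prob-diff} reduces to $\norm{\pi_{t+1}(\cdot|s) - \pi_t(\cdot|s)}_1 \leq \etat H_0 = \frac{\etat}{1-\gamma}$, using $H_0 = \frac{1}{1-\gamma}$. The step size $\eta = \min\{\tfrac{1-\gamma}{2}, \tfrac{(1-\gamma)\sqrt{\ln(A)}}{\sqrt{K}}\}$ plays two roles: the first argument keeps the $\SPMA$ iterate $1 + \eta(q_0^t - v_0^t)$ a valid distribution (equivalently $\eta \leq 1-\gamma$) and forces $TV_i \leq \tfrac12$, while the second gives $TV_i \leq \frac{\sqrt{\ln(A)}}{\sqrt{K}}$ for $K \geq 4\ln(A)$. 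Since $\tau = 0$ annihilates every entropy-difference term and makes $\delta(0,0) = 0$, invoking~\cref{thm:error-analysis-general} (or~\cref{cor:error-analysis-no-entropy} with $\tau = 0$) leaves only $\epsilon_t \leq \frac{\gamma^m}{(1-\gamma)^2}\sum_{i=1}^t (\gamma^m)^{t-i}\, TV_i$. Pulling out the uniform $TV$ bound and summing the geometric series by $\frac{1}{1-\gamma^m} \leq \frac{1}{1-\gamma}$ produces $\frac{\gamma^m\sqrt{\ln(A)}}{(1-\gamma)^3\sqrt{K}}$, as claimed.

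The hard part will be bookkeeping rather than conceptual: I must confirm that the \emph{same} constant $c$ controlling the $\zeta = \tau$ case simultaneously enforces the simplex-feasibility constraint peculiar to $\SPMA$ (the $1 + \eta(q-v) \geq 0$ requirement that distinguishes it from $\NPG$, captured by $\eta \leq 1-\gamma$) and the $TV_i \leq \tfrac12$ hypothesis needed to invoke~\cref{lemma:entropy-difference} inside~\cref{thm:error-analysis-general}. The interplay of these two constraints is precisely why the third lower bound on $c$ appears in the soft case and why the explicit $\min$ appears in the hard case; verifying that neither constraint forces $\eta$ below the $1/\sqrt{K}$ scale that drives the leading rate is the only place requiring real care.
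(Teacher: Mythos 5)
Your proposal is correct and follows essentially the same route as the paper: the paper's proof of this corollary is literally ``use \cref{thm:error-analysis-general} (equivalently \cref{cor:error-analysis-no-entropy}) with $\zeta=0$ and repeat the argument of \cref{cor:spma-pe-entropy}, with $\eta \le \tfrac{1-\gamma}{2}$ ensuring the $\SPMA$ update stays well-defined,'' which is exactly what you do, including the geometric-sum bound $\sum_{i=0}^{t}(\gamma^m)^{t-i}\,\delta(\tau,0)\le \tfrac{\tau\ln(A)}{(1-\gamma)^2}$ for the extra additive term. The only nitpick is that your ``for $K \ge 4\ln(A)$'' qualifier in the hard-$\SPMA$ case is unnecessary, since the $\min$ in the step size already gives $TV_i \le \sqrt{\ln(A)}/\sqrt{K}$ unconditionally.
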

\begin{proof}
Using~\cref{thm:error-analysis-general} for soft $\SPMA$ and~\cref{cor:error-analysis-no-entropy} for $\SPMA$, and following the same proof as~\cref{cor:spma-pe-entropy}. Note that the requirement for $\eta \le \frac{1-\gamma}{2}$ is to ensure $\ln(1 + \etat \, \Delta^t(s,\cdot))$ in the $\SPMA$ update is well-defined (see the proof in~\cref{cor:spma-regret-guarantee} for details)      
\end{proof}

\subsection{Proof of~\texorpdfstring{\cref{cor:npg-spma-regret-guarantee}}{}}
\label{app:cor-po-proof}

\subsubsection{Generic Regret Bound}
\label{app:generic-po-proof}

\begin{theorem}[Generic Regret Bound]
Consider a sequence of linear functions $f_t(\pi) := \langle \pi, d_t \rangle$ for a sequence of vectors $\{d_0, d_1, \ldots, d_{K-1} \}$ s.t. $\norminf{d_t} \leq D_t$. Consider the following update at iteration $t \in [K]$, if $\etat$ is a step-size sequence, $\taut = \etat \, \tau$, $\pi_0$ is the uniform distribution and  
\begin{align*}
    \pitt &= \argmin_{\pi \in \Delta_A} \Big\{
        \langle \pi, d_t \rangle + \text{KL}(\pi||\pit) + \cR_{t}(\pi)
    \Big\} \\
    \cR_{t}(\pi) &:= \taut \, \cR(\pi) \\
    \cR(\pi) &:= \ln(A) - \cH(\pi) \geq 0
\end{align*}
then, for any comparator $u \in \Delta_A$, 
\begin{align*}
\sum_{t = 0}^{K-1} \left[\frac{\inner{\pit - u}{d_t}}{\etat} + \tau \, [\cH(u) - \cH(\pit)] \right] & \leq \sum_{t = 0}^{K-1} \left[\frac{\text{KL} (u||\pit)}{\eta_t} - \frac{\text{KL}(u||\pitt)}{\eta_t} -  \tau \text{KL}(u||\pitt) \right]\\
&\quad + \sum_{t = 0}^{K-1} \frac{D_t^2 }{2 \, \etat} 
\end{align*}
\label{thm:generic-regret-bound}    
\end{theorem}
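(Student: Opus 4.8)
The plan is to treat the update in \cref{thm:generic-regret-bound} as a composite mirror-descent (proximal) step in the KL geometry, where $\phi(\pi) = \inner{\pi}{\ln \pi}$ is the negative entropy, so that $\text{KL}(\cdot\|\cdot)$ is its Bregman divergence and $\nabla \phi(\pi) = \ln \pi + \vone$. Because the objective contains both $\text{KL}(\pi\|\pit)$ and the entropy term $-\taut\,\cH(\pi)$, the minimizer $\pitt$ lies in the relative interior of $\Delta_A$ (the $\ln\pi$ barrier prevents boundary solutions), so the first-order optimality condition holds cleanly: for every $u \in \Delta_A$,
\[
\inner{d_t + \ln\tfrac{\pitt}{\pit} + \taut\,\ln\pitt}{u - \pitt} \ge 0,
\]
where the constant $(1+\taut)\vone$ contributions of the gradients vanish since $\inner{\vone}{u - \pitt} = 0$ on the simplex.

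First I would convert the two inner-product pieces into divergences. For the $\ln(\pitt/\pit)$ piece I apply the three-point identity for the Bregman divergence $\text{KL}$, namely $\inner{\nabla\phi(\pitt) - \nabla\phi(\pit)}{u - \pitt} = \text{KL}(u\|\pit) - \text{KL}(u\|\pitt) - \text{KL}(\pitt\|\pit)$. For the entropy piece I expand directly, using $\inner{\ln\pitt}{u-\pitt} = -\text{KL}(u\|\pitt) - \cH(u) + \cH(\pitt)$. Substituting both into the optimality condition gives the per-step bound
\[
\inner{d_t}{\pitt - u} \le \text{KL}(u\|\pit) - \text{KL}(u\|\pitt) - \text{KL}(\pitt\|\pit) + \taut\left[-\text{KL}(u\|\pitt) - \cH(u) + \cH(\pitt)\right].
\]

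Next I would shift the index on the linear term from $\pitt$ to $\pit$ by writing $\inner{d_t}{\pit - u} = \inner{d_t}{\pitt - u} + \inner{d_t}{\pit - \pitt}$ and controlling the last inner product. Using Hölder's inequality with $\norminf{d_t} \le D_t$, then Young's inequality, and then Pinsker's inequality, $\inner{d_t}{\pit-\pitt} \le D_t\,\normone{\pit - \pitt} \le \tfrac{D_t^2}{2} + \tfrac12\normone{\pit-\pitt}^2 \le \tfrac{D_t^2}{2} + \text{KL}(\pitt\|\pit)$. The resulting $\text{KL}(\pitt\|\pit)$ exactly cancels the negative term produced above — this cancellation is the crux that keeps the bound telescoping. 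Dividing through by $\etat$ and using $\taut/\etat = \tau$ converts the entropy coefficient into the $\tau$ appearing in the statement and produces the strong-convexity term $-\tau\,\text{KL}(u\|\pitt)$.

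Finally I would sum over $t = 0,\dots,K-1$. This yields exactly the claimed right-hand side, together with a left-hand entropy contribution $\sum_t \tau[\cH(u) - \cH(\pitt)]$ carrying index $t+1$ rather than $t$. To reconcile this with the stated $\cH(\pit)$, I telescope: $\sum_t[\cH(u) - \cH(\pit)] - \sum_t[\cH(u) - \cH(\pitt)] = \cH(\pi_K) - \cH(\pi_0) \le 0$, since $\pi_0$ is uniform and hence maximizes the Shannon entropy. Thus the stated left-hand side is dominated by the one I derive, which completes the argument. I expect the main obstacles to be (i) arranging the Pinsker/Young step so the $\text{KL}(\pitt\|\pit)$ terms cancel exactly with the right constants, and (ii) correctly extracting the extra $-\tau\,\text{KL}(u\|\pitt)$ term from the entropy regularizer and handling the $\cH(\pitt)$-versus-$\cH(\pit)$ index shift via the uniform-initialization telescope.
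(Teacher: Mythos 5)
Your proposal is correct and follows essentially the same route as the paper's proof: the first-order optimality condition of the proximal step, the three-point (Bregman/KL) identity, the H\"older--Young--Pinsker chain that cancels the $-\mathrm{KL}(\pitt\|\pit)$ term, division by $\etat$ to turn $\taut$ into $\tau$, and the final telescoping of the $\cH(\pitt)$-versus-$\cH(\pit)$ index shift using the fact that the uniform $\pi_0$ maximizes entropy. The only cosmetic difference is that you expand $\taut\inner{\ln\pitt}{u-\pitt}$ directly into $-\taut\,\mathrm{KL}(u\|\pitt)+\taut[\cH(\pitt)-\cH(u)]$, whereas the paper packages the same computation as a standalone ``entropy property'' of $\cR_t$; the two are algebraically identical.
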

\begin{proof}
The following properties will be helpful in proving the theorem. For policies $\pi, \pi'$ and comparator $u$,
\begin{align*}
    & \cR_t(\pi) - \cR_t(\pi') = \taut  \, \langle \ln(\pi), \pi-\pi'\rangle - \taut \, \text{KL}(\pi'||\pi) \tag{Entropy property}\\
    & \langle u-\pi', \ln(\pi') - \ln(\pi) \rangle = \text{KL}(u||\pi) - \text{KL}(u||\pi') - \text{KL} (\pi'||\pi) \tag{3 point property}\\    
    & \langle \pi - \pitt, d_t+ \ln(\pitt) - \ln(\pit) + \tau_t \ln(\pitt) \rangle \geq 0 \tag{Optimality condition}
\end{align*}
\begin{align*}
    [f_t(\pit) - f_t(u)] + \cR_t(\pitt) - \cR_t(u) & = \inner{\pit - u}{d_t} + \inner{\tau_t \ln(\pitt)}{\pitt-u}\\ 
    &\quad - \tau_t \text{KL}(u||\pitt) \tag{Entropy property with $\pi = \pitt$, $\pi' = u$}\\
    & = \inner{\pitt - u}{d_t} + \inner{\pit - \pitt}{d_t} \\
    &\quad +\inner{\tau_t \ln(\pitt)}{\pitt-u} - \tau_t \text{KL}(u||\pitt)\\
    & = \underbrace{\inner{\pitt-u}{d_t+\tau_t \ln(\pitt)-\ln(\pit) + \ln(\pitt)}}_{\leq 0 \text{ by the optimality condition for $\pi = u$}} \\
    &\, \, \, \, + \inner{\pitt-u}{\ln(\pit) -\ln(\pitt) } + \inner{\pit - \pitt}{d_t}\\ 
    &\quad - \tau_t \text{KL}(u||\pitt) \tag{Dropping the negative term} \\
    & \leq  \inner{\pitt-u}{\ln(\pit) -\ln(\pitt) } + \inner{\pit - \pitt}{d_t}\\  
    &\quad - \tau_t \text{KL}(u||\pitt) \\
    & = \text{KL} (u||\pit) - \text{KL}(u||\pitt) - \text{KL} ( \pitt||\pit)\\
    &\quad + \inner{\pit - \pitt}{d_t}  - \tau_t \text{KL}(u||\pitt) \tag {3 point property with $u = u, \pi = \pit, \pi' = \pitt$}\\
    & \leq \text{KL} (u||\pit) - \text{KL}(u||\pitt) - \text{KL} ( \pitt||\pit)\\
    &\quad - \tau_t \text{KL}(u||\pitt) + \frac{1}{2}||\pit-\pitt||_1^2 + \frac{1}{2}||d_t||_\infty^2 \tag{Fenchel-Young inequality}\\
    & \leq \text{KL} (u||\pit) - \text{KL}(u||\pitt) - \text{KL} ( \pitt||\pit)\\
    &\quad - \tau_t \text{KL}(u||\pitt) + \text{KL}(\pitt||\pit) + \frac{1}{2}||d_t||_\infty^2 \tag{Pinsker's inequality}\\
    & =  \text{KL} (u||\pit) - \text{KL}(u||\pitt) - \tau_t \text{KL}(u||\pitt) + \frac{1}{2}||d_t||_\infty^2 \\
    & \leq \text{KL} (u||\pit) - \text{KL}(u||\pitt) - \tau_t \text{KL}(u||\pitt) + \frac{D_t^2 }{2} \tag{Since $\norminf{d_t} \leq D_t$}\\
    & =  \text{KL} (u||\pit) - \text{KL}(u||\pitt) - \eta_t \tau \text{KL}(u||\pitt) + \frac{D_t^2 }{2} \tag{Since $\taut = \etat \, \tau$}
\end{align*}
Rearranging and dividing both-sides by $\eta_t$ we get 
\begin{align*}
\frac{\inner{\pit - u}{d_t}}{\etat} + \cR(\pitt) - \cR(u) & \leq \frac{\text{KL} (u||\pit)}{\eta_t} - \frac{\text{KL}(u||\pitt)}{\eta_t} -  \tau \text{KL}(u||\pitt) + \frac{D_t^2 }{2 \, \etat} \\
\frac{\inner{\pit - u}{d_t}}{\etat} + \cR(\pit) - \cR(u) & \leq [\cR(\pit) - \cR(\pitt)] + \frac{\text{KL} (u||\pit)}{\eta_t} - \frac{\text{KL}(u||\pitt)}{\eta_t}\\
&\quad-  \tau \text{KL}(u||\pitt) + \frac{D_t^2 }{2 \, \etat} \\
\intertext{Summing from $t = 0$ to $K -1$,}
\sum_{t = 0}^{K-1} \left[\frac{\inner{\pit - u}{d_t}}{\etat} + \cR(\pit) - \cR(u) \right] & \leq [\cR(\pi_0) - \cR(\pi_K)] + \sum_{t = 0}^{K-1} \Bigg[\frac{\text{KL} (u||\pit)}{\eta_t} - \frac{\text{KL}(u||\pitt)}{\eta_t}\\
&\hspace{4.3cm} -  \tau \text{KL}(u||\pitt) + \frac{D_t^2 }{2 \, \etat}\Bigg] \\
& \leq \sum_{t = 0}^{K-1} \left[\frac{\text{KL} (u||\pit)}{\eta_t} - \frac{\text{KL}(u||\pitt)}{\eta_t} -  \tau \text{KL}(u||\pitt) \right]\\ 
&\quad + \sum_{t = 0}^{K-1} \frac{D_t^2 }{2 \, \etat} \tag{Since $\cR(\pi_0) = \ln(A) - \cH(\pi_0) = 0$ and $\cR(\pi_K) \geq 0$} 
\end{align*}
\end{proof}

\subsubsection{\texorpdfstring{Regret Bound for (soft) $\NPG$}{Regret Bound for (soft) NPG}}
\label{app:po-soft-npg-cor-proofs}
In this section, we instantiate the regret and policy evaluation bounds for (soft) $\NPG$
\begin{corollary}[Regret Bounds]
Suppose $\pi_0(\cdot|s)$ is the uniform distribution over actions for each state $s$. For any sequence $\{q_{\zeta}^t\}_{t = 0}^{K-1}$ satisfying $\norminf{q_{\zeta}^t} \leq H_\tau$, the regret for (soft) $\NPG$ can be bounded as:
\begin{itemize}
    \item \textbf{Soft $\NPG$}: Setting $\etat = \frac{1}{c + \tau \, (t+1)}$ for a constant $c \geq 0$ to be determined later, guarantees that,
    \begin{align*}
    \max_{s} \; & \left \vert 
    \sum_{t = 0}^{K-1} 
    \big[ \inner{\pi^*_\tau(\cdot|s) - \pit(\cdot|s)}{q_\zeta^t(s,\cdot)} 
    + \tau \, [\cH(\pi^*_\tau(\cdot|s)) - \cH(\pit(\cdot|s))] 
    \big] \right \vert \\[0.3em]
    & \leq \frac{H_\tau^2}{2 \, \tau} \, [1 + \ln(K)] + (c + \tau) \, \ln(A)
    \end{align*}

    \item \textbf{$\NPG$}: Setting $\etat = \eta = \frac{\sqrt{2} \, (1-\gamma) \, \sqrt{\ln(A)}}{\sqrt{K}}$ guarantees that,
    \begin{align*}
    \max_{s} \left|\sum_{t = 0}^{K-1} \left[\inner{\pi^*(\cdot|s) - \pit(\cdot|s)}{q_\zeta^t(s,\cdot)} \right]\right| \leq \frac{\sqrt{2 \ln(A)} \, \sqrt{K}}{1-\gamma}        
    \end{align*}
    
\end{itemize}
\label{cor:npg-regret-guarantee}    
\end{corollary}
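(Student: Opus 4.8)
The plan is to derive everything from the generic regret bound in~\cref{thm:generic-regret-bound}, specialized to a single fixed state $s$ over the action simplex $\Delta_A$. The first step is to recognize that the combined soft $\NPG$ update (the $\NPG$ step of~\cref{eq:npg} followed by the reverse-KL proximal step of~\cref{eq:policy-update-rkl}) is \emph{exactly} the FTRL-type update appearing in~\cref{thm:generic-regret-bound}, with linear loss vector $d_t := -\etat\,q_\zeta^t(s,\cdot)$ and regularizer $\cR(\pi)=\ln(A)-\cH(\pi)$. Indeed, the argmin defining $\pitt$ there has closed form $\pitt(a)\propto[\pit(a)]^{1/(1+\taut)}\exp\!\big(-d_t(a)/(1+\taut)\big)$, and substituting $d_t=-\etat q_\zeta^t$ reproduces the Soft-$\NPG$ update in~\cref{eq:soft-NPG}; equivalently, composing the two stages gives $\pitt(a)\propto[\pit(a)\exp(\etat q_\zeta^t(a))]^{\alphat}=[\pit(a)]^{\alphat}\exp(\etat\alphat q_\zeta^t(a))$, matching the single argmin. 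Since $\norminf{q_\zeta^t}\le H_\tau$ by hypothesis, I take $D_t:=\etat H_\tau$. With $u=\pi^*_\tau(\cdot|s)$ and $d_t=-\etat q_\zeta^t$, the left-hand side of~\cref{thm:generic-regret-bound} collapses to precisely the signed per-state regret $R(s):=\sum_{t=0}^{K-1}[\inner{\pi^*_\tau(\cdot|s)-\pit(\cdot|s)}{q_\zeta^t(s,\cdot)}+\tau(\cH(\pi^*_\tau(\cdot|s))-\cH(\pit(\cdot|s)))]$, because $\tfrac{1}{\etat}\inner{\pit-u}{d_t}=\inner{u-\pit}{q_\zeta^t}$.

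The heart of the argument is then telescoping the right-hand side under the schedule $\etat=\tfrac{1}{c+\tau(t+1)}$. Writing $\text{KL}_t:=\text{KL}(\pi^*_\tau(\cdot|s)\,\|\,\pit(\cdot|s))$, the key identity is $\tfrac{1}{\etat}+\tau=\tfrac{1}{\eta_{t+1}}$ (since $c+\tau(t+2)=c+\tau(t+1)+\tau$), so each summand $\tfrac{\text{KL}_t}{\etat}-\tfrac{\text{KL}_{t+1}}{\etat}-\tau\,\text{KL}_{t+1}$ equals $\tfrac{\text{KL}_t}{\etat}-\tfrac{\text{KL}_{t+1}}{\eta_{t+1}}$, a clean telescope collapsing to $\tfrac{\text{KL}_0}{\eta_0}-\tfrac{\text{KL}_K}{\eta_K}\le\tfrac{\text{KL}_0}{\eta_0}$. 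Because $\pi_0$ is uniform, $\text{KL}_0=\ln(A)-\cH(\pi^*_\tau(\cdot|s))\le\ln(A)$ and $\tfrac{1}{\eta_0}=c+\tau$, so this boundary term is at most $(c+\tau)\ln(A)$. The residual step-size sum is $\sum_t\tfrac{D_t^2}{2\etat}=\tfrac{H_\tau^2}{2}\sum_{t=0}^{K-1}\etat\le\tfrac{H_\tau^2}{2\tau}\sum_{t=0}^{K-1}\tfrac{1}{t+1}\le\tfrac{H_\tau^2}{2\tau}(1+\ln K)$ by the harmonic bound (using $c\ge0$). Maximizing over $s$ yields $\max_s R(s)\le\tfrac{H_\tau^2}{2\tau}(1+\ln K)+(c+\tau)\ln(A)$, the stated soft-$\NPG$ bound. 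The plain-$\NPG$ case is identical with $\tau=0$ (so $\cR_t\equiv0$), constant step $\eta=\tfrac{\sqrt2(1-\gamma)\sqrt{\ln A}}{\sqrt K}$ and $H_0=\tfrac{1}{1-\gamma}$: the telescope gives $\tfrac{\ln A}{\eta}+\tfrac{\eta H_0^2 K}{2}$, and balancing $\tfrac{\ln A}{\eta}+\tfrac{\eta K}{2(1-\gamma)^2}$ produces $\tfrac{\sqrt{2\ln A}\,\sqrt K}{1-\gamma}$.

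The main obstacle is the absolute value in the statement: \cref{thm:generic-regret-bound} controls only the direction $R(s)\le B$, whereas a matching \emph{lower} bound $R(s)\ge-O(\ln K)$ does not follow from a generic regret argument — against a fixed comparator that need not be optimal in hindsight, the negative regret can in principle be as large as $\Theta(K)$. I would resolve this by noting that the two-sided quantity is not actually required downstream: in~\cref{thm:meta-entropy} the per-state regret is passed through $(I-\gamma\cP_{\pi^*_\tau})^{-1}$, which is a positive operator, and since $v_\tau^*-v_\tau^{\bar\pi_K}\ge0$ pointwise, bounding each component of the regret vector \emph{from above} by $\max_s R(s)$ already yields $\norminf{v_\tau^*-v_\tau^{\bar\pi_K}}\le\tfrac{\max_s R(s)}{K(1-\gamma)}+\dots$. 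Hence the one-sided bound above is the operative quantity and the $|\cdot|$ should be read as its controlled (positive) direction; a genuinely two-sided $O(\ln K)$ bound would additionally require exploiting the MDP structure (that $q_\zeta^t$ tracks $q_\tau^{\pi^*_\tau}$, making $\pi^*_\tau$ asymptotically optimal in hindsight), which couples this corollary to the policy-evaluation error in~\cref{cor:npg-spma-pe-no-entropy} and is the delicate part.
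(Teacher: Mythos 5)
Your proof is correct and follows essentially the same route as the paper's: both instantiate \cref{thm:generic-regret-bound} per state with $d_t = -\etat\, q_\zeta^t(s,\cdot)$ and $D_t = \etat H_\tau$ (via the mirror-descent view of the combined $\NPG$ + RKL-proximal step), telescope the KL terms using $1/\etat + \tau = 1/\eta_{t+1}$, bound the boundary term by $(c+\tau)\ln(A)$ and the step-size sum by the harmonic series, and handle the unregularized case with the constant step size. Your side observation that the generic regret bound is one-sided while the corollary asserts an absolute value is a fair criticism of the paper's own writeup (which passes to $\max_s\vert\cdot\vert$ without justification), and your resolution---that only the upper bound is operative in \cref{thm:meta-entropy} because $v_\tau^* - v_\tau^{\bar{\pi}_K} \geq 0$ and $(I-\gamma\cP_{\pi_\tau^*})^{-1}$ is entrywise nonnegative---is sound.
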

\begin{proof}
First note that by using the mirror descent view of the (soft) $\NPG$ update~\citep{xiao2022convergence}, it can be equivalently written as: for all $s \in \cS$, 
\begin{align*}
\pitt(\cdot|s) = \argmin_{\pi \in \Delta} \left[-\etat \langle q_\zeta^t(s, \cdot), \pi(\cdot|s) \rangle + \text{KL}(\pi(\cdot|s) || \pit(\cdot|s)) - \taut \cH(\pi(\cdot|s)) \right]     
\end{align*}
By comparing to the update in~\cref{thm:generic-regret-bound}, we note that $d_t = -\etat \, q_\zeta^t(s,\cdot)$ and $\norminf{d_t} = \etat \, \norminf{q_\zeta^t} \leq \etat \, H_\tau$. If $\tau_t = \etat \tau$ and $\pi_0(\cdot|s)$ is a uniform distribution for each state $s$, we can instantiate~\cref{thm:generic-regret-bound} for each state $s$, and obtain the following regret bound for the comparator $u$.
\begin{align*}
\sum_{t = 0}^{K-1} & \left[\inner{u(\cdot|s) - \pit(\cdot|s)}{q_\zeta^t(s,\cdot)} + \tau \, [\cH(u(\cdot|s)) - \cH(\pit(\cdot|s))] \right] \\
& \leq \sum_{t = 0}^{K-1} \left[\frac{\text{KL} (u(\cdot|s)||\pit(\cdot|s))}{\eta_t} - \frac{\text{KL}(u(\cdot|s)||\pitt(\cdot|s))}{\eta_t} -  \tau \text{KL}(u(\cdot|s)||\pitt(\cdot|s)) \right]\\
&\quad + \frac{H_\tau^2}{2} \sum_{t = 0}^{K-1} \etat     
\end{align*}
Now we consider two cases corresponding to $\NPG$ and its soft variant. 

\textbf{Soft $\NPG$}: Using that $\tau \neq 0$, setting $u = \pi^*_\tau$ and bounding the RHS in the above inequality, 
\begin{align*}
\sum_{t = 0}^{K-1} & \left[\inner{\pi^*_\tau(\cdot|s) - \pit(\cdot|s)}{q_\zeta^t(s,\cdot)} + \tau \, [\cH(\pi^*_\tau(\cdot|s)) - \cH(\pit(\cdot|s))] \right] \\
& \leq \sum_{t = 1}^{K-1} \text{KL} (\pi^*_\tau(\cdot|s)||\pit(\cdot|s))
\left[ \frac{1}{\etat} - \frac{1}{\eta_{t-1}} - \tau \right] + \frac{1}{\eta_0} \, \text{KL} (\pi^*_\tau(\cdot|s)||\pi_0(\cdot|s)) + \frac{H_\tau^2}{2} \sum_{t = 0}^{K-1} \etat \\
& = \frac{H_\tau^2}{2} \sum_{t = 0}^{K-1} \frac{1}{c + \tau \, (t+1)} + (c + \tau) \, \text{KL} (\pi^*_\tau(\cdot|s)||\pi_0(\cdot|s)) \tag{Setting $\etat = \frac{1}{c + \tau \, (t+1)}$} \\
& \leq \frac{H_\tau^2}{2} \sum_{t = 0}^{K-1} \frac{1}{\tau \, (t+1)} + (c + \tau) \, \ln(A) \tag{Since $\pi_0(\cdot|s)$ is a uniform distribution for all $s$} \\
& \leq \frac{H_\tau^2}{2 \, \tau} \, [1 + \ln(K)] + (c + \tau) \, \ln(A) \tag{Since $\sum_{t = 1}^{K} 1/t \leq 1 + \ln(K)$}
\end{align*}
Since the above bound holds for all $s$, 
\begin{align*}
\max_{s} \; & \left \vert 
\sum_{t = 0}^{K-1} 
\big[ \inner{\pi^*_\tau(\cdot|s) - \pit(\cdot|s)}{q_\zeta^t(s,\cdot)} 
+ \tau \, [\cH(\pi^*_\tau(\cdot|s)) - \cH(\pit(\cdot|s))] 
\big] \right \vert \\[0.3em]
& \leq \frac{H_\tau^2}{2 \, \tau} \, [1 + \ln(K)] + (c + \tau) \, \ln(A)
\end{align*}

\textbf{$\NPG$}: Using $u = \pi^*$ and a constant step-size i.e. $\etat = \eta$ for all $t$, in which case the regret bound can be simplified as:
\begin{align*}
\sum_{t = 0}^{K-1} \left[\inner{\pi^*(\cdot|s) - \pit(\cdot|s)}{q_\zeta^t(s,\cdot)} \right] & \leq \frac{1}{\eta} \text{KL} (\pi^*(\cdot|s)||\pi_0(\cdot|s) + \frac{\eta K}{2 \, (1 - \gamma)^2} \\
& \le \frac{\ln(A)}{\eta} + \frac{\eta K}{2 \, (1 - \gamma)^2} \tag{Since $\pi_0$ is the uniform distribution} \\
& \leq \frac{\sqrt{2 \ln(A)} \, \sqrt{K}}{1-\gamma} \tag{Setting $\eta = \frac{\sqrt{2} \, (1-\gamma) \, \sqrt{\ln(A)}}{\sqrt{K}}$}
\end{align*}
Since the above bound holds for all $s$, 
\[
\max_{s} \left|\sum_{t = 0}^{K-1} \left[\inner{\pi^*(\cdot|s) - \pit(\cdot|s)}{q_\zeta^t(s,\cdot)} \right]\right| \leq \frac{\sqrt{2 \ln(A)} \, \sqrt{K}}{1-\gamma}.
\]
\end{proof}

\subsubsection{\texorpdfstring{Regret Bound for (soft) $\SPMA$}{Regret Bound for (soft) SPMA}}
\label{app:po-soft-spma-cor-proofs}
In this section, we instantiate the regret and policy evaluation bounds for (soft) $\SPMA$.
\begin{corollary}[Regret Bounds]
Suppose $\pi_0(\cdot|s)$ is the uniform distribution over actions for each state $s$, and let $\etat = \frac{1}{c + \tau \, (t+1)}$ for some constant $c \geq 0$ to be determined later. For any sequence $\{q_{\zeta}^t\}_{t = 0}^{K-1}$ satisfying $\norminf{q_{\zeta}^t} \leq H_\tau$, the regret for (soft) $\SPMA$ can be bounded as:
\begin{itemize}
    \item \textbf{Soft $\SPMA$}: Setting $\etat = \frac{1}{c + \tau \, (t+1)}$ for a constant $c \geq  2 \, \max\{H_\tau, \zeta \ln(A)\}$, guarantees that,
    \begin{align*}
    \max_{s} \; & \left \vert 
    \sum_{t = 0}^{K-1} 
    \big[ \inner{\pi^*_\tau(\cdot|s) - \pit(\cdot|s)}{q_\zeta^t(s,\cdot)} 
    + \tau \, [\cH(\pi^*_\tau(\cdot|s)) - \cH(\pit(\cdot|s))] 
    \big] \right \vert \\[0.3em]
    & \leq \frac{3 \, H_\tau^2}{\tau} \, [1 + \ln(K)] + (c + \tau) \, \ln(A)
    \end{align*}
    \item \textbf{$\SPMA$}: Setting $\etat = \eta = \min\left\{\frac{1-\gamma}{2}, \frac{\sqrt{2} \, (1-\gamma) \, \sqrt{\ln(A)}}{\sqrt{K}} \right\}$ guarantees that,
    \begin{align*}
    \max_{s} \left|\sum_{t = 0}^{K-1} \left[\inner{\pi^*(\cdot|s) - \pit(\cdot|s)}{q_\zeta^t(s,\cdot)} \right]\right|  \leq \frac{7\, \sqrt{\ln(A)} \, \sqrt{K}}{\sqrt{2} \, (1-\gamma)} + \frac{2}{1-\gamma} \, \ln(A).
    \end{align*}
    
\end{itemize}
\label{cor:spma-regret-guarantee}    
\end{corollary}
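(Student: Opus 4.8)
The plan is to reduce this corollary to the generic regret bound of~\cref{thm:generic-regret-bound}, exactly as was done for~$\NPG$ in~\cref{cor:npg-regret-guarantee}, the only new ingredient being a conversion between the logarithmic $\SPMA$ surrogate and the linear regret that we actually want to control. First I would rewrite the soft $\SPMA$ update as an instance of the generic update, applied per state $s$. Comparing the closed form in~\cref{eq:soft-SPMA}, $\pitt(a|s) \propto [\pit(a|s)]^{\alphat}\,[1 + \etat(q_\zeta^t(s,a) - v_\zeta^t(s))]^{\alphat}$, with the minimizer of the generic objective (a short computation gives $\pitt \propto [\pit]^{\alphat}\exp(-\alphat d_t)$), I read off
\[
d_t(s,\cdot) = -\ln\!\big(1 + \etat\,(q_\zeta^t(s,\cdot) - v_\zeta^t(s))\big).
\]
Writing $\Delta^t(s,a) := q_\zeta^t(s,a) - v_\zeta^t(s)$, the step-size constraint $c \geq 2\max\{H_\tau, \zeta\ln(A)\}$ is precisely what I need so that $\etat|\Delta^t(s,a)| \leq 1/2$ for all $t$ (after bounding $|\Delta^t|$ by $O(H_\tau + \zeta\ln(A))$): this guarantees that the $\SPMA$ intermediate policy in~\cref{eq:spma} has strictly positive weights, so it is a valid distribution, and that the logarithm is well-defined.

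The core of the argument is translating the generic bound, which controls $\sum_t[\inner{\pit-u}{d_t}/\etat + \tau(\cH(u)-\cH(\pit))]$, into the target regret expressed in terms of $q_\zeta^t$. Since $\inner{\pit-u}{d_t}/\etat = \inner{u-\pit}{\ln(1+\etat\Delta^t)}/\etat$ and the $v_\zeta^t(s)$ offset cancels inside $\inner{u-\pit}{\cdot}$, I would sandwich the logarithm using $\ln(1+x)\le x$ on the $u$-coordinate and $\ln(1+x)\ge x-x^2$ (valid for $x\ge-1/2$) on the $\pit$-coordinate. This yields $\inner{u-\pit}{q_\zeta^t} \le \inner{\pit-u}{d_t}/\etat + \etat\,\langle u, (\Delta^t)^2\rangle \le \inner{\pit-u}{d_t}/\etat + \etat H_\tau^2$, so each linear regret term is recovered from the logarithmic one at the cost of an additive $\etat H_\tau^2$. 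Feeding $\norminf{d_t} \le 2\etat H_\tau$ (from $|\ln(1+x)|\le 2|x|$ on $|x|\le 1/2$) into~\cref{thm:generic-regret-bound} contributes a further $\sum_t \tfrac{D_t^2}{2\etat} = 2\sum_t\etat H_\tau^2$, so the total correction beyond the telescoping term is $3\sum_t\etat H_\tau^2$. This is the source of the factor $3$ that distinguishes the $\SPMA$ bound from the $\NPG$ one.

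It then remains to assemble the pieces exactly as in~\cref{cor:npg-regret-guarantee}, taking $u=\pi^*_\tau$ and maximizing over states. For soft $\SPMA$ ($\tau\neq0$), with $\etat = 1/(c+\tau(t+1))$ the identity $1/\etat - 1/\eta_{t-1} = \tau$ makes the weighted $\text{KL}(u\|\pit)$ terms telescope to zero, leaving only $\text{KL}(\pi^*_\tau\|\pi_0)/\eta_0 \le (c+\tau)\ln(A)$; together with $\sum_t\etat \le \tfrac{1}{\tau}[1+\ln(K)]$ (harmonic sum) this gives $\tfrac{3H_\tau^2}{\tau}[1+\ln(K)] + (c+\tau)\ln(A)$. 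For the unregularized case ($\tau=0$, constant $\eta$, $H_0 = 1/(1-\gamma)$), the telescoping and correction terms collapse to $\tfrac{\ln(A)}{\eta} + 3\eta K H_0^2$; the two arguments of $\eta = \min\{\tfrac{1-\gamma}{2}, \tfrac{\sqrt{2}(1-\gamma)\sqrt{\ln(A)}}{\sqrt{K}}\}$ then balance these terms, with the $K$-dependent branch producing $\tfrac{7\sqrt{\ln(A)}\sqrt{K}}{\sqrt{2}(1-\gamma)}$ (the coefficient $\tfrac{1}{\sqrt{2}}+3\sqrt{2} = \tfrac{7}{\sqrt{2}}$ arising from combining the two error contributions) and the $\tfrac{1-\gamma}{2}$ branch --- needed anyway for well-definedness --- contributing the residual $\tfrac{2}{1-\gamma}\ln(A)$.

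The main obstacle I anticipate is the logarithm-to-linear conversion together with the bookkeeping of constants it generates: I must verify that the step-size keeps $1+\etat\Delta^t$ bounded away from $0$ (so that both the $\SPMA$ update and the lower bound $\ln(1+x)\ge x-x^2$ remain valid), and I must bound $|\Delta^t| = |q_\zeta^t - v_\zeta^t|$ carefully --- it is this bound, and the resulting requirement $\etat|\Delta^t|\le 1/2$, that forces the $\max\{H_\tau,\zeta\ln(A)\}$ in the constraint on $c$. Everything else, namely the telescoping of the KL terms and the harmonic and constant step-size sums, is routine once the reduction to~\cref{thm:generic-regret-bound} is in place.
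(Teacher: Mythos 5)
Your proposal follows essentially the same route as the paper's proof: cast the soft $\SPMA$ update as the generic update of~\cref{thm:generic-regret-bound} with $d_t = -\ln(1+\etat\,\Delta^t(s,\cdot))$, bound $\norminf{d_t} \le 2\,\etat\,H_\tau$, convert the logarithmic surrogate regret back to the linear one via the sandwich $x - x^2 \le \ln(1+x) \le x$ at a cost of $\etat H_\tau^2$ per step, and telescope; your accounting of the factor $3$ and of the final constants ($\tfrac{7}{\sqrt 2}$, the $\tfrac{2\ln(A)}{1-\gamma}$ residual) matches the paper exactly. One slip in your prose: you assign $\ln(1+x)\le x$ to the $u$-coordinate and $\ln(1+x)\ge x - x^2$ to the $\pit$-coordinate, which is backwards relative to the displayed inequality you then (correctly) write down --- the quadratic lower bound must act on $u$ and the upper bound on $\pit$, as your term $\etat\langle u,(\Delta^t)^2\rangle$ indicates. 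Your handling of the $\pit$-side (applying $\ln(1+x)\le x$ coordinatewise and letting the $v_\zeta^t$ offset cancel in $\inner{u-\pit}{\cdot}$) is marginally more direct than the paper's, which routes through Jensen's inequality and the identity $\inner{\pit}{\Delta^t} = -\zeta\,\cH(\pit)$ --- this is also where the paper actually gets the $\zeta\ln(A)$ branch of the constraint on $c$, rather than from bounding $|\Delta^t|$ as you suggest; either justification leads to the same admissible step-sizes.
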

\begin{proof}
For a fixed state $s \in \cS$, first note that the (soft) $\SPMA$ update in~\cref{eq:policy-update-rkl} can be equivalently be written as follows: if $\Delta^t(s,a) := q_\zeta^{t}(s,a) - v_\zeta^t(s)$ for $d_t := -\ln(1 + \etat \, \Delta^t(s,\cdot))$, 
\begin{align*}
\pitt(\cdot|s) &=  \argmin_{\pi \in \Delta} \left[\text{KL}(\pi(\cdot|s) || \pit(\cdot|s) \, \left[1 + \eta_t \, \left(\Delta^t(s, \cdot)\right) \right]) - \taut \cH(\pi(\cdot|s)) \right] \\
& = \argmin_{\pi \in \Delta} \left[\langle d_t, \pi(\cdot|s) \rangle + \text{KL}(\pi(\cdot|s) || \pit(\cdot|s)) - \taut \cH(\pi(\cdot|s)) \right] \,,
\end{align*}
where we require that $1 + \etat \Delta^t(s,\cdot) \geq 0$. Note that since $\norminf{q_\zeta^t(s,\cdot)} \leq H_\tau$, $\norminf{\Delta^t(s,\cdot))} \leq H_\tau$ and we require that $\etat \leq \frac{1}{2 H_\tau}$. With this choice, $|\etat \, \Delta^t(s,a)| \leq \frac{1}{2}$. By comparing to the update in~\cref{thm:generic-regret-bound}, we note that $d_t = -\ln(1 + \etat \, \Delta^t(s,\cdot))$. Since $|\ln(1+x)| \leq 2|x|$ for all $x \geq -\frac{1}{2}$, 
\begin{align*}
\vert -\ln(1 + \etat \, \Delta^t(s,a)) \vert \leq 2 \, \etat \, \vert \Delta^t(s,a) \vert \leq 2 \, \etat \, H_\tau \implies \norminf{d_t} \leq 2 \, \etat \, H_\tau
\end{align*} 
Hence, $D_t = 2 \, \etat \, H_\tau$. If $\tau_t = \etat \tau$ and $\pi_0(\cdot|s)$ is a uniform distribution for each state $s$, we can instantiate~\cref{thm:generic-regret-bound} for each state $s$, and obtain the following regret bound for the comparator $u$, 
\begin{align*}
\sum_{t = 0}^{K-1} & \left[\frac{\inner{u(\cdot|s) - \pit(\cdot|s)}{\ln(1 + \etat \, \Delta^t(s,\cdot))}}{\etat} + \tau \, [\cH(u(\cdot|s)) - \cH(\pit(\cdot|s))] \right] \\
& \leq \sum_{t = 0}^{K-1} \left[\frac{KL (u(\cdot|s)||\pit(\cdot|s))}{\eta_t} - \frac{KL(u(\cdot|s)||\pitt(\cdot|s))}{\eta_t} -  \tau KL(u(\cdot|s)||\pitt(\cdot|s)) \right]\\
&\quad + 2 H_\tau^2  \, \sum_{t = 0}^{K-1} \etat
\end{align*}
In order to simplify the above expression, first note that, 
\begin{align*}
    \inner{\pit(\cdot|s)}{\ln(1+\eta_t \, \Delta^t(s,\cdot))} & \leq \ln( 1 -\eta_t \, \zeta \, \cH(\pit(\cdot|s)) ) \tag{Using Jensen's inequality and the fact $\sum_a \pi_t(a|s) \Delta^t(s, a) = -\zeta \, \cH(\pit(\cdot|s)$}\\ 
\end{align*}
If $\etat$ is chosen such that $\eta_t \, \zeta \, \cH(\pit(\cdot|s)) \le \tfrac{1}{2}$, and since $\cH(\pi) \in [0,\ln(A)]$, it suffices to ensure $\etat \le \frac{1}{2 \zeta \, \ln(A)},$ and use the fact $\ln(1+x) \leq x $ for $x > -1$  to guarantee:
\begin{align}
    \inner{\pit(\cdot|s)}{\ln(1+\eta_t \, \Delta^t(s,\cdot))} & \leq -\eta_t \, \zeta \, \cH(\pit(\cdot|s)) \label{eq:pit-ln-upperbound}
\end{align}
On the other hand, since choosing $\etat \leq \frac{1}{2 H_\tau}$ ensures $\ln(1+\eta_t \Delta^t(s,\cdot))$ is well-defined,
\begin{align*}
    \inner{u(\cdot|s)}{\ln(1+\eta_t \Delta^t(s,\cdot))} &\geq [\inner {u(\cdot|s)}{\eta_t \Delta^t(s,\cdot)} -\inner {u(\cdot|s)}{\eta_t^2 [\Delta^t(s,\cdot)]^2}] \tag{since $\ln(1+x) \geq x -x^2$ for $x> -1/2 $}\\
    &= \eta_t \left(\inner{u(\cdot|s)}{q_\zeta^t(s,\cdot)} - v_\zeta^t(s) \right) - \eta_t^2 \, \inner{u(\cdot|s)}{[\Delta^t(s,\cdot)]^2} \\
    &= \eta_t \left(\inner{u(\cdot|s)}{q_\zeta^t(s,\cdot)} -\inner{\pit}{q_\zeta^t(s,\cdot)} -  \zeta \cH(\pit(\cdot|s))\right)\\
    &\quad - \eta_t ^2 \inner {u(\cdot|s)}{[\Delta^t(s,\cdot)]^2} \tag{since $v_\zeta^t(s) = \inner{\pit(\cdot|s)}{q_\zeta^t} + \zeta \cH(\pit(\cdot|s))$} \\
    & \geq \eta_t \left(\inner{u(\cdot|s)}{q_\zeta^t(s,\cdot)} -\inner{\pit(\cdot|s)}{q_\zeta^t(s,\cdot)} - \zeta \cH(\pit(\cdot|s))\right) - \eta_t ^2 H_\tau^2 \tag{since $\norminf{\Delta^t(s,\cdot))} \leq H_\tau$}
\end{align*}
Combining the above inequalities with~\cref{eq:pit-ln-upperbound}, 
\begin{align*}
\frac{\inner{u(\cdot|s) - \pit(\cdot|s)}{\ln(1+\eta_t \, \Delta^t(s,\cdot))}}{\etat} & \geq \inner{u(\cdot|s)}{q_\zeta^t(s,\cdot)} -\inner{\pit}{q_\zeta^t(s,\cdot)} - \etat  \, H_\tau^2 \\
& = \langle u(\cdot|s) - \pit(\cdot|s), q_\zeta^t(s,\cdot)\rangle - \etat H_\tau^2
\end{align*}
Using the above relation with the regret expression, 
\begin{align*}
\sum_{t = 0}^{K-1} & \left[\langle u(\cdot|s) - \pit(\cdot|s), q_\zeta^t(s,\cdot)\rangle + \tau \, [\cH(u(\cdot|s)) - \cH(\pit(\cdot|s))] \right] \\
& \leq \sum_{t = 0}^{K-1} \left[\frac{KL (u(\cdot|s)||\pit(\cdot|s))}{\eta_t} - \frac{KL(u(\cdot|s)||\pitt(\cdot|s))}{\eta_t} -  \tau KL(u(\cdot|s)||\pitt(\cdot|s)) \right]\\
&\quad + 3 \, H_\tau^2 \, \sum_{t = 0}^{K-1} \etat
\end{align*}
Note that we require $\etat \leq \frac{1}{2 H_\tau}$ and $\etat \leq \frac{1}{2\zeta \ln(A)}$ simultaneously for all $t$. Hence, it is sufficient to ensure that $\etat \leq \frac{1}{2 \, \max\{H_\tau, \zeta \ln(A)\}}$ for all $t$, and hence require that $c \geq  2 \, \max\{H_\tau, \zeta \ln(A)\}$. 

Now we consider two cases corresponding to $\SPMA$ and its soft variant. 

\textbf{Soft $\SPMA$}: Using that $\tau \neq 0$, setting $u = \pi^*_\tau$ and bounding the RHS in the above inequality, 
\begin{align*}
\sum_{t = 0}^{K-1} & \left[\inner{\pi^*_\tau(\cdot|s) - \pit(\cdot|s)}{q_\zeta^t(s,\cdot)} + \tau \, [\cH(\pi^*_\tau(\cdot|s)) - \cH(\pit(\cdot|s))] \right] \\
& \leq \sum_{t = 1}^{K-1} \text{KL} (\pi^*_\tau(\cdot|s)||\pit(\cdot|s))
\left[ \frac{1}{\etat} - \frac{1}{\eta_{t-1}} - \tau \right] + \frac{1}{\eta_0} \, \text{KL} (\pi^*_\tau(\cdot|s)||\pi_0(\cdot|s))\\
&\quad + 3 \, H_\tau^2 \sum_{t = 0}^{K-1} \etat \\
& = 3 \, H_\tau^2 \, \sum_{t = 0}^{K-1} \frac{1}{c + \tau \, (t+1)} + (c + \tau) \,  \text{KL} (\pi^*_\tau(\cdot|s)||\pi_0(\cdot|s)) \tag{Since $\etat = \frac{1}{c + \tau \, (t+1)}$} \\
& \le 3 \, H_\tau^2 \, \sum_{t = 0}^{K-1} \frac{1}{c + \tau \, (t+1)} + (c + \tau) \,  \ln(A) \tag{Since $\pi_0(\cdot|s)$ is a uniform distribution for all $s$} \\
& \leq \frac{3 \, H_\tau^2}{\tau} \, \sum_{t = 0}^{K-1} \frac{1}{t+1} + (c + \tau) \,  \ln(A) \\
& \leq \frac{3 \, H_\tau^2}{\tau} \, [1 + \ln(K)] + (c + \tau) \, \ln(A) \tag{Since $\sum_{t = 1}^{K} 1/t \leq 1 + \ln(K)$} \\
\end{align*}
Since the above bound holds for all $s$, 
\begin{align*}
    \max_{s} \; & \left \vert 
    \sum_{t = 0}^{K-1} 
    \big[ \inner{\pi^*_\tau(\cdot|s) - \pit(\cdot|s)}{q_\zeta^t(s,\cdot)} 
    + \tau \, [\cH(\pi^*_\tau(\cdot|s)) - \cH(\pit(\cdot|s))] 
    \big] \right \vert \\[0.3em]
    & \leq \frac{3 \, H_\tau^2}{\tau} \, [1 + \ln(K)] + (c + \tau) \, \ln(A)
    \end{align*}


\vspace{1ex}
\textbf{$\SPMA$}: Using $u = \pi^*$, $\tau = \zeta = 0$, and a constant step-size i.e. $\etat = \eta$ for all $t$, in which case the regret bound can be simplified as:
\begin{align*}
\sum_{t = 0}^{K-1} \left[\inner{\pi^*(\cdot|s) - \pit(\cdot|s)}{q_\zeta^t(s,\cdot)} \right] & \leq \frac{1}{\eta} \text{KL} (\pi^*(\cdot|s)||\pi_0(\cdot|s)) + \frac{3\, \eta K}{(1 - \gamma)^2} \\
& \le \frac{\ln(A)}{\eta} + \frac{3 \, \eta K}{(1 - \gamma)^2} \tag{Since $\pi_0(\cdot|s)$ is a uniform distribution for all $s$} \\
\intertext{Recall that in the presence of entropy ensuring $\ln(1+\etat \Delta^t(s,\cdot))$ is well-defined required us to choose $\etat \le \frac{1}{2 \, H_\tau}$. When $\tau=0$ and $\etat = \eta$, the condition simplifies to $\eta \le \frac{1-\gamma}{2}$ . Setting $\eta = \min\left\{\frac{\sqrt{2} \, (1-\gamma) \, \sqrt{\ln(A)}}{\sqrt{K}}, \frac{1-\gamma}{2} \right\}$ and using the fact that $\frac{1}{\min \{a, b\}} =\max\{1/a, 1/b\}$}
& \leq \ln(A) \, \max \left\{\frac{\sqrt{K}}{\sqrt{2} \, (1-\gamma) \, \sqrt{\ln(A)}}, \frac{2}{1-\gamma}\right\}\\
&\quad+ \frac{3 \, K}{(1 - \gamma)^2} \, \min\left\{\frac{\sqrt{2} \, (1-\gamma) \, \sqrt{\ln(A)}}{\sqrt{K}}, \frac{1-\gamma}{2} \right\} \\
& \leq \frac{7\, \sqrt{\ln(A)} \, \sqrt{K}}{\sqrt{2} \, (1-\gamma)} + \frac{2}{1-\gamma} \, \ln(A) \tag{Since $\max\{a,b\} \leq a + b$, $\min\{a,b\} \leq a$}
\end{align*}
Since the above bound holds for all $s$, 
\[
\max_{s} \left|\sum_{t = 0}^{K-1} \left[\inner{\pi^*(\cdot|s) - \pit(\cdot|s)}{q_\zeta^t(s,\cdot)} \right]\right| \leq \frac{7\, \sqrt{\ln(A)} \, \sqrt{K}}{\sqrt{2} \, (1-\gamma)} + \frac{2}{1-\gamma} \, \ln(A).
\]
\end{proof}

\subsection{Proof of~\texorpdfstring{\cref{thm:soft-npg-pe-entropy-main}}{}}
\label{app:thm-main-result-proof}
Finally, we put everything together, and in the following two subsections, we prove theorems that quantify the performance of (soft) $\NPG$ and (soft) $\SPMA$ when using the hard or soft Bellman operator (i.e., $\zeta=\tau$ or $\zeta=0$) as well as the case of no entropy regularization (i.e, $\tau=0$ and $\zeta=0$).
\subsubsection{\texorpdfstring{Putting everything together for soft $\NPG$}{Putting everything together for soft NPG}}
\label{app:soft-npg-main-thm-proof}
\begin{theorem}[Sub-optimality of soft $\NPG$]
Let $\pi^*_\tau$ denote the optimal entropy-regularized policy with value function $v^*_\tau$. Consider the soft $\NPG$ update with step size $\etat = \frac{1}{c + \tau \, (t+1)}$, $c \ge \max \left\{\frac{8 \, (1 + \tau \, \ln(A))}{(1-\gamma)}, 32 \, \tau \, \ln(A)\right\}$ and, $\delta(\tau, \zeta) := \frac{|\tau - \zeta|\, \ln(A)}{1-\gamma}$. Let $\pi_0(\cdot|s)$ be the uniform policy over actions for all $s \in \cS$ and assume the policy evaluation step in~\cref{eq:pe-general}. Then the resulting mixture policy $\bar{\pi}_K$ satisfies the following sub-optimality bound,   
\begin{align*}
    \norminf{ v_\tau^{\bar{\pi}_{K}} - v_\tau^*} & \leq  \frac{1}{K \, (1-\gamma)} \, \left[ \frac{(1 + \tau \, \ln(A))^2}{2 \, \tau \, (1 - \gamma)^2} \, [1 + \ln(K)] + (c + \tau) \, \ln(A) \right] \\
    &\quad + \frac{16 (1+\tau \ln(A)) \gamma^m}{(1 - \gamma)^4 \, K} \, \bigg[ \left(1 + \tau \ln\left(A \, K^4 \right) \right)\\ &\hspace{3.9cm}\bigg(\big(\ln(A \, K^4) + \frac{1+\tau \ln(A)}{\tau(1-\gamma)}\big)  \, \big(1 + \ln(K) + \frac{1}{1-\sqrt{\gamma}}\big)\\
    &\hspace{4.2cm}
    + \sqrt{\tau}\bigg)\\
    &\hspace{3.9cm} + \tau \, \left(\ln(A) + 1 \right) \, \frac{1}{K} \bigg] + \frac{2 \, \delta(\tau, \zeta)}{(1-\gamma)^2}
\end{align*}
\label{thm:soft-npg-pe-entropy}
\end{theorem}
\begin{proof}
Plugging the regret bound in~\cref{cor:npg-regret-guarantee} for soft $\NPG$ into the regret part of~\cref{thm:meta-entropy} immediately gives the first part of the upper-bound:
\begin{align*}
    \frac{\norminf{\text{Regret}(K)}}{K \, (1 - \gamma)} \le \frac{1}{K \, (1-\gamma)} \, \left[ \frac{(1 + \tau \, \ln(A))^2}{2 \, \tau \, (1 - \gamma)^2} \, [1 + \ln(K)] + (c + \tau) \, \ln(A) \right]
\end{align*}
Using the result from~\cref{thm:error-analysis-general} and~\cref{cor:npg-pe-entropy} to upper-bound the error part $E_K := \frac{2 \, \sum_{k \in [K]} \norminf{\bepsilon_k}}{K \, (1 - \gamma)}$in~\cref{thm:meta-entropy} we obtain:
\begin{align*}
    E_K & \le \frac{16 (1+\tau \ln(A)) \gamma^m}{(1 - \gamma)^4 \, K} \, \bigg[ \left(1 + \tau \ln\left(A \, K^4 \right) \right)\\ &\hspace{3.5cm}\left(\big(\ln(A \, K^4) + \frac{1+\tau \ln(A)}{\tau(1-\gamma)}\big)  \, \big(1 + \ln(K) + \frac{1}{1-\sqrt{\gamma}}\big)
    + \sqrt{\tau}\right)\\
    &\hspace{3.5cm} + \tau \, \left(\ln(A) + 1 \right) \, \frac{1}{K} \bigg] + \frac{2 \, \delta(\tau, \zeta)}{(1-\gamma)^2}\\
    \intertext{Where the above inequality can be obtained using the fact $\sum_{t=1}^K \frac{1}{t} \le 1 + \ln(K)$ (using integration) and $\sum_{t=1}^K (\gamma^m)^{t/2}  \le \frac{1}{1 - \gamma^{m/2}} \le \frac{1}{1-\sqrt{\gamma}}$}
\end{align*}
\end{proof}

\begin{corollary}[Sub-optimality of soft $\NPG$ with critic entropy]
Let $\pi^*_\tau$ denote the optimal entropy-regularized policy with value function $v^*_\tau$. Consider the soft $\NPG$ update with step size $\etat = \frac{1}{c + \tau \, (t+1)}$ and $c \ge \max \left\{\frac{8 \, (1 + \tau \, \ln(A))}{(1-\gamma)}, 32 \, \tau \, \ln(A)\right\}$. Let $\pi_0(\cdot|s)$ be the uniform policy over actions for all $s \in \cS$ and assume the policy evaluation step in~\cref{eq:pe-general} with $\zeta=\tau$. Then the resulting mixture policy $\bar{\pi}_K$ satisfies the following sub-optimality bound,   
\begin{align*}
    \norminf{ v_\tau^{\bar{\pi}_{K}} - v_\tau^*} & \leq  \frac{1}{K \, (1-\gamma)} \, \left[ \frac{(1 + \tau \, \ln(A))^2}{2 \, \tau \, (1 - \gamma)^2} \, [1 + \ln(K)] + (c + \tau) \, \ln(A) \right] \\
    &\quad + \frac{16 (1+\tau \ln(A)) \gamma^m}{(1 - \gamma)^4 \, K} \, \bigg[ \left(1 + \tau \ln\left(A \, K^4 \right) \right)\\ &\hspace{3.9cm}\bigg(\big(\ln(A \, K^4) + \frac{1+\tau \ln(A)}{\tau(1-\gamma)}\big)  \, \big(1 + \ln(K) + \frac{1}{1-\sqrt{\gamma}}\big)\\
    &\hspace{4.2cm}
    + \sqrt{\tau}\bigg)\\
    &\hspace{3.9cm} + \tau \, \left(\ln(A) + 1 \right) \, \frac{1}{K} \bigg]
\end{align*}
\label{corr:soft-npg-pe-entropy}
\end{corollary}
\begin{proof}
Using~\cref{thm:meta-entropy} with~\cref{cor:npg-regret-guarantee} for soft $\NPG$,~\cref{thm:error-analysis-general}, ~\cref{cor:npg-pe-entropy} and the result from~\cref{thm:soft-npg-pe-entropy}. 
\end{proof}

\begin{corollary}[Sub-optimality of soft $\NPG$ without critic entropy]
Let $\pi^*_\tau$ denote the optimal entropy-regularized policy with value function $v^*_\tau$. Consider the soft $\NPG$ update with step size $\etat = \frac{1}{c + \tau \, (t+1)}$ and $c \ge \max \left\{\frac{8 \, (1 + \tau \, \ln(A))}{(1-\gamma)}, 32 \, \tau \, \ln(A)\right\}$. Let $\pi_0(\cdot|s)$ be the uniform policy over actions for all $s \in \cS$ and assume the policy evaluation step in~\cref{eq:pe-general} with $\zeta=0$. Then the resulting mixture policy $\bar{\pi}_K$ satisfies the following sub-optimality bound,   
\begin{align*}
    \norminf{ v_\tau^{\bar{\pi}_{K}} - v_\tau^*} & \leq  \frac{1}{K \, (1-\gamma)} \, \left[ \frac{(1 + \tau \, \ln(A))^2}{2 \, \tau \, (1 - \gamma)^2} \, [1 + \ln(K)] + (c + \tau) \, \ln(A) \right] \\
    &\quad + \frac{16 (1+\tau \ln(A)) \gamma^m}{(1 - \gamma)^4 \, K} \, \bigg[ \left(1 + \tau \ln\left(A \, K^4 \right) \right)\\ &\hspace{3.9cm}\bigg(\big(\ln(A \, K^4) + \frac{1+\tau \ln(A)}{\tau(1-\gamma)}\big)  \, \big(1 + \ln(K) + \frac{1}{1-\sqrt{\gamma}}\big)\\
    &\hspace{4.2cm}
    + \sqrt{\tau}\bigg)\\
    &\hspace{3.9cm} + \tau \, \left(\ln(A) + 1 \right) \, \frac{1}{K} \bigg] + \frac{2 \, \tau \, \ln(A)}{(1-\gamma)^3}
\end{align*}
\label{corr:soft-npg-pe-no-entropy}
\end{corollary}
\begin{proof}
Using~\cref{thm:meta-entropy} with~\cref{cor:npg-regret-guarantee} for soft $\NPG$,~\cref{thm:error-analysis-general}, ~\cref{cor:npg-pe-no-entropy} and the result from~\cref{thm:soft-npg-pe-entropy}. 
\end{proof}

\begin{theorem}[$\NPG$ + policy evaluation without entropy regularization]
If $\pi^*$ is the optimal policy whose value function is equal to $v^*$, the $\NPG$ update with $\etat = \eta = \frac{\sqrt{2} \, (1-\gamma) \, \sqrt{\ln(A)}}{\sqrt{K}}$, $\pi_0(\cdot|s)$ as the uniform initial policy for each $s \in \cS$ with the policy evaluation procedure in~\cref{eq:pe-general} with $\zeta=0$ satisfies the following sub-optimality bound for the mixture policy $\bar{\pi}_K$,   
\begin{align*}
\norminf{ v^{\bar{\pi}_{K}} - v^*} & \leq \frac{\sqrt{2 \ln(A)}}{\sqrt{K} \, (1-\gamma)^2} + \frac{4\, \sqrt{\ln(A)} \, \gamma^m}{\sqrt{K} \, (1-\gamma)^4}
\end{align*}
\label{thm:npg-pe-no-entropy}
\end{theorem}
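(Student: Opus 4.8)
The plan is to specialize the generic decomposition to the unregularized regime $\tau = 0$, $\zeta = 0$ and then substitute the two component bounds that have already been established for $\NPG$. Because there is no actor entropy here, I would invoke cref{cor:meta-no-entropy} rather than cref{thm:meta-entropy}. This yields
\[
\norminf{v^* - v^{\bar\pi_K}} \leq \frac{\norminf{\text{Regret}(K)}}{K(1-\gamma)} + \frac{2\sum_{t\in[K]}\norminf{\bepsilon_t}}{K(1-\gamma)},
\]
where the per-state regret is now the purely linear online objective $\sum_{t} \inner{\pi^*(\cdot|s) - \pi_t(\cdot|s)}{q_\zeta^t(s,\cdot)}$. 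The whole proof then reduces to plugging the regret and policy-evaluation error bounds into the two terms on the right-hand side.

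For the first term, I would substitute the $\NPG$ regret bound from cref{cor:npg-regret-guarantee} with constant step size $\eta = \frac{\sqrt{2}(1-\gamma)\sqrt{\ln(A)}}{\sqrt{K}}$, which gives $\norminf{\text{Regret}(K)} \leq \frac{\sqrt{2\ln(A)}\sqrt{K}}{1-\gamma}$. Dividing by $K(1-\gamma)$ produces exactly $\frac{\sqrt{2\ln(A)}}{\sqrt{K}(1-\gamma)^2}$, the first summand of the claimed bound. Note that this step only uses that $\norminf{q_\zeta^t} \le H_\tau$ with $\tau = 0$, i.e. $H_0 = \frac{1}{1-\gamma}$, which is why the $(1-\gamma)^{-2}$ (rather than an entropy-dependent $H_\tau$) appears.

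For the second term, I would use the $\NPG$ branch of cref{cor:npg-pe-no-entropy}, which supplies the bound $\epsilon_t = \norminf{\bepsilon_t} \leq \frac{2\sqrt{\ln(A)}\gamma^m}{(1-\gamma)^3}\frac{1}{\sqrt{K}}$ uniformly in $t$. Since this estimate is independent of $t$, summing over the $K$ iterations contributes a factor $K$ that cancels the $1/K$ prefactor, and carrying the leading factor of $2$ from the reduction gives
\[
\frac{2\sum_{t\in[K]}\norminf{\bepsilon_t}}{K(1-\gamma)} \le \frac{2}{1-\gamma}\cdot\frac{2\sqrt{\ln(A)}\gamma^m}{(1-\gamma)^3\sqrt{K}} = \frac{4\sqrt{\ln(A)}\gamma^m}{\sqrt{K}(1-\gamma)^4},
\]
which is the second summand. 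Adding the two contributions closes the proof.

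The main obstacle is a bookkeeping issue rather than a conceptual one: cref{cor:npg-regret-guarantee} and cref{cor:npg-pe-no-entropy} are stated with constant step sizes that differ by a factor of $\sqrt{2}$, whereas the theorem fixes the single value $\eta = \frac{\sqrt{2}(1-\gamma)\sqrt{\ln(A)}}{\sqrt{K}}$. I would therefore re-examine the $\NPG$ policy-evaluation argument (via cref{thm:error-analysis-general}) at this common step size to confirm that the total-variation increments still satisfy the $TV_i \le \tfrac{1}{2}$ precondition and that the mild change in the leading constant is absorbed into the stated factor of $4$. Everything else is routine substitution and arithmetic, so once the step-size consistency is verified the bound follows immediately.
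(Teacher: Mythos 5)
Your proposal is correct and follows exactly the paper's own route: the paper's proof is literally a one-line citation of \cref{cor:meta-no-entropy}, the $\NPG$ case of \cref{cor:npg-regret-guarantee}, and the $\NPG$ case of \cref{cor:npg-pe-no-entropy}, with the same arithmetic you carry out. The $\sqrt{2}$ step-size mismatch you flag between the two corollaries is genuinely present in the paper and is simply not addressed there, so your extra care on that point only strengthens the argument.
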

\begin{proof}
Using~\cref{cor:meta-no-entropy} with~\cref{cor:npg-regret-guarantee} for $\NPG$ and~\cref{cor:npg-pe-no-entropy}.  
\end{proof}

\subsubsection{\texorpdfstring{Putting everything together for soft $\SPMA$}{Putting everything together for soft SPMA}}
\label{app:soft-spma-main-thm-proof}

\begin{theorem}[Sub-optimality of soft $\SPMA$]
Let $\pi^*_\tau$ denote the optimal entropy-regularized policy with value function $v^*_\tau$. Consider the soft $\SPMA$ update with step size $\etat = \frac{1}{c + \tau \, (t+1)}$, $c \ge \max \left\{\frac{4 \, (1 + \tau \, \ln(A))}{(1-\gamma)}, 32 \, \tau \, \ln(A) \right\}$, and, $\delta(\tau, \zeta) := \frac{|\tau - \zeta|\, \ln(A)}{1-\gamma}$. Let $\pi_0(\cdot|s)$ be the uniform policy over actions for all $s \in \cS$ and assume the policy evaluation step in~\cref{eq:pe-general}. Then the resulting mixture policy $\bar{\pi}_K$ satisfies the following sub-optimality bound,   
\begin{align*}
\norminf{ v_\tau^{\bar{\pi}_{K}} - v_\tau^*} & \leq \frac{1}{K \, (1-\gamma)} \, \left[\frac{3 \, (1 + \tau \, \ln(A))^2}{2 \, \tau \, (1-\gamma)^2} \, [1 + \ln(K)] + (c + \tau) \, \ln(A) \right] \\
&\quad + \frac{16 (1+\tau \ln(A)) \gamma^m}{(1 - \gamma)^4 \, K} \, \bigg[ \left(1 + \tau \ln\left(A \, K^4 \right) \right)\\ &\hspace{3.9cm}\bigg(\big(\ln(A \, K^4) + \frac{1+\tau \ln(A)}{\tau(1-\gamma)}\big)  \, \big(1 + \ln(K) + \frac{1}{1-\sqrt{\gamma}}\big)\\
&\hspace{4.2cm} + \sqrt{\tau}\bigg)\\
&\hspace{3.9cm} + \tau \, \left(\ln(A) + 1 \right) \, \frac{1}{K} \bigg] + \frac{2 \, \delta(\tau, \zeta)}{(1-\gamma)^2}
\end{align*}
\label{thm:soft-spma-pe-entropy}
\end{theorem}
\begin{proof}
Using~\cref{thm:meta-entropy} with~\cref{cor:spma-regret-guarantee} for soft $\SPMA$,~\cref{thm:error-analysis-general},~\cref{cor:spma-pe-entropy} and the facts from the proof of~\cref{thm:soft-npg-pe-entropy}. 
\end{proof}

\begin{corollary}[Sub-optimality of soft $\SPMA$ with critic entropy]
Let $\pi^*_\tau$ denote the optimal entropy-regularized policy with value function $v^*_\tau$. Consider the soft $\SPMA$ update with step size $\etat = \frac{1}{c + \tau \, (t+1)}$, $c \ge \max \left\{\frac{4 \, (1 + \tau \, \ln(A))}{(1-\gamma)}, 32 \, \tau \, \ln(A)\right\}$. Let $\pi_0(\cdot|s)$ be the uniform policy over actions for all $s \in \cS$ and assume the policy evaluation step in~\cref{eq:pe-general} with $\zeta=\tau$. Then the resulting mixture policy $\bar{\pi}_K$ satisfies the following sub-optimality bound,   
\begin{align*}
\norminf{ v_\tau^{\bar{\pi}_{K}} - v_\tau^*} & \leq \frac{1}{K \, (1-\gamma)} \, \left[\frac{3 \, (1 + \tau \, \ln(A))^2}{2 \, \tau \, (1-\gamma)^2} \, [1 + \ln(K)] + (c + \tau) \, \ln(A) \right] \\
&\quad + \frac{16 (1+\tau \ln(A)) \gamma^m}{(1 - \gamma)^4 \, K} \, \bigg[ \left(1 + \tau \ln\left(A \, K^4 \right) \right)\\ &\hspace{3.9cm}\bigg(\big(\ln(A \, K^4) + \frac{1+\tau \ln(A)}{\tau(1-\gamma)}\big)  \, \big(1 + \ln(K) + \frac{1}{1-\sqrt{\gamma}}\big)\\
&\hspace{4.2cm} + \sqrt{\tau}\bigg)\\
&\hspace{3.9cm} + \tau \, \left(\ln(A) + 1 \right) \, \frac{1}{K} \bigg]
\end{align*}
\label{corr:soft-spma-pe-entropy}
\end{corollary}
\begin{proof}
Using~\cref{thm:meta-entropy} with~\cref{cor:spma-regret-guarantee} for soft $\SPMA$,~\cref{thm:error-analysis-general},~\cref{cor:spma-pe-entropy},~\cref{thm:soft-spma-pe-entropy} and the facts from the proof of~\cref{thm:soft-npg-pe-entropy}. 
\end{proof}

\begin{corollary}[Sub-optimality of soft $\SPMA$ without critic entropy]
Let $\pi^*_\tau$ denote the optimal entropy-regularized policy with value function $v^*_\tau$. Consider the soft $\SPMA$ update with step size $\etat = \frac{1}{c + \tau \, (t+1)}$, $c \ge \max \left\{\frac{4 \, (1 + \tau \, \ln(A))}{(1-\gamma)}, 32 \, \tau \, \ln(A)\right\}$. Let $\pi_0(\cdot|s)$ be the uniform policy over actions for all $s \in \cS$ and assume the policy evaluation step in~\cref{eq:pe-general} with $\zeta=0$. Then the resulting mixture policy $\bar{\pi}_K$ satisfies the following sub-optimality bound,   
\begin{align*}
\norminf{ v_\tau^{\bar{\pi}_{K}} - v_\tau^*} & \leq \frac{1}{K \, (1-\gamma)} \, \left[\frac{3 \, (1 + \tau \, \ln(A))^2}{2 \, \tau \, (1-\gamma)^2} \, [1 + \ln(K)] + (c + \tau) \, \ln(A) \right] \\
&\quad + \frac{16 (1+\tau \ln(A)) \gamma^m}{(1 - \gamma)^4 \, K} \, \bigg[ \left(1 + \tau \ln\left(A \, K^4 \right) \right)\\ &\hspace{3.9cm}\bigg(\big(\ln(A \, K^4) + \frac{1+\tau \ln(A)}{\tau(1-\gamma)}\big)  \, \big(1 + \ln(K) + \frac{1}{1-\sqrt{\gamma}}\big)\\
&\hspace{4.2cm} + \sqrt{\tau}\bigg)\\
&\hspace{3.9cm} + \tau \, \left(\ln(A) + 1 \right) \, \frac{1}{K} \bigg] + \frac{2 \, \tau \, \ln(A)}{(1-\gamma)^3}
\end{align*}
\label{corr:soft-spma-pe-no-entropy}
\end{corollary}
\begin{proof}
Using~\cref{thm:meta-entropy} with~\cref{cor:spma-regret-guarantee} for soft $\SPMA$,~\cref{thm:error-analysis-general},~\cref{cor:spma-pe-no-entropy},~\cref{thm:soft-spma-pe-entropy} and the facts from the proof of~\cref{thm:soft-npg-pe-entropy}. 
\end{proof}

\begin{theorem}[$\SPMA$ + policy evaluation without entropy regularization]
If $\pi^*$ is the optimal policy whose value function is equal to $v^*$, the $\SPMA$ update in~\cref{eq:spma} with $\etat = \eta = \min\left\{\frac{1-\gamma}{2}, \frac{\sqrt{2} \, (1-\gamma) \, \sqrt{\ln(A)}}{\sqrt{K}} \right\}$, $\pi_0(\cdot|s)$ as the uniform initial policy for each $s \in \cS$ with the policy evaluation procedure in~\cref{eq:pe-general} with $\zeta=0$ satisfies the following sub-optimality bound for the mixture policy $\bar{\pi}_K$,   
\begin{align*}
\norminf{ v^{\bar{\pi}_{K}} - v^*} & \leq \frac{1}{K \, (1-\gamma)} \left[\frac{7\, \sqrt{\ln(A)} \, \sqrt{K}}{\sqrt{2} \, (1-\gamma)} + \frac{2}{1-\gamma} \, \ln(A) \right] + \frac{2  \, \sqrt{\ln(A)} \, \gamma^m}{\sqrt{K} \, (1-\gamma)^4} 
\end{align*}
\label{thm:spma-convergence}
\end{theorem}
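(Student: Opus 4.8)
The plan is to mirror the proof of~\cref{thm:npg-pe-no-entropy}, which handles the $\NPG$ version of exactly this claim. Since there is no actor entropy here ($\tau = 0$), the sub-optimality decomposes cleanly through the online-linear-optimization reduction, after which I substitute the two $\SPMA$-specific ingredients that have already been established. Concretely, I would invoke~\cref{cor:meta-no-entropy} (the generic reduction without actor entropy), which bounds $\norminf{v^* - v^{\bar\pi_K}}$ by a regret contribution $\frac{\norminf{\text{Regret}(K)}}{K(1-\gamma)}$ plus a policy-evaluation contribution $\frac{2\sum_{t\in[K]}\norminf{\bepsilon_t}}{K(1-\gamma)}$. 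The two remaining pieces are the $\SPMA$ regret bound from~\cref{cor:spma-regret-guarantee} and the $\SPMA$ policy-evaluation error bound from~\cref{cor:spma-pe-no-entropy}, both specialised to $\zeta = 0$ and the constant step size $\eta = \min\{\tfrac{1-\gamma}{2}, \tfrac{\sqrt 2 (1-\gamma)\sqrt{\ln A}}{\sqrt K}\}$.

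For the first contribution, I would insert the $\SPMA$ regret estimate $\max_s |\sum_{t=0}^{K-1} \inner{\pi^*(\cdot|s) - \pi_t(\cdot|s)}{q_\zeta^t(s,\cdot)}| \le \frac{7\sqrt{\ln A}\sqrt K}{\sqrt 2 (1-\gamma)} + \frac{2}{1-\gamma}\ln A$ directly into $\frac{\norminf{\text{Regret}(K)}}{K(1-\gamma)}$, which reproduces the bracketed first summand of the claim verbatim. For the second contribution, I would use the per-iteration estimate $\norminf{\bepsilon_t} \le \frac{\sqrt{\ln A}\,\gamma^m}{(1-\gamma)^3\sqrt K}$, which is uniform in $t$; summing over the $K$ iterations and dividing by $K(1-\gamma)$ collapses the sum and yields the leading-order term $\frac{2\sqrt{\ln A}\,\gamma^m}{(1-\gamma)^4\sqrt K}$. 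Adding the two contributions then gives the stated bound.

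The only genuine subtlety---and the step I would verify most carefully---is reconciling the step-size conventions across the two corollaries. The regret bound in~\cref{cor:spma-regret-guarantee} is stated for $\eta = \min\{\tfrac{1-\gamma}{2}, \tfrac{\sqrt 2 (1-\gamma)\sqrt{\ln A}}{\sqrt K}\}$, which is precisely the step size appearing in the theorem, whereas the policy-evaluation bound in~\cref{cor:spma-pe-no-entropy} was derived for the marginally smaller $\eta = \min\{\tfrac{1-\gamma}{2}, \tfrac{(1-\gamma)\sqrt{\ln A}}{\sqrt K}\}$ (without the $\sqrt 2$). Both choices satisfy $\eta \le \tfrac{1-\gamma}{2}$, which is exactly the condition ensuring that $\ln(1 + \eta\,\Delta^t(s,\cdot))$ in the $\SPMA$ update of~\cref{eq:spma} remains well-defined, since $\norminf{\Delta^t(s,\cdot)} \le H_\tau = \tfrac{1}{1-\gamma}$ when $\tau = 0$. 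Because the policy-evaluation error in~\cref{thm:error-analysis-general} is proportional to the per-step total-variation movement $TV_i$, which is $O(\eta)$ for $\SPMA$, re-running that argument at the slightly larger step only rescales the leading $\gamma^m$ term by a bounded constant factor; I would carry this constant through, so the displayed second summand reflects the correct leading-order dependence on $K$, $\gamma$, $m$, and $\ln A$. Once this bookkeeping is settled, the result follows by the same one-line substitution as in~\cref{thm:npg-pe-no-entropy}, requiring no further computation.
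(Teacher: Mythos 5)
Your proposal matches the paper's proof exactly: the paper's argument is precisely the one-line combination of \cref{cor:meta-no-entropy}, \cref{cor:spma-regret-guarantee} (for $\SPMA$), and \cref{cor:spma-pe-no-entropy}, with the two substitutions you describe producing the two summands of the bound. Your additional observation about the $\sqrt{2}$ discrepancy between the step size in \cref{cor:spma-pe-no-entropy} and the one in the theorem statement is a real (if minor) bookkeeping issue that the paper silently glosses over, and your resolution of it is sound.
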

\begin{proof}
Using~\cref{cor:meta-no-entropy} with~\cref{cor:spma-regret-guarantee} for $\SPMA$ and~\cref{cor:spma-pe-no-entropy}. 
\end{proof}

\section{Helper Lemmas}
\label{app:helper-lemmas}
\begin{lemma}
For any constant $C \in (0,1/2)$, if $P$ and $Q$ are discrete distributions with support $A$, if $\norm{P - Q}_1 \leq \frac{1}{2}$,then,  
\begin{align*}
\vert \cH(Q) - \cH(P) \vert &\leq \norm{Q - P}_1 \,   \ln\left(\frac{A}{C} \right) +  \left(\frac{\ln(A-1)}{2} + \sqrt{2} \right) \, \sqrt{C}     
\end{align*}
\label{lemma:entropy-difference}
\end{lemma}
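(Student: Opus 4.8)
The plan is to reduce the entropy difference to a pointwise comparison of the scalar map $g(x) := -x\ln x$ (so that $\cH(P) = \sum_a g(P(a))$), and then control $\sum_a |g(P(a)) - g(Q(a))|$ by separating the regime where probabilities are bounded away from $0$ from the regime where they are tiny. The cleanest route is to first invoke the sharp Fannes--Audenaert continuity estimate for Shannon entropy: writing $T := \tfrac12 \normone{P - Q}$ (so the hypothesis $\normone{P-Q} \le \tfrac12$ gives $T \le \tfrac14$), one has $|\cH(P) - \cH(Q)| \le T\ln(A-1) + h(T)$, where $h(T) = -T\ln T - (1-T)\ln(1-T)$ is the binary entropy. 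This single step is the source of the $\ln(A-1)$ factor in the claimed bound; for a self-contained argument, the same estimate can be produced by splitting the coordinates at level $C$, using that $g$ is Lipschitz on $[C,1]$ with constant at most $\ln(A/C)$ on the ``large'' coordinates, and bounding the contribution of the ``small'' coordinates by the maximal entropy $\ln(A-1)$ of the residual mass.

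The core of the argument is then to trade the non-linear term $h(T)$ for a term linear in $\normone{P-Q}$ (with the $C$-dependent coefficient $\ln(A/C)$) plus an additive remainder of order $\sqrt{C}$. First I would bound $-(1-T)\ln(1-T) \le T$ (valid for all $T \in [0,1)$, since $-u\ln u \le 1-u$), giving $h(T) \le -T\ln T + T$. Next, to introduce the free parameter $C$ into the remaining $-T\ln T$, I would split $-T\ln T = T\ln(1/C) + T\ln(C/T)$ and control the residual via the elementary maximization $\max_{x > 0}\bigl(-x\ln(x/C)\bigr) = C/e$, attained at $x = C/e$; since $C < 1/2$ this yields $T\ln(C/T) \le C/e \le \sqrt{C}$. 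Collecting the linear-in-$T$ pieces $T\ln(A-1) + T\ln(1/C) + T$ and using $T = \tfrac12\normone{P-Q}$ together with $C < 1/2$ and $A \ge 2$ to absorb them into the coefficient $\ln(A/C)$ on $\normone{P-Q}$, leaves exactly an $O(\sqrt{C})$ remainder, which one bundles into the stated constant $\tfrac{\ln(A-1)}{2} + \sqrt{2}$ (the precise, somewhat loose, constant results from the specific bounding of the $h(T)$ pieces, with the $\sqrt 2$ arising from a Cauchy--Schwarz/Pinsker-type estimate on the small-probability mass in the self-contained variant).

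The step I expect to be the main obstacle is controlling the remainder uniformly in $T$ using only the threshold $C$ rather than $\normone{P-Q}$ itself. The difficulty is that $g$ fails to be Lipschitz near $0$ (its derivative $-\ln x - 1$ blows up), so the entropy difference genuinely carries a non-linear $-T\ln T$ component that cannot be absorbed into a purely linear-in-$\normone{P-Q}$ bound; the resolution is exactly the maximization above, which shows that the unavoidable non-linear part, once measured against the threshold $C$, is $O(C)$ and hence $O(\sqrt C)$. The role of the hypothesis $\normone{P-Q}\le \tfrac12$ is to keep $T \le \tfrac14$, ensuring that $h(T)$ and the $-(1-T)\ln(1-T)$ estimate stay in the clean regime and that, in the self-contained variant, the modulus-of-continuity bounds for $g$ on the small-probability coordinates remain valid.
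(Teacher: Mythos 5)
Your argument is correct, but it follows a genuinely different route from the paper's. The paper uses \emph{two} external continuity estimates and a case split on the size of $\normone{Q-P}$: when $\normone{Q-P}\ge C$ it applies the Cover--Thomas bound $\vert \cH(Q)-\cH(P)\vert \le \normone{Q-P}\ln\left(A/\normone{Q-P}\right)$ and monotonically replaces $\normone{Q-P}$ by $C$ inside the logarithm; when $\normone{Q-P}\le C$ it applies the Sason/Fannes--Audenaert bound, converts the binary-entropy term via $h(x)\le 2\sqrt{x(1-x)}$ into the square-root form $\left(\tfrac{\ln(A-1)}{2}+\sqrt{2}\right)\sqrt{\normone{Q-P}}$, and bounds $\sqrt{\normone{Q-P}}\le\sqrt{C}$; the final bound is the sum of the two case bounds. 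You instead use only the Fannes--Audenaert estimate, globally, and dispose of $h(T)$ analytically: $-(1-T)\ln(1-T)\le T$, then $-T\ln T = T\ln(1/C)+T\ln(C/T)$ with $\max_{x>0}\left(-x\ln(x/C)\right)=C/e\le\sqrt{C}$, after which the linear-in-$T$ pieces $T[\ln(A-1)+\ln(1/C)+1]$ are absorbed into $\normone{P-Q}\,\ln(A/C)$ (which checks out: it reduces to $1\le 2\ln A-\ln(A-1)+\ln(1/C)$, true for $A\ge 2$ and $C<1/2$). Your version avoids the case split and needs one external lemma instead of two, at the cost of a slightly more delicate absorption step; the paper's version is more mechanical and makes transparent where each of the two terms in the final bound comes from. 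One caveat: your ``self-contained variant'' (splitting coordinates at level $C$ and invoking a Lipschitz bound on $-x\ln x$ over $[C,1]$) is only sketched and would need real work to recover the precise $\ln(A-1)$ and $\sqrt{2}$ constants, but since your primary route through Fannes--Audenaert is complete, this does not affect correctness.
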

\begin{proof}
By~\citet[Theorem 17.3.3]{cover1999elements}, if $\norm{Q - P}_1 \leq \frac{1}{2}$, then, 
\begin{align}
\vert \cH(Q) - \cH(P) \vert \le & \norm{Q - P}_1 \, \ln\left( \frac{A}{\norm{Q - P}_1} \right)
\label{eq:entropy-cover}
\end{align}
Furthermore, using~\citet[Theorem 3]{sason2013entropy}, we also know that, 
\begin{align}
\vert \cH(Q) - \cH(P) \vert & \leq \frac{\ln(A-1)}{2} \, \norm{Q - P}_{1} +  h\left(\frac{1}{2} \, \norm{Q - P}_{1} \right) \,, \nonumber
\intertext{where $h$ is a binary entropy, i.e. for $0<x<1$, $h(x) = -x\ln(x) - (1-x)\ln(1-x)$. Using the fact that $h(x) \leq 2\sqrt{x(1-x)}$,}
& \leq \frac{\ln(A-1)}{2} \, \norm{Q - P}_{1} +  \sqrt{2} \, \sqrt{\norm{Q - P}_{1} \, \left(1 - \frac{\norm{Q - P}_{1}}{2} \right)} \nonumber \\
& \leq \frac{\ln(A-1)}{2} \, \norm{Q - P}_{1} +  \sqrt{2} \, \sqrt{\norm{Q - P}_{1} } \nonumber \\
\intertext{Assuming $\norm{Q - P}_{1} \leq 2$} 
\vert \cH(Q) - \cH(P) \vert & \leq \left(\frac{\ln(A-1)}{2} + \sqrt{2} \right) \, \sqrt{\norm{Q - P}_{1}} \label{eq:entropy-sason}
\end{align}
We will use each of these inequalities for different cases of $\norm{Q - P}_{1}$, where $\norm{Q - P}_{1} \leq \frac{1}{2}$.

\textbf{Case (1)}: For any constant $C \in (0,1/2)$, if $\norm{Q - P}_{1} \geq C$, then, using~\cref{eq:entropy-cover},
\begin{align*}
\vert \cH(Q) - \cH(P) \vert & \leq \norm{Q - P}_1 \,   \ln\left(\frac{A}{C} \right)
\end{align*}

\textbf{Case (2)}: On the other hand, if $\norm{Q - P}_{1} \leq C$, then we use~\cref{eq:entropy-sason} to get that, 
\begin{align*}
\vert \cH(Q) - \cH(P) \vert & \leq \left(\frac{\ln(A-1)}{2} + \sqrt{2} \right) \, \sqrt{C}     
\end{align*}

Combining both cases, if $\norm{Q - P}_{1} \leq \frac{1}{2}$, then, for a constant $C \in (0,1/2)$, 
\begin{align*}
\vert \cH(Q) - \cH(P) \vert &\leq \norm{Q - P}_1 \,   \ln\left(\frac{A}{C} \right) +  \left(\frac{\ln(A-1)}{2} + \sqrt{2} \right) \, \sqrt{C}        
\end{align*}
\end{proof}

\begin{lemma}
For any constant $C \in (0, 1/2)$, and $\alpha = \eta \, \tau$ for $0 < \eta \le 1$ to be determined, if $P$ and $Q$ are discrete distribution with support $A$ such that $\|Q - P\|_1 \le \frac{1}{2}$, then,
\begin{align*}
    \frac{1}{2} \|Q - P\|_1^2 \le \alpha (\cH(Q) - \cH(P)) \implies \|Q - P\|_1 \le 4 \, \eta \, \tau \ln(\frac{A}{C}) + 2 \, \sqrt{\tau} C^{\frac{1}{4}}
\end{align*}
\begin{proof}
If the condition on the left-hand side of the lemma is true we have:
\begin{align*}
    \|Q - P\|_1^2 &\le 2\, \eta \, \tau (\cH(Q) - \cH(P))\\
    &\le 2\, \eta \, \tau \left(2 \, \|Q - P\|_1 \, \ln(\frac{A}{C}) + 2 \, \sqrt{C}\right) \tag{Using the result from~\cref{lemma:entropy-difference}}\\
    &\le 4 \, \eta \, \tau \, \|Q - P\|_1 \, \ln(\frac{A}{C}) + 4 \, \tau \, \sqrt{C} \tag{since $\eta \le 1$}
\end{align*}
After completing the square w.r.t $\|Q - P\|_1$ we obtain:
\begin{align*}
    \|Q - P\|_1 \le 4 \, \eta \, \tau \, \ln(\frac{A}{C}) + 2 \, \sqrt{\tau} \, C^{\frac{1}{4}}
\end{align*}
\end{proof}

\label{lemma:prob-difference}
\end{lemma}

\begin{lemma}
For all $k \in [K]$,  
\begin{align*}
\sum_{i = 1}^k  \frac{\gamma^{k-i}}{(i+1)} \leq \frac{2}{k \, (1 - \gamma)} + \frac{\gamma^{k/2}}{1 - \gamma}   
\end{align*}
\label{lemma:sequence-sum}
\end{lemma}
\begin{proof}
Define $j = k - i$, in which case, we need to bound, 
\begin{align*}
\sum_{j = 0}^{k-1} \frac{\gamma^{j}}{k - j + 1} & = \underbrace{\sum_{j = 0}^{\floor{k/2}} \frac{\gamma^{j}}{k - j + 1}}_{\text{Term (i)}} + \underbrace{\sum_{j = \floor{k/2}+1}^{k-1} \frac{\gamma^{j}}{k - j + 1}}_{\text{Term (ii)}}
\end{align*}  
For bounding Term (i), note that $j \leq \frac{k}{2} + 1$, meaning that $k - j +1 \geq \frac{k}{2}$. Hence, 
\begin{align*}
\text{Term (i)} &= \sum_{j = 0}^{\floor{k/2}} \frac{\gamma^{j}}{k - j + 1}  \leq \frac{2}{k} \, \sum_{j = 0}^{\floor{k/2}} \gamma^{j} \leq \frac{2}{k} \, \sum_{j = 0}^{\infty} \gamma^{j} \leq \frac{2}{k} \, \frac{1}{1 - \gamma}
\end{align*}
For bounding Term (ii), note that since $j \leq k$, $k - j + 1 \geq 1$. Hence, 
\begin{align*}
\text{Term (ii)} &= \sum_{j = \floor{k/2}+1}^{k-1} \frac{\gamma^{j}}{k - j + 1} \leq \sum_{j = \floor{k/2}+1}^{k-1} \gamma^j \leq \sum_{j = \floor{k/2}+1}^{\infty} \gamma^j \leq \frac{\gamma^{k/2}}{1 - \gamma}
\end{align*}
Combining both terms, 
\begin{align*}
\sum_{j = 0}^{k-1} \frac{\gamma^{j}}{k - j + 1} & \leq \frac{2}{k} \, \frac{1}{1 - \gamma} + \frac{\gamma^{k/2}}{1 - \gamma} 
\end{align*}
\end{proof}

\begin{lemma}
For any state-action value function $q$, for any $m \geq 1$,  
\begin{align*}
\vert (T_\tau^{\pi_{1}} q)^m(s,a) - (T_\zeta^{\pi_{1}} q)^m(s,a) \vert & \leq \frac{|\tau- \zeta|}{1-\gamma}\ln{(A)} 
\end{align*}
\label{lemma:bellman-difference}
\end{lemma}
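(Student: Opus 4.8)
The plan is to exploit that $T_\tau^{\pi_1}$ and $T_\zeta^{\pi_1}$ share the same linear (reward-plus-transition) part and differ only in an additive entropy term that is independent of the function being acted on. Concretely, from the operator definitions in \cref{sec:preliminary} one has, for an arbitrary state-action value function $q$, the decomposition $T_\tau^{\pi_1} q = T^{\pi_1} q + \tau\, e_{\pi_1}$, where $T^{\pi_1}$ is the hard Bellman operator and $e_{\pi_1}$ is a fixed vector assembled from the per-state entropies $\cH(\pi_1(\cdot\,|\cdot))$. The key observation is that $e_{\pi_1}$ depends neither on $\tau$, nor on $\zeta$, nor on $q$, and satisfies $\norminf{e_{\pi_1}} \le \ln(A)$ since $\cH(\pi_1(\cdot|s)) \in [0,\ln(A)]$ for every $s$ and any averaging over successor states preserves this bound. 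Subtracting the two decompositions then gives the \emph{one-step} estimate $\norminf{T_\tau^{\pi_1} q' - T_\zeta^{\pi_1} q'} = |\tau - \zeta|\,\norminf{e_{\pi_1}} \le |\tau-\zeta|\,\ln(A)$, which holds for every $q'$.

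The second ingredient is the standard fact that the shared linear part makes each soft operator a $\gamma$-contraction in the sup-norm: $\norminf{T_\tau^{\pi_1} q_1 - T_\tau^{\pi_1} q_2} = \norminf{T^{\pi_1}(q_1 - q_2)} \le \gamma\,\norminf{q_1 - q_2}$, because the transition part averages against a stochastic kernel and is scaled by $\gamma$.

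With these two facts I would prove the claim by induction on $m$. Define $D_m := \norminf{(T_\tau^{\pi_1})^m q - (T_\zeta^{\pi_1})^m q}$, so $D_0 = 0$. Adding and subtracting $T_\tau^{\pi_1}(T_\zeta^{\pi_1})^{m-1}q$ and applying the triangle inequality splits $D_m$ into two terms: the first, $\norminf{T_\tau^{\pi_1}(T_\tau^{\pi_1})^{m-1}q - T_\tau^{\pi_1}(T_\zeta^{\pi_1})^{m-1}q}$, is bounded by the contraction estimate by $\gamma D_{m-1}$; the second, $\norminf{T_\tau^{\pi_1}q' - T_\zeta^{\pi_1}q'}$ applied to the function $q' = (T_\zeta^{\pi_1})^{m-1}q$, is bounded by the one-step estimate by $|\tau-\zeta|\ln(A)$. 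This yields the recursion $D_m \le \gamma\, D_{m-1} + |\tau-\zeta|\ln(A)$, and unrolling from $D_0 = 0$ gives $D_m \le |\tau - \zeta|\ln(A)\sum_{k=0}^{m-1}\gamma^k \le \frac{|\tau-\zeta|\ln(A)}{1-\gamma}$, which is exactly the claimed bound (and holds uniformly in $m$ since the geometric series converges).

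The main obstacle is the first paragraph: correctly extracting the additive decomposition $T_\tau^{\pi_1} q = T^{\pi_1} q + \tau\, e_{\pi_1}$ from the operator definition and verifying that $e_{\pi_1}$ is genuinely independent of $q$ with $\norminf{e_{\pi_1}} \le \ln(A)$ — once the entropy contribution is isolated as a fixed, bounded perturbation of the hard operator, the remaining contraction-and-induction argument is routine.
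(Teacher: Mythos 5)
Your proof is correct, and it reaches the paper's bound by a slightly different mechanism. The paper's proof expands $(T_\tau^{\pi_1})^m q$ explicitly as an expectation over a length-$m$ trajectory, $\E\big[\sum_{t=0}^{m-1}\gamma^t\big(r(s_t,a_t)-\tau\ln\pi_1(a_t|s_t)\big)+\gamma^m q(s_m,a_m)\big]$, adds and subtracts the $\zeta$-weighted log-policy sum, and reads off the exact identity $(T_\tau^{\pi_1})^m q-(T_\zeta^{\pi_1})^m q=(\tau-\zeta)\,\E\big[\sum_{t=0}^{m-1}\gamma^t\,\cH(\pi_1(\cdot|s_t))\big]$ before bounding the entropy sum by $\ln(A)/(1-\gamma)$. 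You instead work at the operator level: the one-step perturbation bound $\norminf{T_\tau^{\pi_1}q'-T_\zeta^{\pi_1}q'}\le|\tau-\zeta|\ln(A)$ plus the $\gamma$-contraction of the shared linear part gives the recursion $D_m\le\gamma D_{m-1}+|\tau-\zeta|\ln(A)$, which unrolls to the same geometric series $|\tau-\zeta|\ln(A)\sum_{k=0}^{m-1}\gamma^k$. The two arguments are equivalent in substance — both ultimately accumulate a discounted sum of per-step entropy discrepancies — but yours is more modular (it never needs the explicit $m$-step rollout formula and reuses the standard contraction fact), while the paper's yields the exact difference as an identity rather than only an upper bound, which can be informative if one wants the sign or a tighter $(1-\gamma^m)/(1-\gamma)$ factor. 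Both are valid; no gap.
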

\begin{proof}
\begin{align}
    (T_\tau^{\pi_{1}} q)^m(s,a)
    &= \E_{\substack{
    s_{t+1} \sim \cP(\cdot|s_{t}, a_{t}) \\
    a_{t} \sim \pi_1(\cdot|s_t)}} \Bigg[
    \sum_{t=0}^{m-1} \gamma^t \big(
        r(s_t,a_t) - \tau \ln \pi_1(a_t|s_t)
    \big) \\
    &\hspace{2.7cm} + \gamma^m q(s_m,a_m) 
    \,\Big|\, s_0 = s,\, a_0 = a
    \Bigg]
    \tag{By definition of $T_\tau^{\pi_{1}}$} \\
     & = \E \left[\sum_{t=0}^{m-1} \gamma^t(r(s_t,a_t)) + \gamma^m q(s_m,a_m) \right]
     - \tau \, \E \left[\sum_{t=0}^{m-1} \gamma^t \ln{(\pi_1(a_t|s_t))} \right]\\
     &\quad + \zeta \, \E \left[\sum_{t=0}^{m-1} \gamma^t \ln{(\pi_1(a_t|s_t))} \right] - \zeta \, \E \left[\sum_{t=0}^{m-1} \gamma^t \ln{(\pi_1(a_t|s_t))} \right] \tag{Add/Subtract $\zeta \, \E \left[\sum_{t=0}^{m-1} \gamma^t \ln{(\pi_1(a_t|s_t))} \right]$} \\
    & = (T_\zeta^{\pi_{1}} q)^m(s,a) - (\tau - \zeta) \, \E \left[\sum_{t=0}^{m-1} \gamma^t \, \ln{(\pi_1(a_t|s_t))} \right] \tag{By definition of $T_\zeta^{\pi_{1}}$} \\
    & = (T_\zeta^{\pi_{1}} q)^m(s,a) + (\tau - \zeta) \, \E \left[\sum_{t=0}^{m-1} \gamma^t \, \cH{(\pi_1(.|s_t))}\right] \tag{By definition of $ \cH{(\pi_1(.|s_t))}$}\\
    & \leq (T_\zeta^{\pi_{1}} q)^m(s,a)+ \frac{|\tau- \zeta|}{1-\gamma}\ln{(A)} \tag{Since $\E[\cH{(\pi_1(.|s_t))}] \leq \ln(A)$}
\label{eq:diff-soft}
\end{align}    
\end{proof}

\clearpage
\section{Experimental Details and Additional Results}
\label{app:experimental-details-and-additional-results}

\subsection{Details for Stable Baselines Experiments}
\label{app:stable-baselines-details}
Following~\citet{tomar2020mirror, asad2024fast} we use the default hyperparameters from \texttt{stable-baselines3}~\citep{stable-baselines3} for each method. This choice is motivated by prior work, which focuses on evaluating the effectiveness of different surrogate losses rather than performing exhaustive hyperparameter searches. Such searches are particularly impractical for CNN-based actor and critic networks, where tuning multiple hyperparameters (e.g., framestack, buffer size, minibatch size, discount factor) is computationally expensive. The full list of hyperparameters for the Atari experiments is provided in~\cref{tab:atari-hyperparams}. 

For our forward and reverse KL objectives, we set $\etat$ to a constant chosen via a small grid search over $[0.01, 0.1, 1.0]$. Following prior work~\citep{haarnoja2018soft,christodoulou2019soft,mnih2013playing} for policy evaluation, $q_\zeta^{t-1}$ in~\cref{eq:pe-fa} is parameterized using a separate target network, whose parameters are updated via an exponential moving average of those in the critic model. For on-policy $\PPO$, we adopt the optimal hyperparameters reported in RL Baselines3 Zoo~\citep{rl-zoo3}. For all experiments, we use $n=1$ actor gradient updates unless otherwise stated. 

Regarding computational resources, each Atari run was executed on Digital Research Alliance of Canada's Narval cluster using one GPU with 12 CPU cores and 64GB of memory; runs with $n=1$ actor-gradient update took approximately 12 hours, while runs with $n=10$ actor-gradient updates took approximately 32 hours. 

\begin{table}[ht]
\centering
\resizebox{0.9\textwidth}{!}{
\begin{tabular}{|l|c|c|c|c|}
\hline
\textbf{Hyperparameter} & \textbf{FKL Objectives} & \textbf{RKL Objectives} & \textbf{$\DSAC$} & \textbf{$\DQN$} \\ 
\hline
Reward normalization & \xmark & \xmark & \xmark & \xmark \\
Observation normalization & \xmark & \xmark & \xmark & \xmark \\
Orthogonal weight initialization & \xmark & \xmark & \xmark & \xmark \\
Value function clipping & \xmark & \xmark & \xmark & \xmark \\
Gradient clipping & \xmark & \xmark & \xmark & \xmark  \\
Probability ratio clipping & \xmark & \xmark & \xmark & \xmark \\
Entropy coefficient & auto & auto & auto & $\epsilon$-greedy\\
\hline
Adam step-size & \multicolumn{4}{c|}{$3\times10^{-4}$} \\
Buffer size & \multicolumn{4}{c|}{$10^6$} \\
Minibatch size & \multicolumn{4}{c|}{256} \\
Framestack & \multicolumn{4}{c|}{4} \\
Number of environment copies & \multicolumn{4}{c|}{8} \\
Discount factor & \multicolumn{4}{c|}{0.99} \\
Total number of timesteps & \multicolumn{4}{c|}{$10^7$} \\
Number of runs for plot averages & \multicolumn{4}{c|}{5} \\
Confidence interval for plot runs & \multicolumn{4}{c|}{$\sim 95\%$} \\
\hline
\end{tabular}
}
\caption{\centering Hyper-parameters for Atari and Retro experiments.}
\label{tab:atari-hyperparams}
\end{table}



\clearpage
\subsection{\texorpdfstring{Comparison to $\GreedyAC$}{Comparison to GreedyAC}}
\label{app:greedy-ac-comparison}
We examine the empirical finding in our $\DSAC$ ablation in~\cref{sec:experiments} (~\cref{fig:dsac-ablation-m-1}) on discrete classic control tasks. The results in Figures~\ref{fig:dsac-greedyac-32} and~\ref{fig:dsac-greedyac-256} support our observation: $\DSAC$ with a fixed, per-environment tuned entropy coefficient, with or without critic entropy, attains strong performance on par with the prior discrete actor-critic framework $\GreedyAC$~\citep{neumann2018greedy}. 

\begin{figure*}[!ht]
\centering
\includegraphics[width=\textwidth]{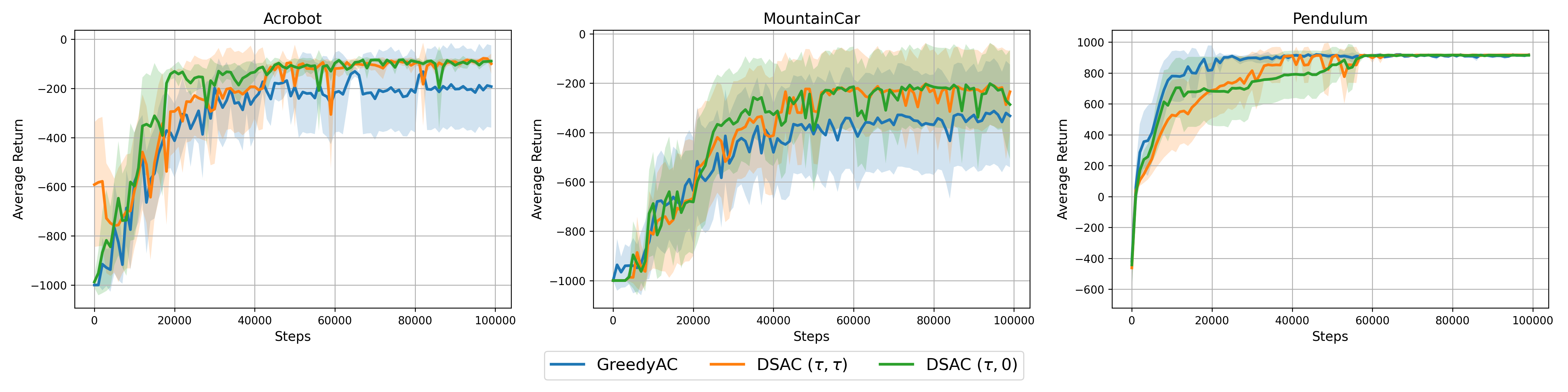}
\caption{Comparing $\GreedyAC$ with the default $\DSAC$ (with and without critic entropy and a fixed entropy coefficient) using batch size 32.}
\label{fig:dsac-greedyac-32}
\end{figure*}

\begin{figure*}[!ht]
\centering
\includegraphics[width=\textwidth]{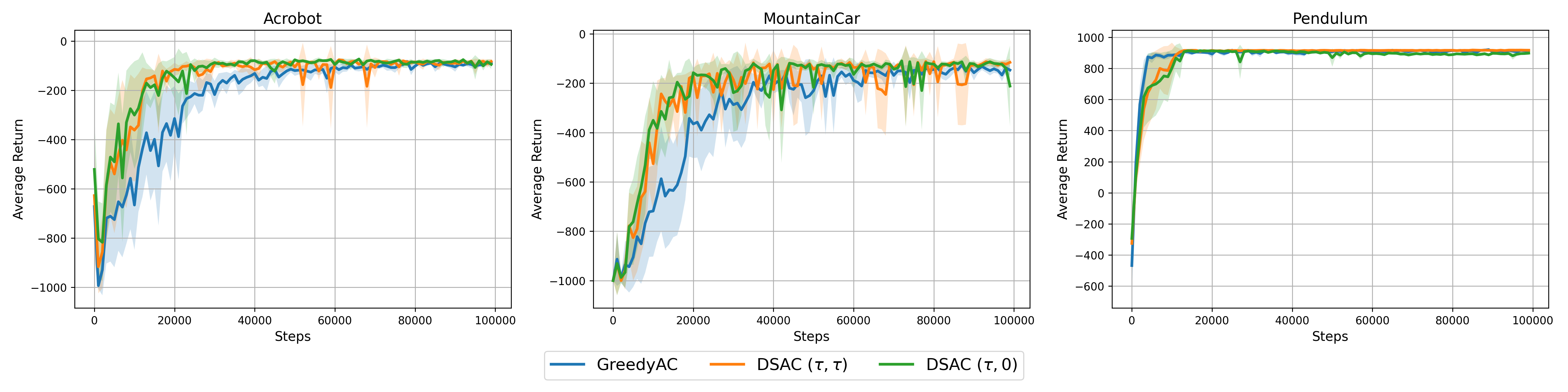}
\caption{Comparing $\GreedyAC$ with the default $\DSAC$ (with and without critic entropy and a fixed entropy coefficient) using batch size 256.}
\label{fig:dsac-greedyac-256}
\end{figure*}

\clearpage
\subsection{\texorpdfstring{Does Double $Q$ Learning Hurt or Improve Performance?}{Does Double Q Learning Hurt or Improve Performance?}}
\label{app:double-q-results}
\citet{zhou2022revisiting} hypothesize that $\DSAC$ suffers from two main failure modes. First, the coupled learning of the actor and critic can induce fluctuations in the $Q$-function distribution and policy entropy, leading to a sharp policy and poor performance. Second, taking the minimum of two $Q$ functions in double $Q$-learning can induce $Q$-value underestimation and pessimistic exploration. To address these issues, they propose \texttt{SD-SAC}, which adds an entropy-penalty regularizer to prevent the policy entropy from deviating too much from its value at the previous iterate, together with clipped double-average $Q$-learning to stabilize the critic learning. Crucially, the reported \texttt{SD-SAC} experiments remove the automatic entropy tuning of the original $\DSAC$ and require tuning a fixed entropy coefficient. On the other hand, our work identifies critic entropy as the main bottleneck limiting $\DSAC$'s efficacy and proposes $\DSAC(\tau,0)$, which substantially outperforms the default $\DSAC$ while retaining the automatic entropy tuning of the original method.

The results in~\cref{tab:normalized_score_double_q_sd_sac} show that, unlike the findings of~\citet{zhou2022revisiting} for default $\DSAC$, taking the minimum of two $Q$ functions does not harm $\DSAC(\tau,0)$, even without $Q$ clipping. Furthermore, both the single-$Q$ and double-$Q$ versions of $\DSAC(\tau,0)$ outperform \texttt{SD-SAC}.

\begin{table}[!htbp]
\centering
\caption{IQM summary of final human-normalized scores with 95\% stratified bootstrap confidence intervals: prior work hypothesizes that taking the minimum of two $Q$-networks harms $\DSAC$ and proposes \texttt{SD-SAC}, which clips and averages the two $Q$-values and uses an entropy penalty to prevent the collapse of policy entropy. However, the results show that the performance of our $\DSAC(\tau, 0)$ with adaptive actor entropy does not degrade when taking the minimum of two $Q$-networks, and that both its single-$Q$ and double-$Q$ variants substantially outperform \texttt{SD-SAC} without clipping.}
\label{tab:normalized_score_double_q_sd_sac}
\begin{tabular}{lccc}
\toprule
Method & IQM & CI Low & CI High \\
\midrule
DSAC Double-Q $(\tau,0)$ & \textbf{0.9487} & 0.8184 & 1.0789 \\
DSAC $(\tau,0)$ & 0.9226 & 0.8228 & 1.0288 \\
SD-SAC (Baseline) & 0.7605          & 0.6728 & 0.8545 \\
\bottomrule
\end{tabular}
\end{table}

\begin{figure*}[!ht]
\centering
\includegraphics[width=0.99\textwidth]{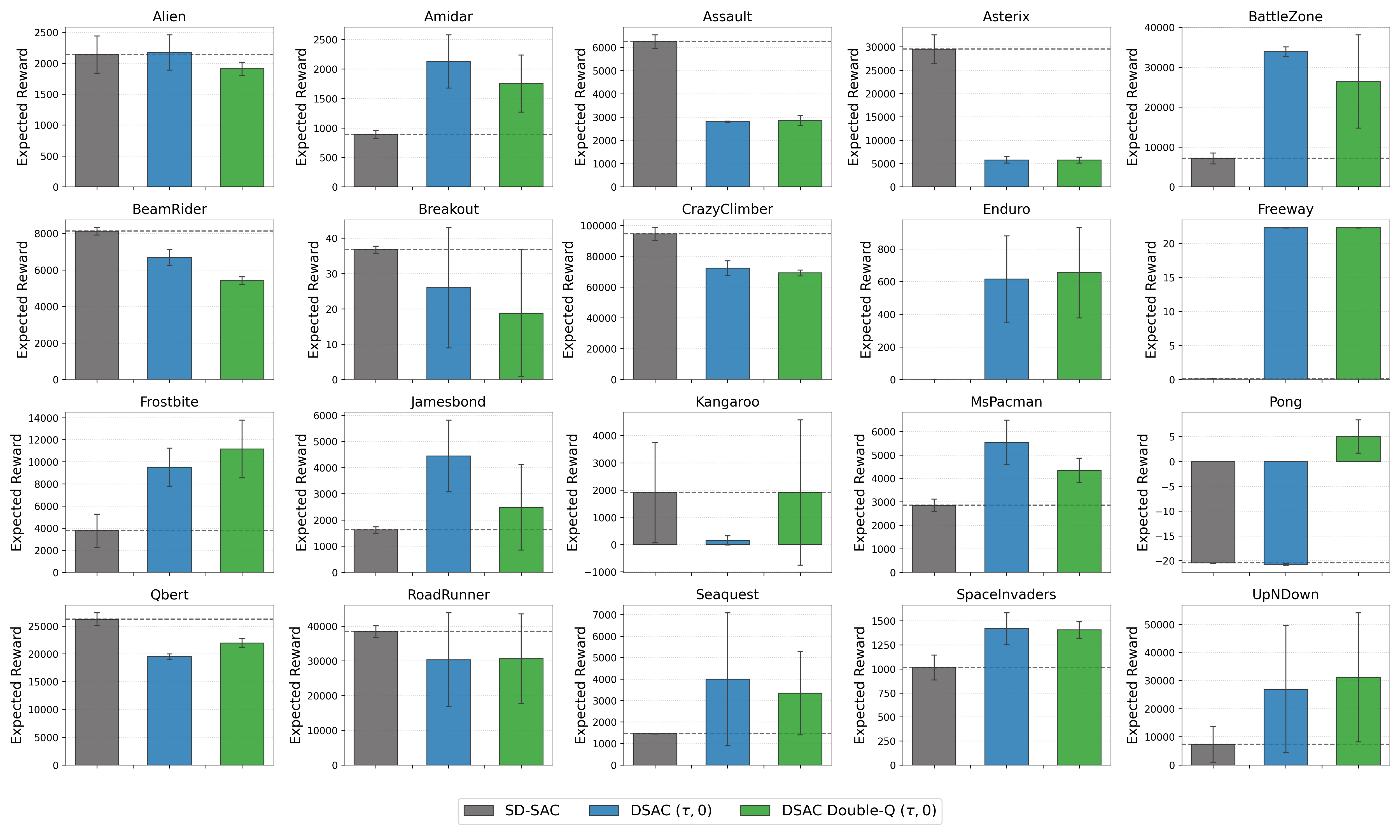}
\caption{Expected reward after 10M steps on each of the 20 Atari games, comparing our $\DSAC(\tau, 0)$ and $\DSAC$ Double-$Q$ $(\tau, 0)$ against $\texttt{SD-SAC}$. While the table above summarizes aggregate performance across games using IQM, this figure shows the per-game final rewards.}
\label{fig:dsac-double-q-ablation-m-1}
\end{figure*}

\clearpage
\subsection{\texorpdfstring{Is Fixed Target Entropy the Main Bottleneck in $\DSAC$?}{Is Fixed Target Entropy the Main Bottleneck in DSAC?}}
\label{app:adaptive-target-ent-results}

\begin{table}[!htbp]
\centering
\caption{IQM summary of final human-normalized scores with 95\% stratified bootstrap confidence intervals: prior work~\citep{xu2021target} hypothesizes that using a fixed target entropy in the adaptive entropy-coefficient scheme of~\citet{christodoulou2019soft} harms $\DSAC$, and proposes \texttt{TES-SAC}, which instead uses an adaptive target entropy. The results show that our $\DSAC(\tau, 0)$, despite using a fixed target entropy, substantially outperforms \texttt{TES-SAC}.}
\label{tab:normalized_score_adaptive_target_ent}
\begin{tabular}{lccc}
\toprule
Method & IQM & CI Low & CI High \\
\midrule
DSAC $(\tau,0)$ & \textbf{0.9226} & 0.8228 & 1.0288 \\
TES-SAC (Baseline) & 0.7210 & 0.6390 & 0.7965 \\
\bottomrule
\end{tabular}
\end{table}

\begin{figure*}[!ht]
\centering
\includegraphics[width=\textwidth]{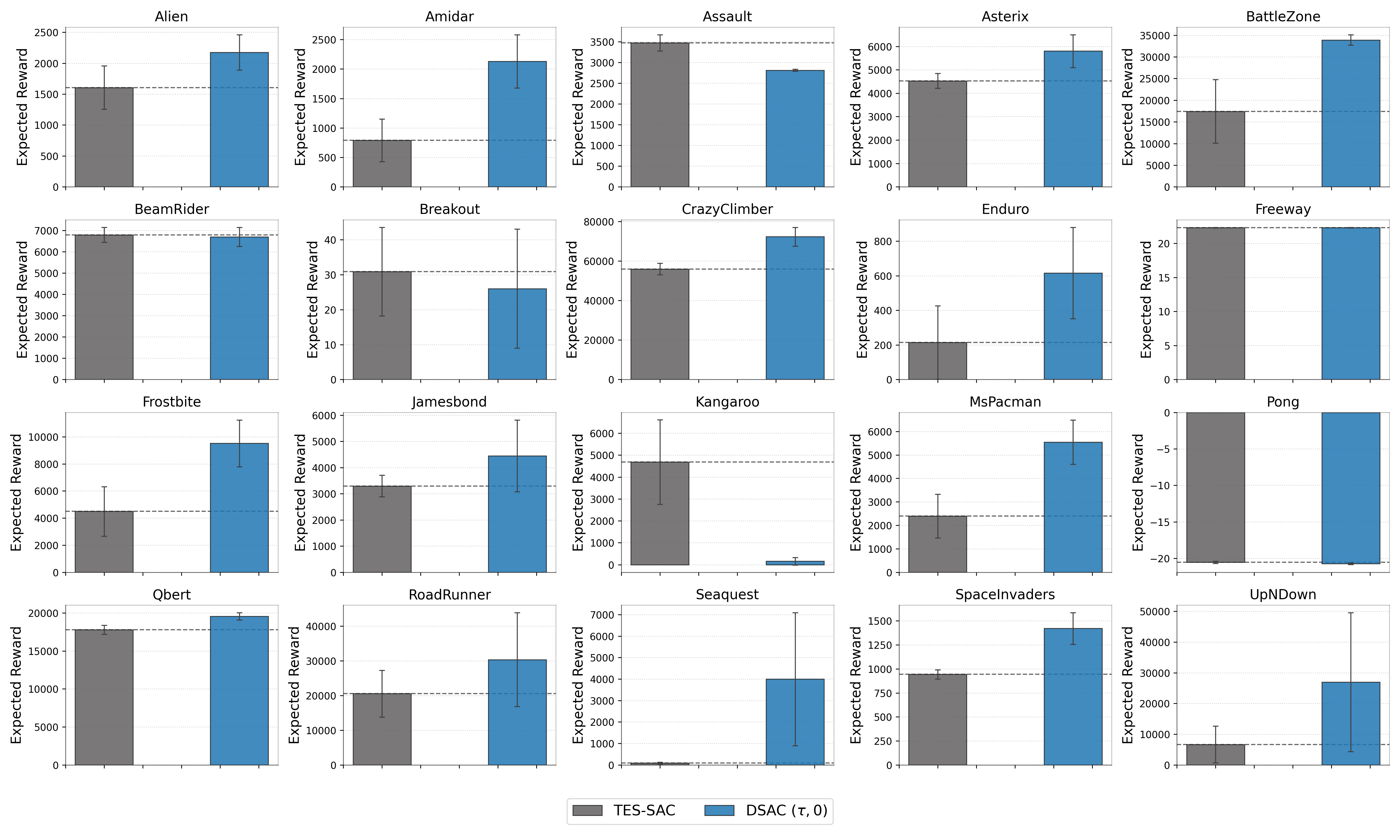}
\caption{Expected reward after 10M steps on each of the 20 Atari games, comparing our $\DSAC(\tau, 0)$ against $\texttt{TES-SAC}$. While the table above summarizes aggregate performance across games using IQM, this figure shows the per-game final rewards.}
\label{fig:dsac-adaptive-target-ent-ablation-m-1}
\end{figure*}


\clearpage
\subsection{\texorpdfstring{Evaluating our $\DSAC$ Variant on Hard Exploration Games}{Evaluating our DSAC Variant on Hard Exploration Games}}
\label{app-hard-exploration}
Table 1 in~\citep{taiga2021bonus} provides a taxonomy of Atari games and identifies hard exploration games, further dividing them into dense- and sparse-reward settings. We compare our $\DSAC(\tau, 0)$ with adaptive entropy against the baselines on 10 hard exploration games, comprising 5 dense-reward games (Alien, Amidar, Frostbite, MsPacman, Qbert) and 5 sparse-reward games (Freeway, Gravitar, Pitfall, Solaris, Venture). The results in~\cref{tab:normalized_score_hard_exploration_summary} show that, when evaluation is restricted to these hard exploration games, $\DSAC(\tau, 0)$ substantially outperforms the baselines. The per-game final expected rewards are provided in~\cref{fig:hard-exploration}.

\begin{table}[!htbp]
\centering
\caption{IQM summary of final human-normalized scores with 95\% stratified bootstrap confidence intervals: comparing $\DSAC(\tau,0)$, which uses adaptive actor entropy and disables critic entropy, against baselines on hard-exploration games with dense and sparse reward structures. Our $\DSAC$ variant achieves significantly higher overall performance than the baselines.}
\label{tab:normalized_score_hard_exploration_summary}
\begin{tabular}{lccc}
\toprule
Method & IQM & CI Low & CI High \\
\midrule
DSAC $(\tau,0)$            & \textbf{0.6951} & 0.6497 & 0.7413 \\
Adam LMCDQN (Baseline)     & 0.5862          & 0.5470 & 0.6299 \\
DQN (Baseline)             & 0.4742          & 0.4558 & 0.4958 \\
PPO (Baseline)             & 0.3225          & 0.2664 & 0.3825 \\
TES-SAC (Baseline)         & 0.3022          & 0.2497 & 0.3485 \\
SD-SAC (Baseline)          & 0.2053          & 0.1928 & 0.2188 \\
\bottomrule
\end{tabular}
\end{table}

\begin{figure*}[!ht]
\centering
\includegraphics[width=\textwidth]{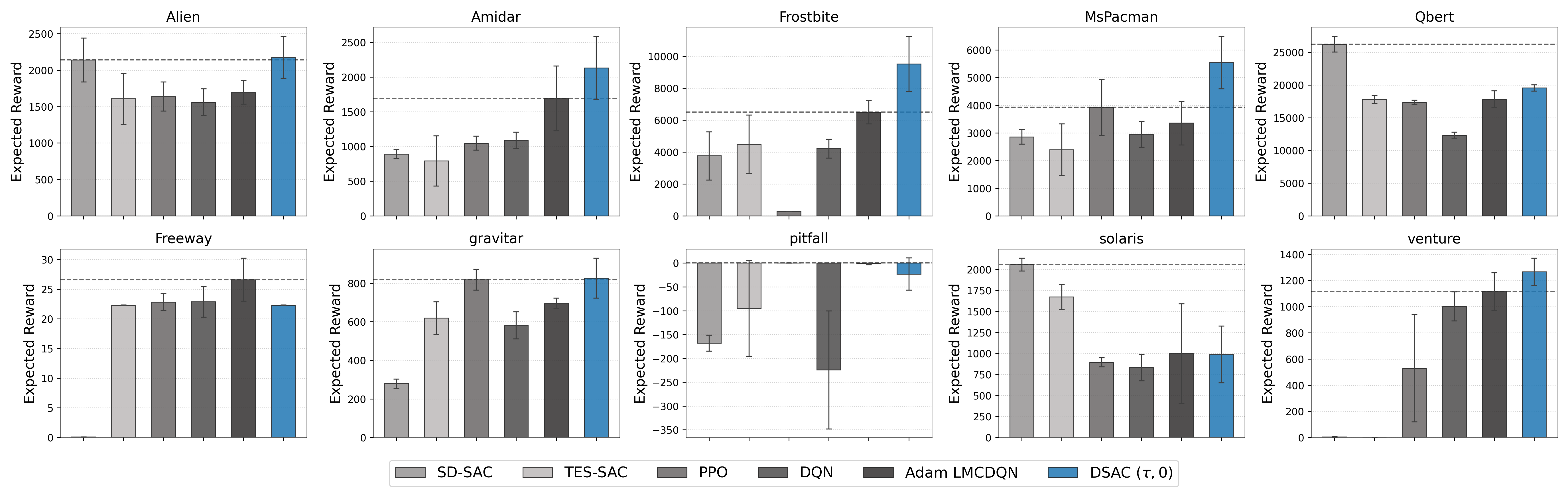}
\caption{Expected reward after 10M steps on each of the 10 hard exploration games, comparing our $\DSAC$ variant with adaptive actor entropy and no critic entropy against baselines (grey). The table above summarizes aggregate performance using IQM, while this figure shows per-game final performance.}
\label{fig:hard-exploration}
\end{figure*}

\clearpage
\subsection{\texorpdfstring{Evaluating our $\DSAC$ Variant on \texttt{stable-retro} games}{Evaluating our DSAC Variant on stable-retro games}}
\label{app:dsac-ablation-stable-retro}
In addition to the standard Atari benchmark, we test our hypothesis on the role of critic entropy in $\DSAC$ using \texttt{stable-retro} games~\citep{stable-retro} with diverse reward structures and exploration levels. Importantly, \texttt{stable-retro} has a substantially larger action space than Atari, ranging from 18--468 actions compared to 4--18. The results in~\cref{fig:dsac-stable-retro-ablations} show that our $\DSAC$ variant, which disables critic entropy while retaining automatic entropy tuning for the actor, yields substantial performance gains.

\begin{figure*}[!ht]
\centering
\includegraphics[width=0.8\textwidth]{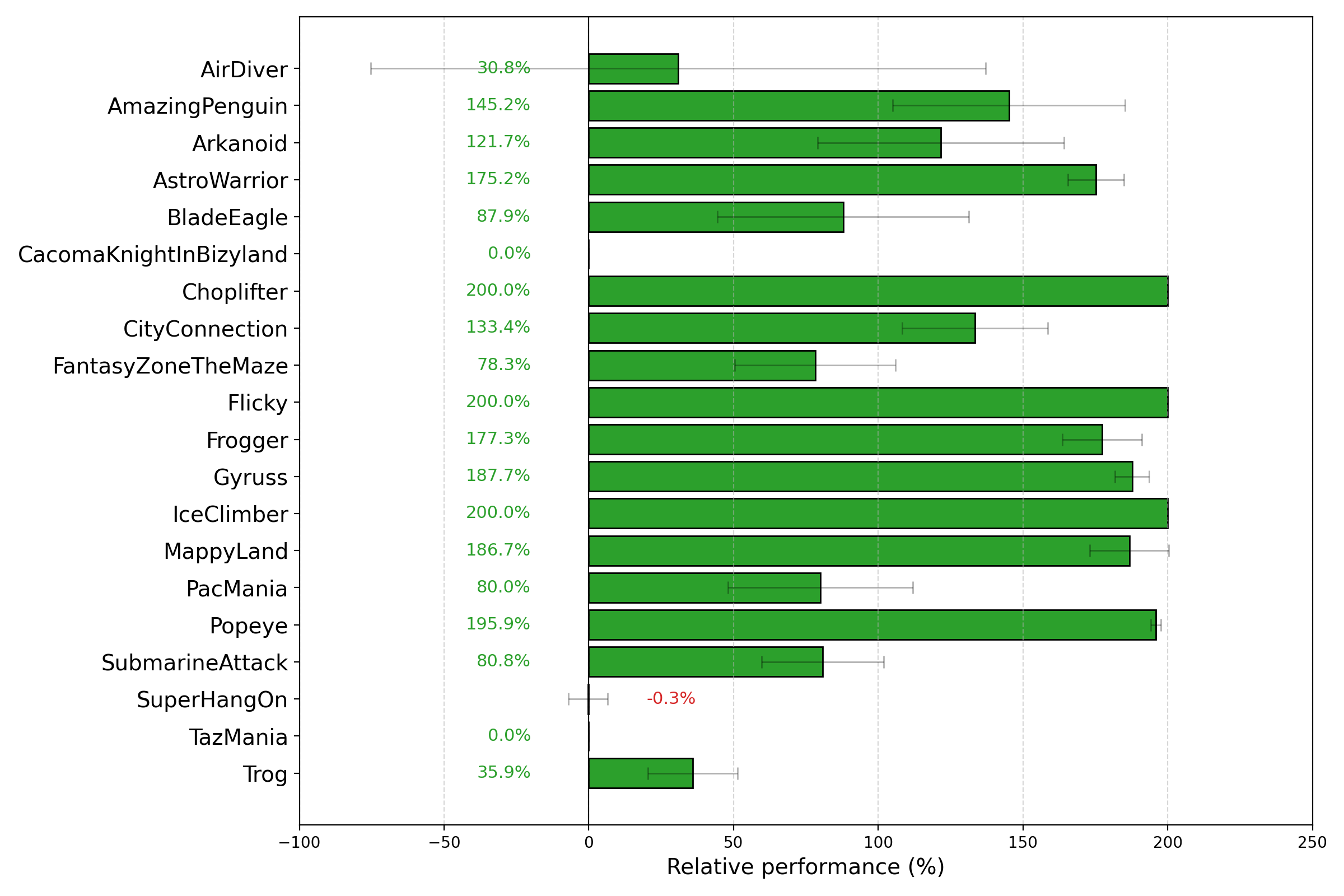}
\caption{Relative performance of $\DSAC(\tau,0)$, which uses adaptive actor entropy and disables critic entropy (our contribution), compared to the default $\DSAC$ across 20 \texttt{stable-retro} games. Results are averaged over 5 seeds with 95\% confidence intervals. Consistent with the Atari results, removing critic entropy yields substantial performance gains.}
\label{fig:dsac-stable-retro-ablations}
\end{figure*}

\clearpage
\subsection{Our Objectives Benefit Similarly from the Hard Bellman Operator}
\label{app:soft-vs-hard-bellman-results}
When using the forward and reverse KL-based objectives in our framework, we consistently observe the same trend as with $\DSAC$: As observed in~\cref{fig:all-objectives-with-without-critic-ent-20-games-m-1}, employing the hard Bellman operator ($\zeta=0$) with an adaptive entropy coefficient leads to substantial performance improvement compared to using the soft Bellman operator ($\zeta=\tau$). 

\begin{figure*}[!ht]
\centering
\includegraphics[width=0.85\textwidth]{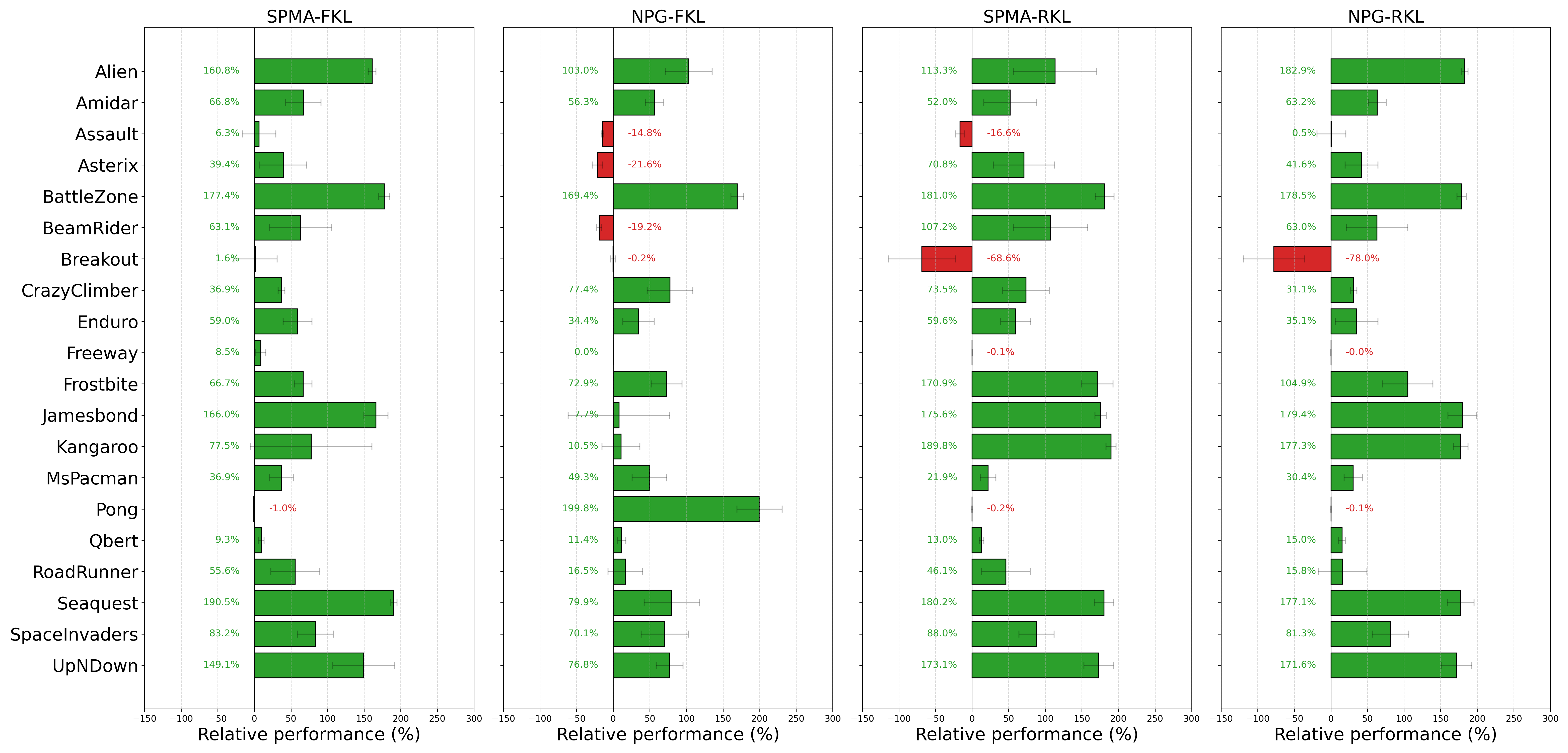}
\caption{Relative performance of forward and reverse KL-based objectives when critic entropy is disabled, measured against the same objectives with critic entropy enabled. Disabling critic entropy while retaining the adaptive entropy coefficient loss, as in $\DSAC$, leads to performance improvements for all objectives.}
\label{fig:all-objectives-with-without-critic-ent-20-games-m-1}
\end{figure*}

While choosing $\zeta=0$ is practical and avoids per-game hyperparameter search, we also consider an intermediate variant of our $\DSAC$ in which $\zeta$ is a nonzero constant selected from \{0.001, 0.01, 0.1, 0.5\}. The results in~\cref{tab:normalized_score_const_zeta_ablation} indicate that tuning a constant nonzero $\zeta$ can improve performance, but only at the cost of per-game hyperparameter search. Examining the distribution of the best $\zeta$ values across games, we find that the optimal choice varies by game, with the highest concentration at 0.01 and 0.001 (~\cref{fig:zeta-dist}). We therefore recommend using $\zeta=0$, which avoids per-game hyperparameter search.

\begin{table}[!htbp]
\centering
\caption{IQM summary of final human-normalized scores with 95\% stratified bootstrap confidence intervals: comparing $\DSAC(\tau, 0)$ (adaptive actor entropy and critic entropy disabled) vs. $\DSAC(\tau, \text{const-}\zeta)$ (adaptive actor entropy and constant non-zero critic entropy).}
\label{tab:normalized_score_const_zeta_ablation}
\begin{tabular}{lccc}
\toprule
Method & IQM & CI Low & CI High \\
\midrule
DSAC $(\tau, \text{const-}\zeta)$ & \textbf{0.9495} & 0.8318 & 1.0707 \\
DSAC $(\tau,0)$ & 0.9226 & 0.8228 & 1.0288 \\
\bottomrule
\end{tabular}
\end{table}

\begin{figure*}[!ht]
\centering
\includegraphics[width=0.4\textwidth]{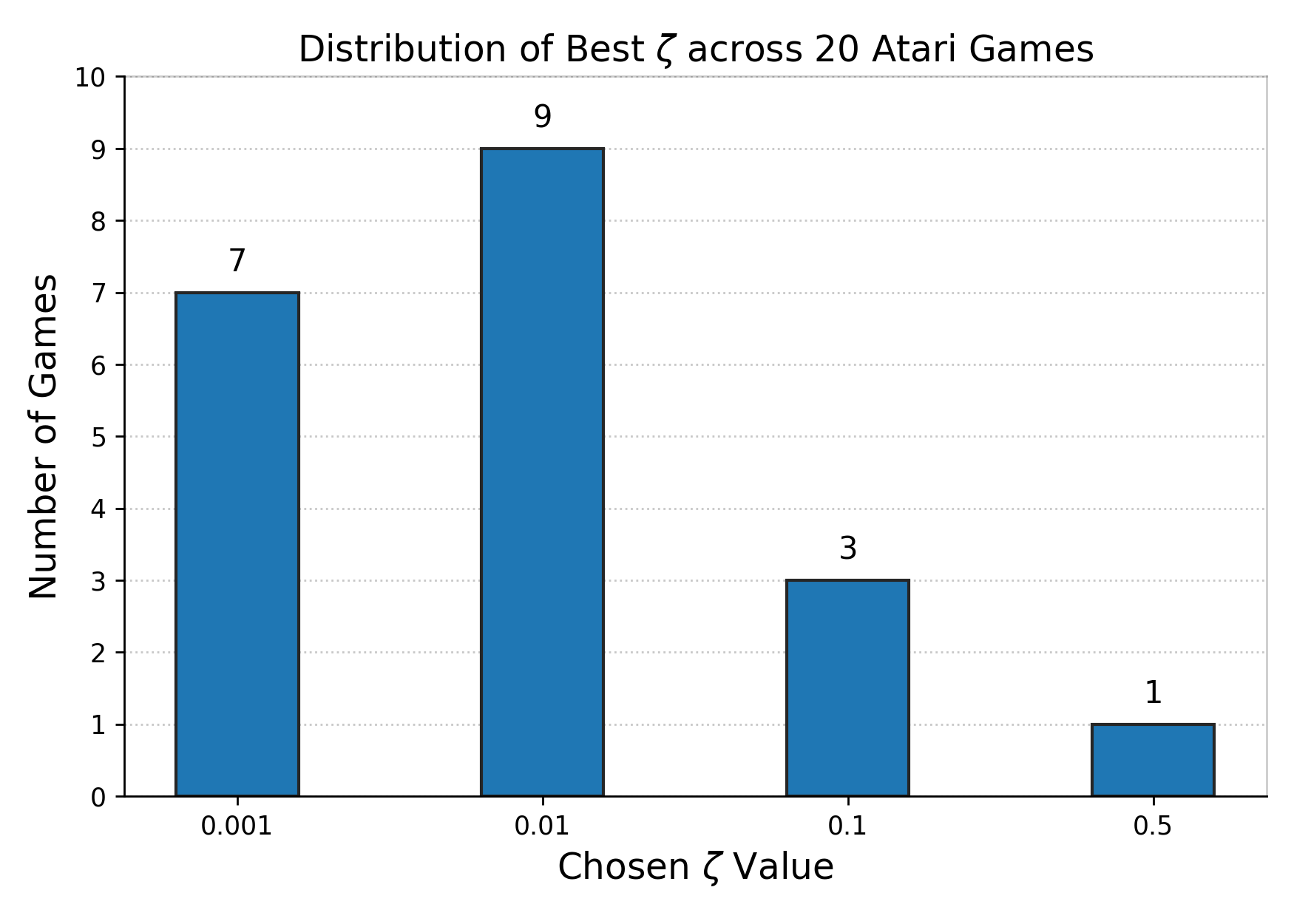}
\caption{Distribution of the best nonzero constant $\zeta$ across games for $\DSAC(\tau,\text{const-}\zeta)$.}
\label{fig:zeta-dist}
\end{figure*}

\clearpage
\subsection{Additional Results: Is Entropy Regularization Necessary?}
\label{app:ent-reg-kl-ablation}

\begin{figure*}[!ht]
\centering
\includegraphics[width=\textwidth]{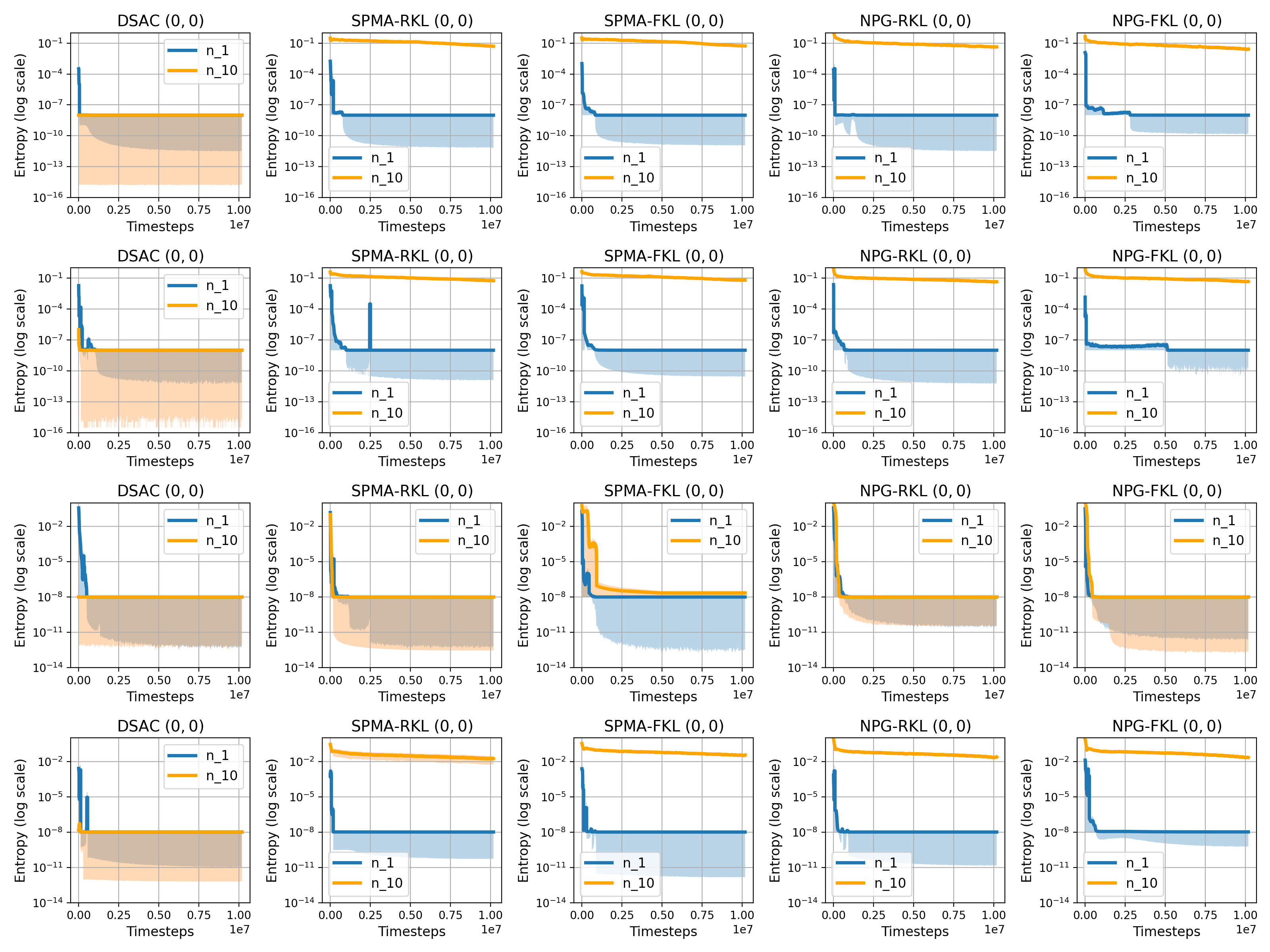}
\caption{\textbf{Policy Shannon entropy during training.} Without entropy regularization, increasing $n$ from 1 to 10 substantially increases and stabilizes the Shannon entropy for our forward and reverse KL-based methods, compared to $\DSAC$, across four Atari games. The rows correspond to MsPacman-v4, Seaquest-v4, Freeway-v4, and Breakout-v4.}
\label{fig:all-objectives-no-ent-4-games-ent-metric-m-1-10}
\end{figure*}

\begin{figure*}[!ht]
\centering
\includegraphics[width=\textwidth]{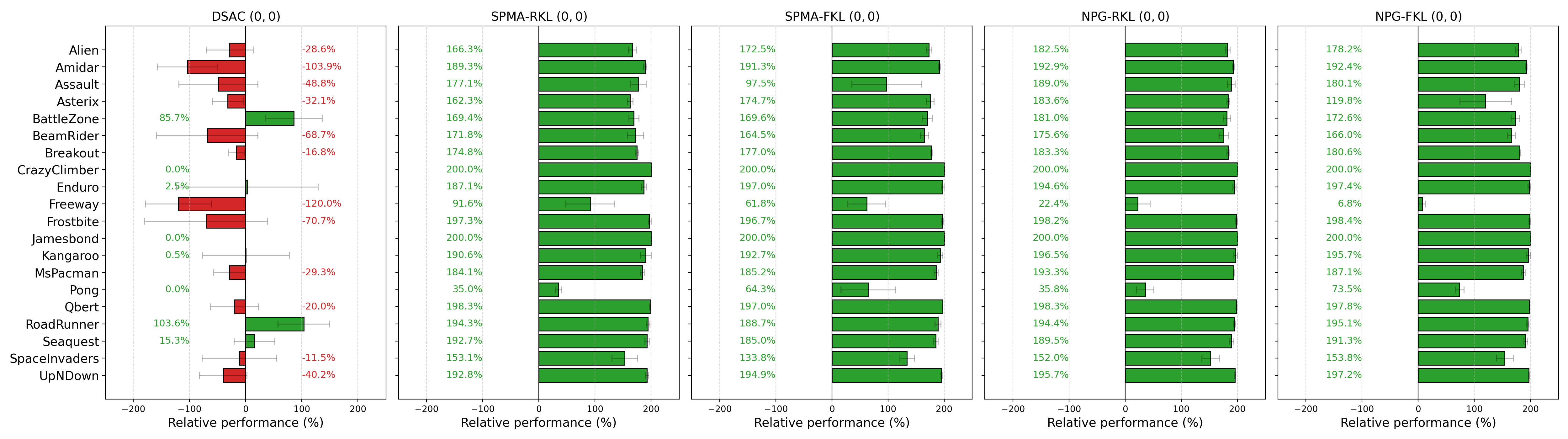}
\caption{Effect of the number of actor optimization steps $n$ without entropy regularization. Unlike $\DSAC$, the performance of all objectives derived from our framework significantly boosts when $n$ is increased from 1 to 10.}
\label{fig:all-objectives-no-ent-20-games-m-1}
\end{figure*}

\begin{figure*}[!ht]
\centering
\includegraphics[width=\textwidth]{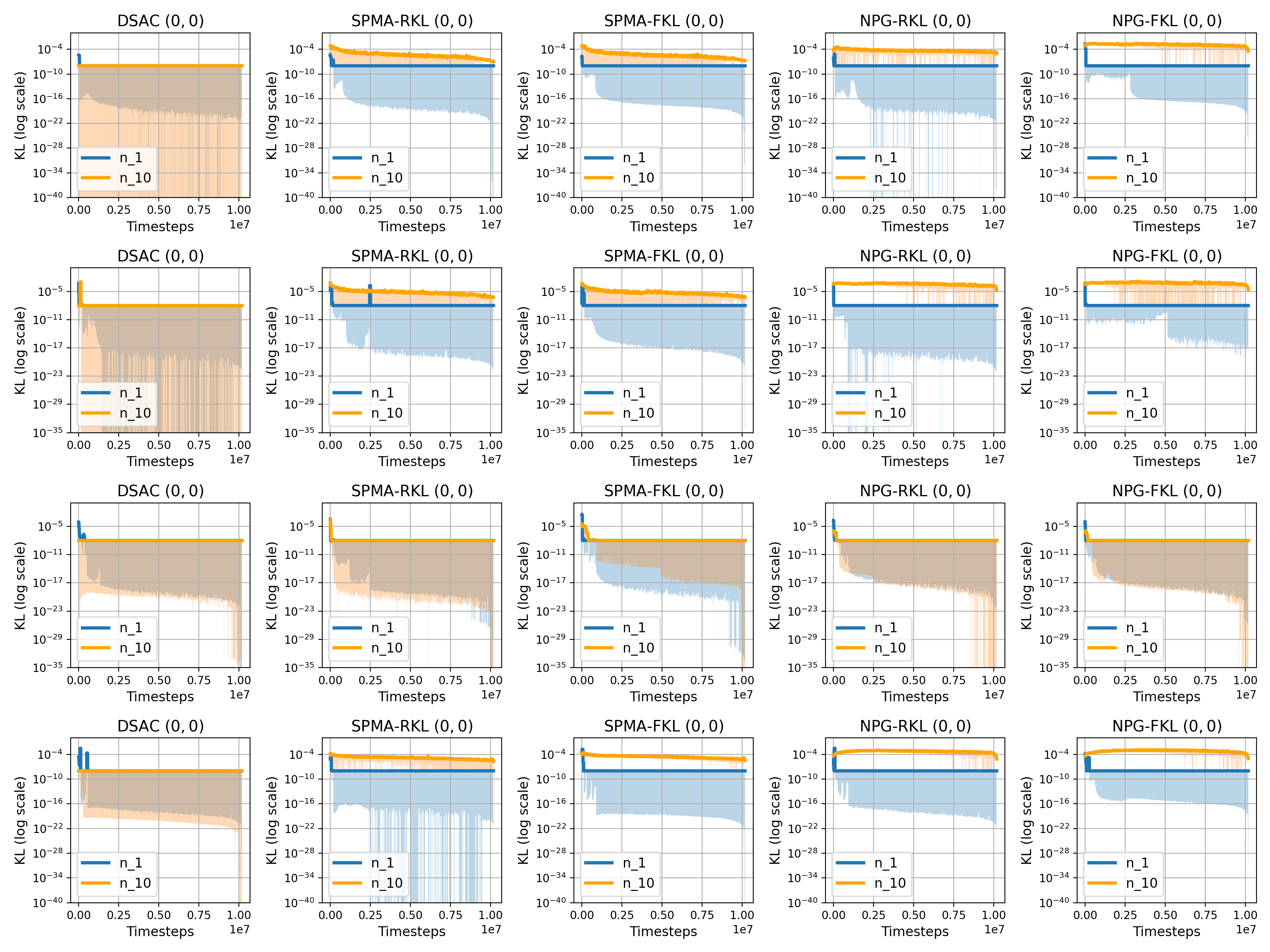}
\caption{\textbf{KL divergence during training.} Consistent with the Shannon entropy results, in the absence of entropy regularization, increasing $n$ to 10 allows our forward- and reverse-KL-based actors to achieve higher KL divergence than $\DSAC$, which also stabilizes over the course of training. The rows correspond to MsPacman-v4, Seaquest-v4, Freeway-v4, and Breakout-v4.}
\label{fig:all-objectives-no-ent-4-games-kl-metric-m-1-10}
\end{figure*}

\begin{figure*}[!ht]
\centering
\includegraphics[width=\textwidth]{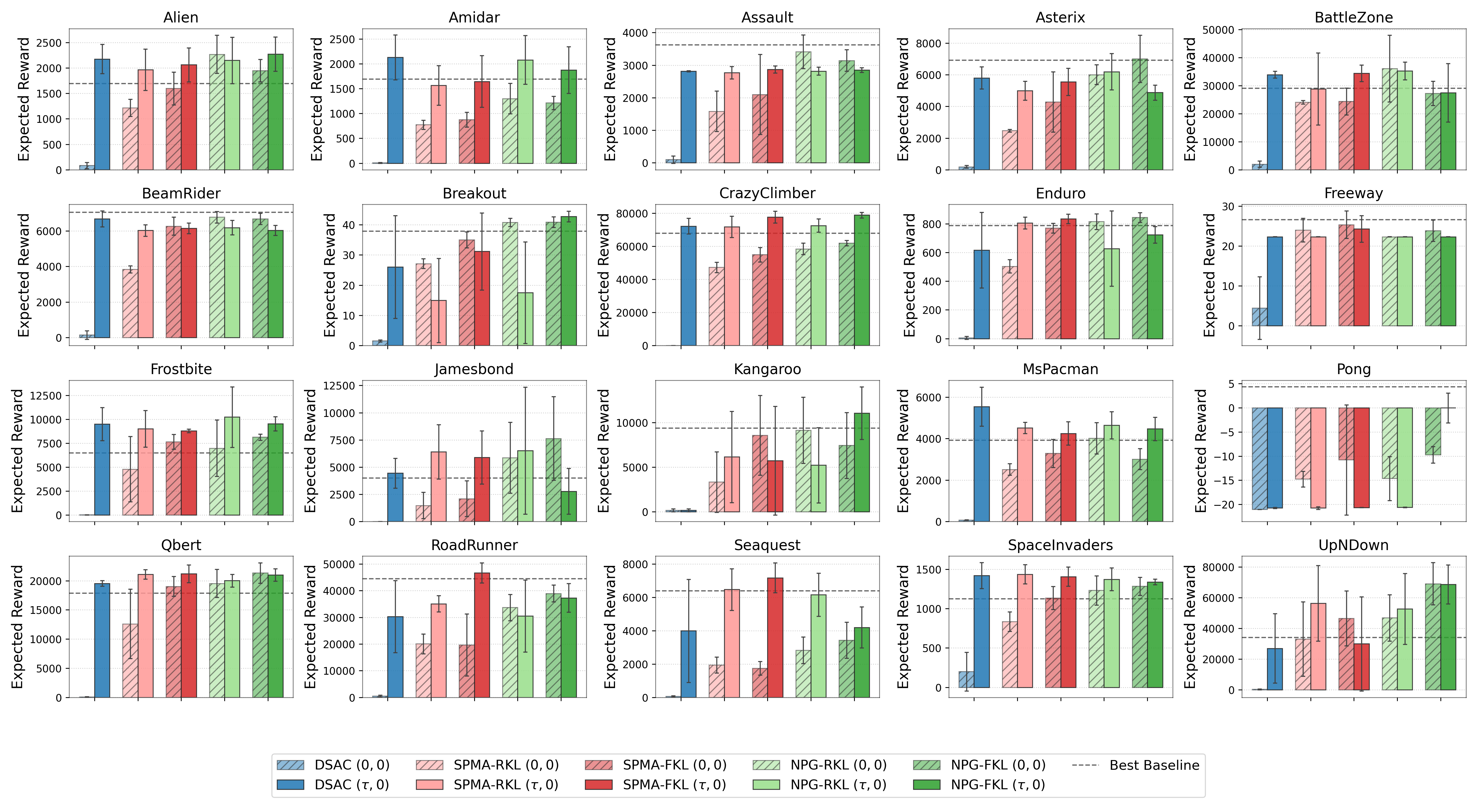}
\caption{\textbf{Entropy regularization ablation: disabling entropy.} Expected reward after 10M steps. Unlike $\DSAC$, objectives from our framework, including the novel $\SPMARKL$ and $\NPGFKL$, remain robust with both actor and critic entropy disabled (dashed bars); solid bars denote the actor-entropy-enabled setting $(\tau \ne 0)$. The dashed line denotes the best baseline among $\PPO$, $\DQN$, and $\LMC$.}
\label{fig:spma-vs-npg-vs-baselines-no-ent-m-10-12-games}
\end{figure*}





\clearpage
\subsection{Expected reward after 10M steps}
\label{app:20-games-final-performance}

\begin{figure*}[!ht]
\centering
\includegraphics[width=\textwidth]{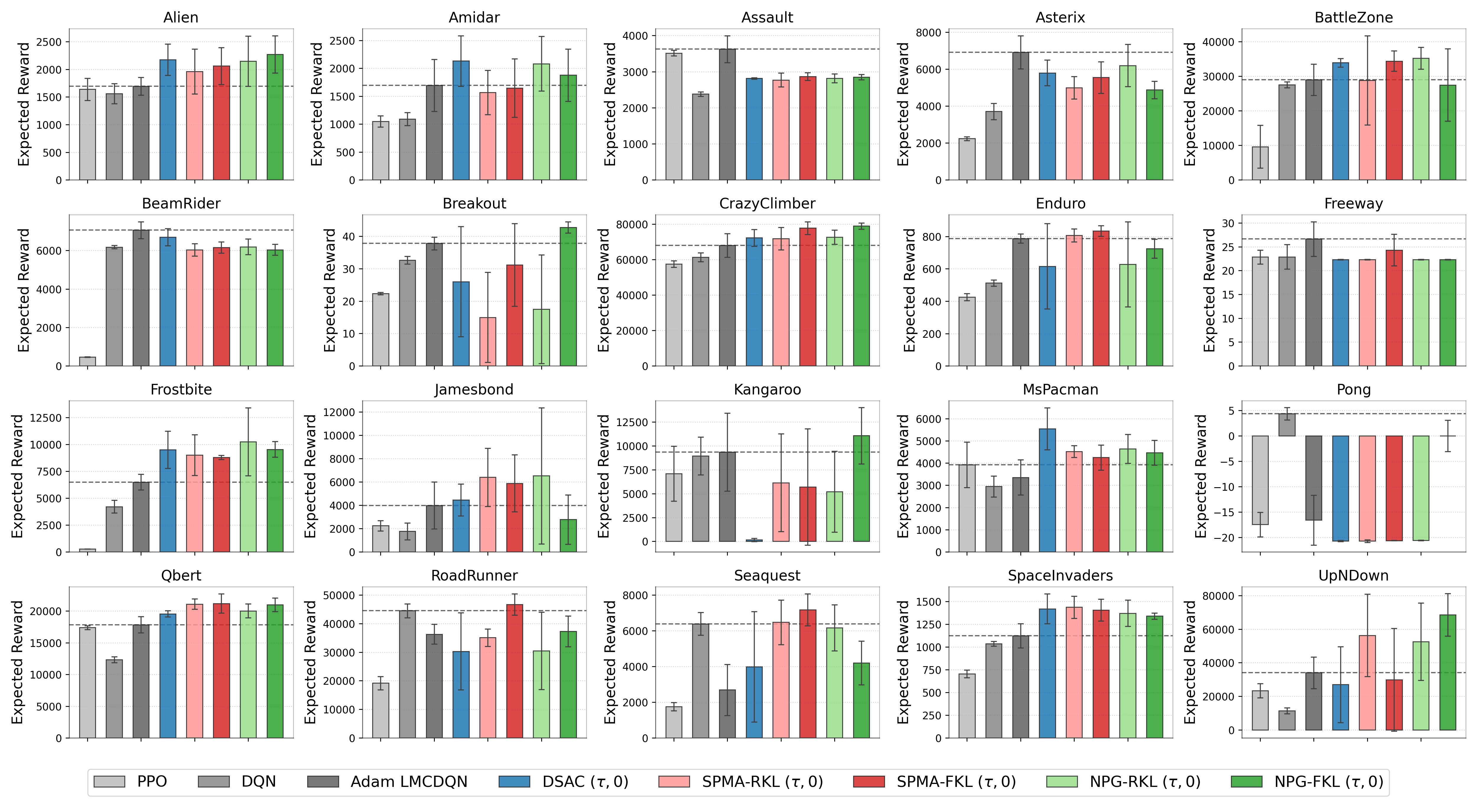}
\caption{\textbf{Expected reward after 10M steps}. While~\cref{tab:normalized_score_summary} reports aggregate performance using IQM, this figure shows the corresponding per-game final rewards. Objectives from our proposed framework (colored) use adaptive actor entropy ($\tau \neq 0$) and no critic entropy ($\zeta = 0$); baselines are shown in grey.}
\label{fig:lmc-short-12}
\end{figure*}

\clearpage
\subsection{\texorpdfstring{Comparison to Baselines: Full Horizon Training Curves Across 20 Games}{Comparison to Baselines: Full Horizon Training Curves Across 20 Games}}
\label{app:full-horizon-curves}

\begin{figure*}[!ht]
\centering
\includegraphics[width=\textwidth]{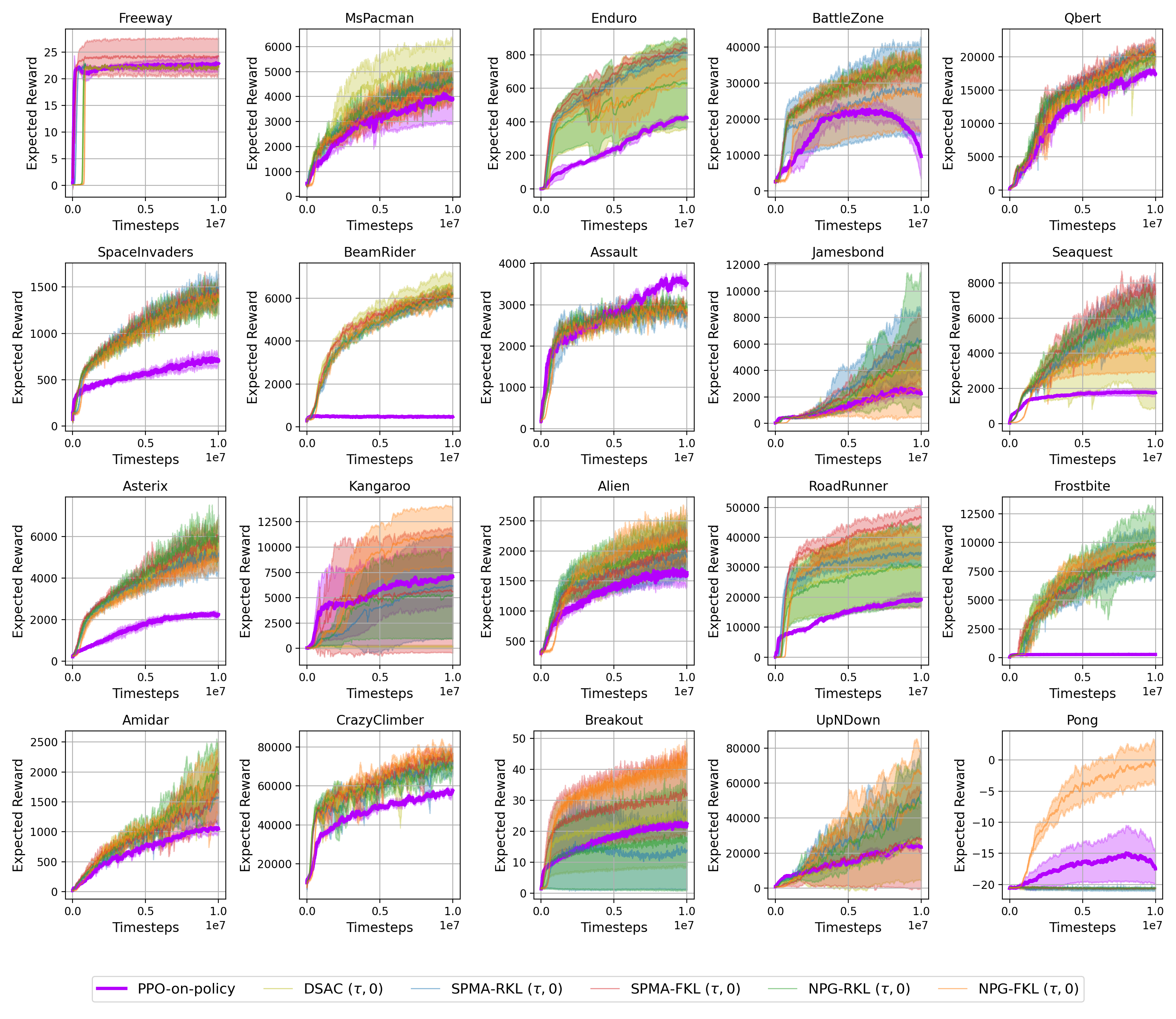}
\caption{On-policy $\PPO$ compared to objectives presented in this paper over the full training horizon. Overall, $\PPO$ underperforms.}
\label{fig:ppo-long}
\end{figure*}

\clearpage
\begin{figure*}[!ht]
\centering
\includegraphics[width=\textwidth]{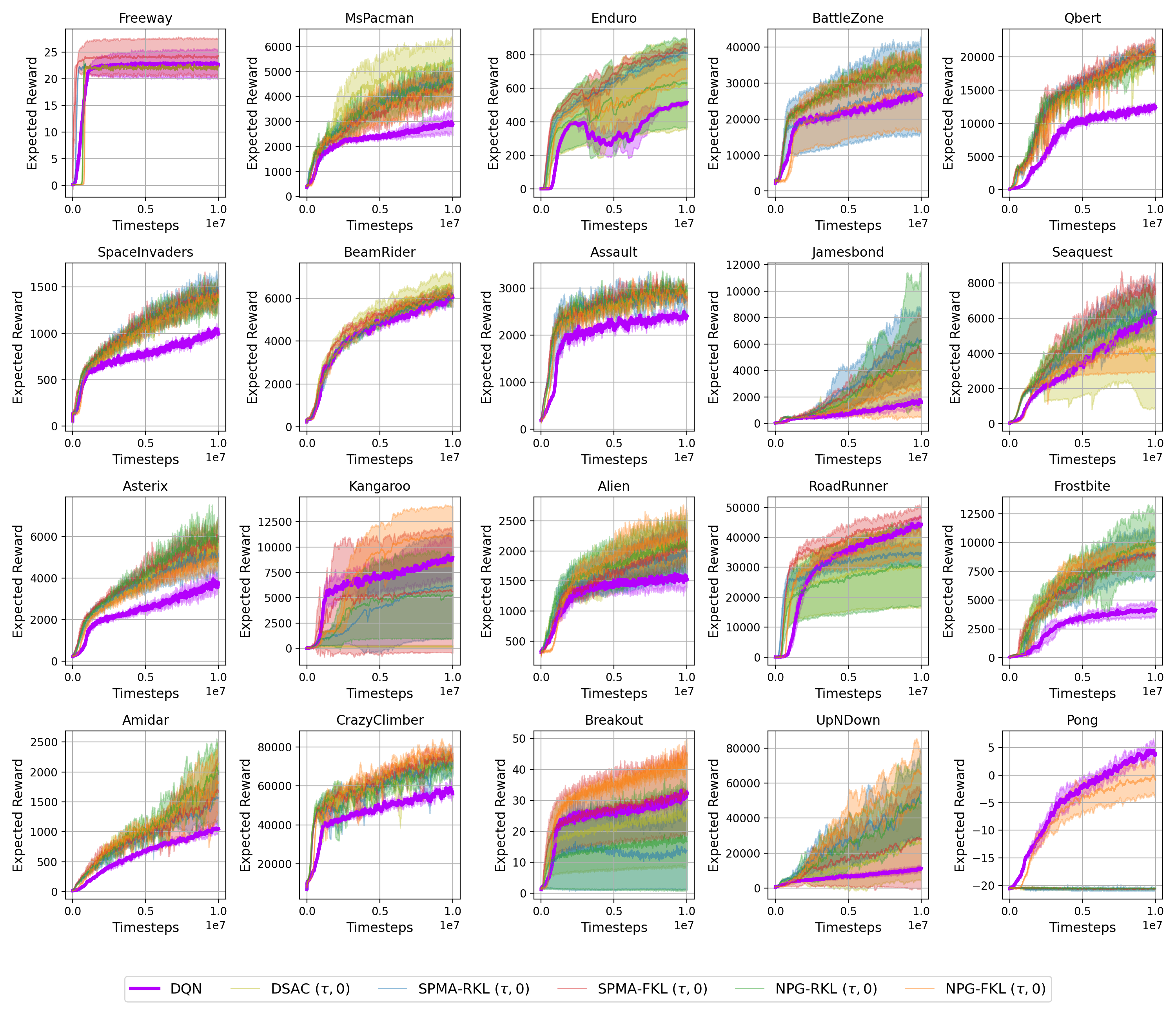}
\caption{$\DQN$ compared to objectives presented in this paper over the full training horizon. Overall, $\DQN$ underperforms on most games.}
\label{fig:dqn-long}
\end{figure*}

\clearpage
\begin{figure*}[!ht]
\centering
\includegraphics[width=\textwidth]{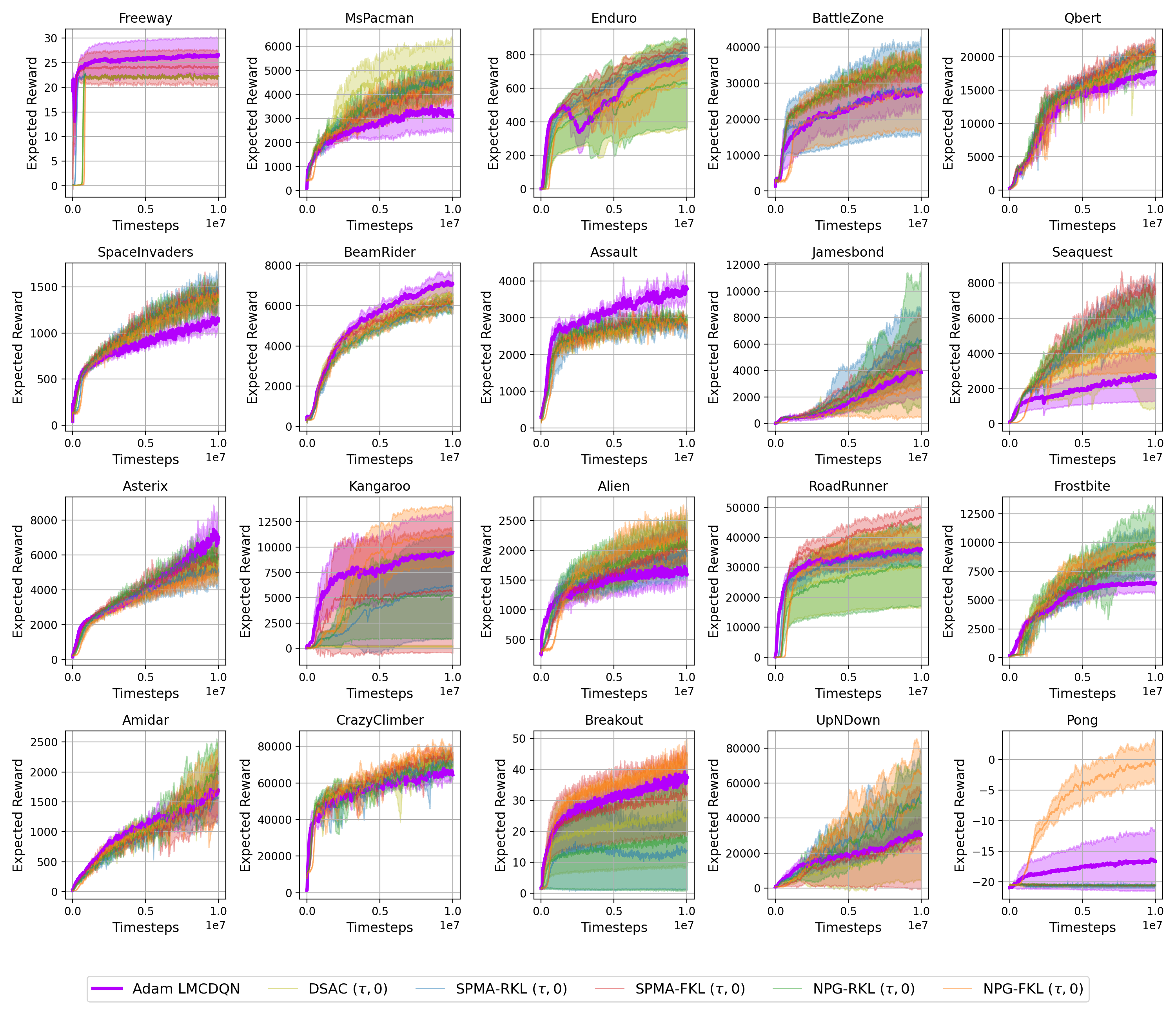}
\caption{Comparing $\LMC$ with all off-policy objectives presented in this paper over the full training horizon. Overall, the objectives from our proposed framework exhibit performance comparable to $\LMC$.}
\label{fig:lmc-long}
\end{figure*}

\clearpage
\subsection{\texorpdfstring{Entropy Collapse of Default $\DSAC$}{Entropy Collapse of Default DSAC}}
When ablating the default $\DSAC$ on four Atari games in~\cref{sec:experiments} (\cref{fig:dsac-ablation-m-1}), we find that it performs well on some games but fails on others (Alien and BeamRider). The results in~\cref{fig:dsac-ablation-entropy-collapse} show that poor performance correlates with a collapse in policy entropy (columns 1 and 3, orange curve). Prior work~\citep{xu2021target} attributes this collapse to the fixed target entropy in $\DSAC$, and our observations support this: reducing the learning rate for the entropy coefficient loss improves performance (blue curve). Nevertheless, removing critic entropy eliminates this issue, without requiring additional tuning of the entropy coefficient loss and results in strong performance.     
\begin{figure*}[!ht]
\centering
\includegraphics[width=\textwidth]{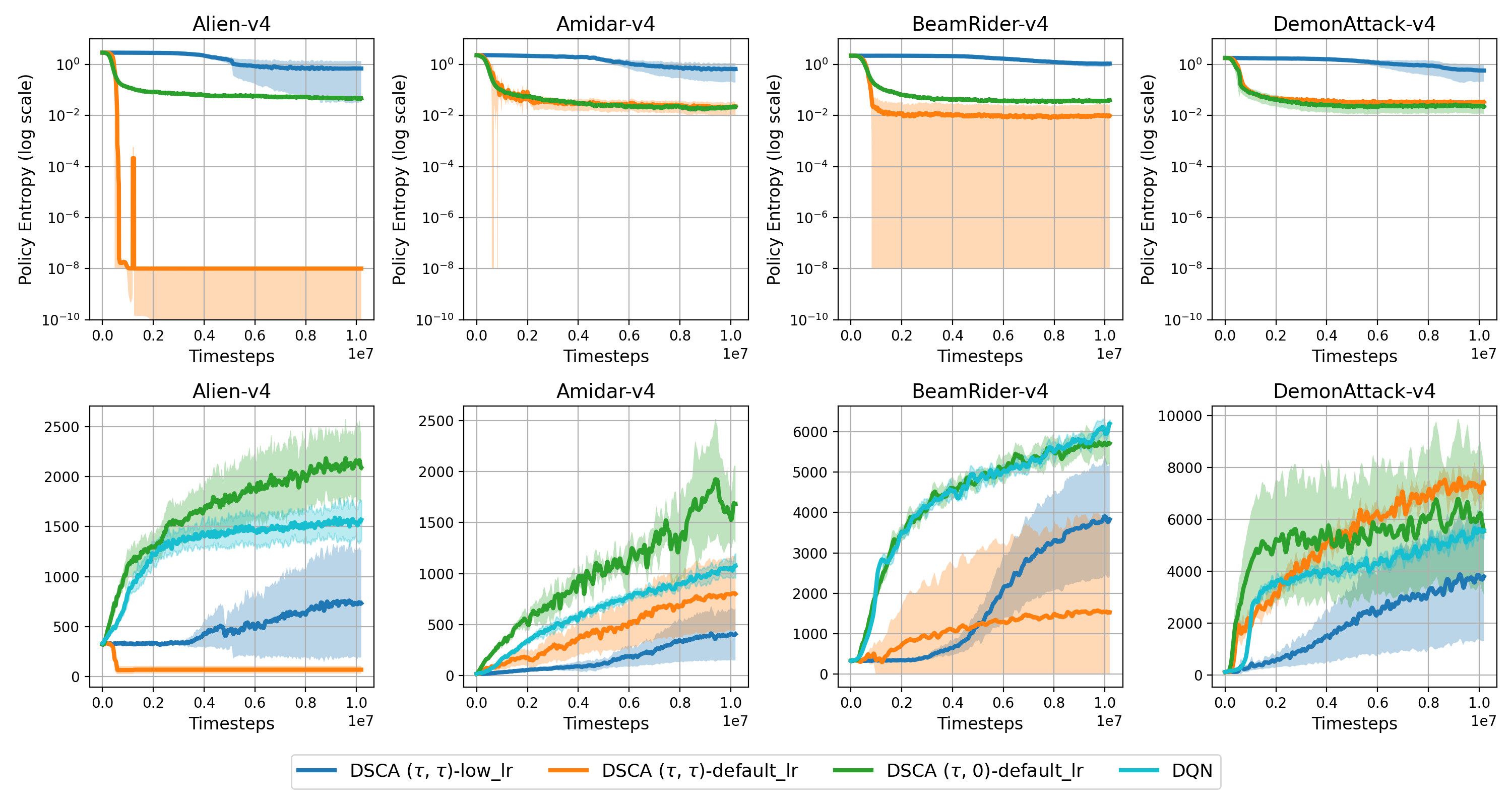}
\caption{The poor performance of default $\DSAC$ (orange curve) correlates with a collapse in policy entropy (columns 1 and 3). Although lowering the learning rate for the entropy coefficient loss improves performance, removing the critic entropy term resolves the issue without additional hyperparameter tuning.}
\label{fig:dsac-ablation-entropy-collapse}
\end{figure*}

\end{document}